\theoremstyle{plain}
\newtheorem{theorem}{Theorem}
\newtheorem{lemma}[theorem]{Lemma}
\newtheorem{proposition}[theorem]{Proposition}
\theoremstyle{definition}
\newtheorem{definition}{Definition}
\theoremstyle{remark}
\newtheorem{remark}{Remark}
\title{A group-theoretic framework for machine learning in hyperbolic spaces}
\author{Vladimir Ja\' cimovi\' c \\
	Faculty of Natural Sciences and Mathematics\\
	University of Montenegro\\
	Cetinjski put bb., 81000 Podgorica\\
	Montenegro\\
	\texttt{vladimirj@ucg.ac.me} \\
}
\begin{document}
\maketitle

\begin{abstract}
Embedding the data in hyperbolic spaces can preserve complex relationships in very few dimensions, thus enabling compact models and improving efficiency of machine learning (ML) algorithms. The underlying idea is that hyperbolic representations can prevent the loss of important structural information for certain ubiquitous types of data. However, further advances in hyperbolic ML require more principled mathematical approaches and adequate geometric methods. The present study aims at enhancing mathematical foundations of hyperbolic ML by combining group-theoretic and conformal-geometric arguments with optimization and statistical techniques.
Precisely, we introduce the notion of the mean (barycenter) and the novel family of probability distributions on hyperbolic balls. We further propose efficient optimization algorithms for computation of the barycenter and for maximum likelihood estimation. One can build upon basic concepts presented here in order to design more demanding algorithms and implement hyperbolic deep learning pipelines.
\end{abstract}

\keywords{Barycenter \and statistical model \and hyperbolic gradient \and maximum likelihood \and conformal invariance}

\section{Introduction}\label{sec:1}
The idea of learning representations in hyperbolic spaces has rapidly gained prominence in the last decade, attracting a lot of attention and motivating extensive investigations. This rise of interest was partly launched by statistical-physical studies \cite{KPKVB} which have shown that distinctive properties of complex networks are naturally preserved in negatively curved continuous spaces. Since complex networks are ubiquitous in modern science and everyday life, this relation with hyperbolic geometry provided a valuable hint for low-dimensional representations of hierarchical data \cite{MTCBC}.

More generally, structural information of any hierarchical data set may be better represented in negatively curved manifolds rather than in flat ones. This further implies that hyperbolic geometry provides a suitable framework for simultaneous learning of hypernymies, similarities and analogies. This hypothesis triggered the interest of many data scientists and machine learning (ML) researchers in hyperbolic geometry. Nowadays, hyperbolic ML is a rapidly developing young subdiscipline within the broader field of {\it geometric deep learning} \cite{BBCV}.

Recent advances in hyperbolic representations of data open exciting perspectives and horizons for the near future. Encouraging results have been reported in natural language processing (NLP) \cite{TBG,NK2,LW}, computer vision \cite{MAK-RGY,KMUOL}, molecular interactions \cite{Poleksic,CYPVLL}, recommender systems \cite{CHWDDV,LYHCXW}, knowledge graphs \cite{CWJ} and brain research \cite{Baker,AS,JFB}. In addition, some researchers devised a general hyperbolic-geometric approach to reinforcement learning (RL), due to the hierarchical nature of Markov decision processes \cite{CCBH}.

Despite encouraging results, benefits and the scope of hyperbolic ML are still questionable. First, the data never explicitly appear as points in hyperbolic spaces (with the possible exception of high-energy physics, due to the hyperbolic geometry of spacetime \cite{BAORMK}). This fact imposes the necessity of demanding embedding algorithms as a precondition for hyperbolic ML. Second, traditional ML architectures strongly rely on inherently Euclidean techniques (such as addition of vectors, multiplying vectors by scalars, scalar product and averaging) which are not available in hyperbolic spaces. Therefore, hyperbolic ML requires modification of traditional architectures or development of new models \cite{Gulcehre,GBH,TNN,CW2}. Third, it is still unclear for which kinds of data and problems hyperbolic representations provide an overall advantage over the Euclidean ones.\footnote{Notice that the situation with spherical ML is considerably simpler, as spherical geometry of certain data sets is apparent. For instance, when learning orientations or rotations in the space, spherical representations are obvious choice. Another example of spherical data are geospatial earth observations.} While intuitive and theoretical arguments in favor or against hyperbolic ML may be discussed, the definitive disposition about its significance is to be provided by experimental results.

On the other hand, experimental results strongly depend on specific implementations and the underlying mathematical techniques. Learning low-dimensional representations is one of central issues in modern ML. This issue underlies the idea of latent space. ML architectures with latent spaces provide the framework for validation of the idea of hyperbolic ML. Indeed, training models with latent spaces boils down to learning low-dimensional data representations using optimization algorithms, such as Bayesian or variational inference. Endowing the latent space with the negative curvature seems like a potentially advantageous approach whenever dealing with hierarchical data.\footnote{Geometry of the latent space is one of the key conceptual problems in ML \cite{AHH,WHPD}. We refer to \cite{SAC} for a nice overview of this topic. Experiments with normalizing flows in hyperbolic latent spaces have been reported in \cite{BSLPH}.} In particular, one of the most significant benefits expected from hyperbolic representations is great reduction of the dimensionalty, allowing for drastically more compact models. Hyperbolic latent spaces with a very low dimension (up to ten) might be sufficient in very demanding tasks with high-dimensional data \cite{NK1}. This idea motivated experiments with variational auto-encoders \cite{MLMTT}, bio-encoders \cite{CYPVLL}, collaborative filtering \cite{Poleksic,LYHCXW} and other architectures. Strictly speaking, architectures with variational (probabilistic) latent spaces (such as variational auto-encoders or probabilistic matrix factorization algorithms) impose non-Euclidean setup in any case. Indeed, such models map the data into probability distributions and spaces of probability distributions are naturally endowed with the Fisher information metric which imposes non-zero curvature.\footnote{For instance, standard choice for the latent space is the family of Gaussian distributions with diagonal covariance matrix. However, the family of univariate Gaussian distributions endowed with the Fisher information metric is isomorphic to the hyperbolic disc \cite{CSS}. Hence, from the information-geometric point of view, latent spaces consisting of Gaussians with diagonal covariance matrix are hyperbolic multidiscs.} This implies that training any model with variational latent space is an optimization problem over a Riemannian manifold. Therefore, training algorithms should perform an update along the natural gradient rather than along Euclidean ("vanilla") gradient.

The above discussion brings us to the central question: {\it Does hyperbolic ML leverage an appropriate mathematical framework in its practices and implementations?} Indeed, it is possible that the full potential of the idea remains uncovered because mathematical notions and techniques are not completely adequate. So far, most experiments in hyperbolic ML exploited gyrovector spaces \cite{Ungar}, exponential map and projected stochastic gradient descent. In addition, statistical modeling was based on the family of "wrapped normal distribution on hyperbolic disc" \cite{Poleksic,NYFK,CLPK}. We refer to \cite{PVMSZ} for an overview of mathematical techniques used in hyperbolic ML. On the other hand, complex analysis and conformal geometry are very infrequently used in ML despite the fact that they provide the natural framework for investigations of hyperbolic spaces.\footnote{We point out some recent studies \cite{AK-RM,CGNR} employing the Buseman function for representation learning, thus bringing some conformal-geometric concepts into ML.} For instance, hyperbolic balls are homogeneous spaces with symmetry groups which are isomorphic to the Lorentz groups. Optimizing in hyperbolic balls boils down to learning actions of these groups. In general, learning actions/generators of Lie groups is a parallel important direction in Geometric DL \cite{CW1,Barbaresco}.

The main goal of the present paper is to enhance theoretical foundations of hyperbolic ML, by combining group-theoretic and geometric tools with optimization and statistics.



For the further exposition it is important to underline that any discussion on data embeddings and ML algorithms implies the notion of distance. In general, one can embed the data and perform ML in any metric space $(X,d)$. Nevertheless, in order to design meaningful and efficient models, certain prerequisites should be ensured in the first place. We highlight four basic questions which should be answered before any ML experiments in $(X,d)$.

$i)$ Given the set of points $x_1,\dots,x_N$ in $X$, what is their mean?

The mean should be defined in such a way to satisfy certain common sense properties. Most important, if $a \in X$ is the mean of points $x_1,\dots,x_N$ and $g : X \to X$ is an isometry, then $g(a)$ must be the mean of points $g(x_1),\dots,g(x_N)$.

$ii)$ How can we compute the mean?

The algorithm for computing the mean should be reasonably efficient.

$iii)$ How to sample a random point in $X$?

Parametric statistical models assume a certain family of probability distributions on $X$. This family should satisfy certain properties. First, the family should be invariant with respect to actions of the symmetry group on $X$. In other words, if we act on a probability measure by an isometry, we should obtain the probability measure which belongs to the family. Second, probability distributions should have well defined and easily computable mean (mathematical expectation). Finally, if we act on a probability measure by an isometry, then the mean should be transformed by the same isometry.

$iv)$ Given the family of probability distributions on $X$ and samples $x_1,\dots,x_N$ from a probability distribution belonging to this family, how can we estimate the parameters?

The maximum likelihood estimation procedure should be reasonably efficient.

\begin{remark}
Underline that in Euclidean spaces questions $i)-iv)$ have decisive answers, with the corresponding mathematical tools universally accepted and implemented. The mean (barycenter) of $N$ vectors $x_1,\dots,x_N$ is their average $\frac{1}{N} \sum x_i$. The universal statistical model in $n$-dimensional Euclidean space the Gaussian family ${\cal N}(a,\Sigma)$, where $a \in \mathbb{R}^n$ is mathematical expectation and $\Sigma$ is $n \times n$ (positive definite) covariance matrix. The maximum likelihood estimation of parameters $a$ and $\Sigma$ yields an optimization problem that allows for relatively efficient numerical solution. Optimization in Euclidean spaces is based on the gradient descent method with further improvements and modifications (stochastic gradient descent, momentum, etc.). There is also an alternative approach based on evolutionary algorithms for the black-box optimization, most notably the CMA ES algorithm which also employs the Gaussian family \cite{HO}.
\end{remark}

In the present paper we provide detailed and rigorous answers to the questions $i)-iv)$ for the case when $(X,d)$ is a hyperbolic ball. The exposition is organized along the following lines.
The next Section contains mathematical preliminaries about hyperbolic balls. In sections \ref{sec:3} and \ref{Moeb_Poin_disc} we consider the minimal and most popular model of hyperbolic geometry: the two-dimensional Riemannian manifold known as Poincar\' e disc. In Section \ref{sec:3} we define the barycenter in the Poincar\' e disc and assert the conformal invariance property. Furthermore, we introduce so-called Poincar\' e swarms as a powerful computational model and demonstrate that these swarms implement the gradient descent method in hyperbolic metric for computing the barycenter. In Section \ref{Moeb_Poin_disc} we introduce a conformally invariant family of probability distributions in the Poincar\' e disc and propose the optimization algorithm for maximum likelihood estimation. In sections \ref{Bary_balls} and \ref{sec:6} the analysis is extended to higher-dimensional balls, including two non-equivalent models in even dimensions. Section \ref{sec:7} contains concluding remarks and an outlook on the future advances and applications.

\section{Preliminaries on hyperbolic balls}\label{sec:2}

Minimal models of hyperbolic geometry are two-dimensional manifolds with constant negative curvature. The most popular choice in hyperbolic ML is the Poincar\' e disc (also named conformal disc). This manifold can be elegantly defined and studied using the complex-analytic tools, because the group of isometries of the Poincar\' e disc coincides with the (sub)group of disc-preserving M\" obius (linear-fractional) transformations in the complex plane.

More generally, the unit ball in $d$-dimensional real vector space can be endowed by the metric imposing the constant negative curvature on it. This is the way of introducing Poincar\' e balls which are $d$-dimensional hyperbolic manifolds for any integer $d \geq 2$. On the other hand, one can introduce unit balls as subsets of the complex vector space $\mathbb{C}^m$. In this case, there exists an alternative metric which also imposes the constant negative curvature. Hence, there are two non-equivalent models of even-dimensional hyperbolic balls, named Poincar\' e and Bergman balls. Underline that for $m=1$ (i.e. $d=2$) both models reduce to the Poincar\' e disc. In that respect, both Poincar\' e and Bergman balls are extensions of the Poincar\' e disc to higher dimensions. We refer to the book \cite{Stoll} for the rigorous exposition on hyperbolic balls and their symmetry groups and to \cite{Parker} for lectures on complex hyperbolic geometry, including Bergman balls.

\subsection{Poincar\' e disc}

Consider the open disc in the complex plane $\mathbb{B}^2 = \{ z \in \mathbb{C} \; |z|<1 \}$. If $v$ is the tangent vector to $\mathbb{B}^2$ at a point $z$, with the Euclidean norm $|v|_{Eucl}$, then the hyperbolic norm of $v$ is
$$
|v|_{hyp} = \frac{1}{1-|z|^2} |v|_{Eucl}.
$$
The open disc with the norm $|v|_{hyp}$ is the Riemannian manifold called {\it the Poincar\' e disc}.

Further, denote by $\mathbb{G}_2$ the group of M\" obius transformations acting on the complex plane of the following form
\begin{equation}
\label{Mobius}
g_a(z) = e^{i \theta} \frac{a-z}{1-\bar a z}, \quad \theta \in [0,2 \pi), \; a \in \mathbb{B}^2.
\end{equation}
It is easy to verify that transformations \eqref{Mobius} map the unit disc onto itself. Hence, $\mathbb{G}_2$ is subgroup of the group of all M\" obius transformations acting on the complex plane.

The (sub)group $\mathbb{G}_2$ is isomorphic to the Lie group $SU(1,1)$ of matrices of the form
$$
\left(
\begin{array}{cc}
a & b \\
- \bar b & \bar a
\end{array}
\right),
 \mbox{  where  } a,b \in \mathbb{C}, \quad |a|^2 + |b|^2 = 1.
$$
More precisely, $\mathbb{G}_2$ is isomorphic to the quotient group $PSU(1,1) = SU(1,1) / \pm I$.

Denote by $d \lambda(z)$ the Lebesgue measure in the complex plane. Then the hyperbolic measure in $\mathbb{B}^2$ reads
\begin{equation}
\label{hyp_meas_disc}
d \Lambda(z) = \frac{d \lambda(z)}{(1-|z|^2)^2}.
\end{equation}

Group ${\mathbb G}_2$ operates on the set ${\cal P}(\mathbb{B}^2)$ of all probability measures on $\mathbb{B}^2$. Given a measure $\mu \in {\cal P}(\mathbb{B}^2)$ we will use the notation $g_* \mu$ for the pullback measure defined as
\begin{equation}
\label{pull_back}
g_* \mu (A) = \mu(g^{-1}(A)), \mbox{ for any Borel set } A \subseteq \mathbb{B}^2.
\end{equation}

\subsection{Poincar\' e ball}

Denote by $\langle x,y \rangle = x_1y_1 + \cdots + x_d y_d$ the scalar product in $\mathbb{R}^d$ and $|x| = \sqrt{\langle x,x \rangle}$.

\begin{definition}
\label{Poin_ball_def}
Consider the set $\{x \in \mathbb{R}^d \; : \; |x|<1\}$ equipped with the metric $$g_x(u,v) = \frac{\left<u,v\right>}{(1-|x|^2)}, u, v\in \mathbb{R}^d.$$ This manifold is named the $d$-dimensional Poincar\' e ball and denoted $(\mathbb{B}^d, g)$.
\end{definition}

Hyperbolic measure in $\mathbb{B}^d$ is defined as
\begin{equation}
\label{hyp_meas_ball}
d \Lambda(x) = \frac{d \lambda(x)}{(1-|x|^2)^d}.
\end{equation}
where $d \lambda(x)$ denotes the Lebesgue measure in $\mathbb{R}^d$.

Let $x,y \in \mathbb{B}^d$. The Poincar\'e distance on $\mathbb{B}^d$ is given by
\begin{equation}\label{pome}d_h(x,y)=\frac{1}{2}\log \frac{1+R}{1-R},\end{equation} where
 \begin{equation}
 \label{rho}
 R=\frac{|x-y|}{\sqrt{\rho(x,y)}} \mbox{ and } \rho(x,a)=|x-a|^2+(1-|a|^2)(1-|x|^2).
 \end{equation}
Consider the set of M\"obius transformations of the unit ball given by the following formula
\begin{equation} \label{Mobius_ball}
h_a(x)= A \frac{a|x-a|^2+(1-|a|^2)(a-x)}{\rho(x,a)},
\end{equation}
where $A$ is the orthogonal transformation of the Euclidean space $\mathbb{R}^d$.

Transformations \eqref{Mobius_ball} map the unit ball in $\mathbb{R}^d$ onto itself. Furthermore, it is easy to see that $h_c^{-1}(x)=h_c(x)$ for every $c\in \mathbb{B}^d$. Transformations \eqref{Mobius_ball} with the operation of multiplication (composition) constitute a group. We denote this group by ${\mathbb G}_d$. Notice that $\mathbb{G}_d$ is isomorphic to the Lorentz group $SO^+(d,1)$. Transformations \eqref{Mobius_ball} are isometries of $\mathbb{B}^d$, that is - they preserve the distances \eqref{pome}.

We conclude this subsection with several formulae that may be useful in the sequel, see \cite{Ahlfors}

\begin{equation}\label{poMT}d_h(x,y)=\frac{1}{2} \log \frac{1+|h_a(x)|}{1-|h_a(x)|};\end{equation}

\begin{equation}\label{jaka}(1-|h_a(x)|^2)=\frac{(1-|a|^2)(1-|x|^2)}{\rho(x,a)}.\end{equation}

Jacobian of the mapping $y=h_a(x)$ is given by  \begin{equation}\label{jaco}J(y,x)=\frac{1-|a|^2}{\rho(a,x)^n}=\frac{(1-|y|^2)^n}{(1-|x|^2)^n}.\end{equation}

\subsection{Bergman ball}
\label{Bergman_ball_subsect}

The norm of a vector $z=(z_1,\dots,z_m) \in \mathbb{C}^m$ is $|z|=\sqrt{\left<z,z\right>}$, where $\left<z,w\right>=\sum_{k=1}^m z_k\overline{w_k}.$

Denote by $\mathrm{d}\lambda(z)$ the Lebesgue measure in $\mathbb{C}^m$. Then the hyperbolic measure reads
\begin{equation} \label{Berg_hyp_meas}
 \mathrm{d}\Lambda(z) =\frac{\mathrm{d}\lambda(z)}{(1-|z|^2)^{m+1}}.
\end{equation}

\begin{definition}
Let $\mathbb{D}^m$ be the unit ball in $\mathbb{C}^m$ equipped with the metric tensor defined by $
g_z(u,v) = \left<B(z)u,v\right>$, $u, v\in \mathbb{C}^m$, $z\in \mathbb{D}^m.$
Here $$B(z)=(b(z)_{ij})_{i,j=1}^n \quad \mbox{ and } \quad b(z)_{ij}= \frac{1}{n+1}\frac{\partial^2}{\partial \overline{{z_i}}\partial z_j}K(z,z),$$ where $$K(z,w)=\frac{1}{n+1}\frac{1}{(1-\left<z,w\right>)^{n+1}}$$ is the Bergman kernel \cite{kezu}.

The Riemannian manifold $(\mathbb{D}^m, g)$ is named the Bergman  ball.
\end{definition}

It is known that the Bergman balls have constant negative sectional curvature, see \cite{HL}.

Let $P_a$ be the
orthogonal projection of $\mathbb{C}^m$ onto the subspace $[a]$ generated by $a$, and let $$Q=Q_a =
I - P_a$$ be the projection onto the orthogonal complement of $[a]$. Explicitly, $P_0 = 0$ and  $P=P_a(z) =\frac{\left<z,a\right> a}{\left<a, a\right>}$. Set $s_a = (1 - |a|^2)^{1/2}$ and consider the map
\begin{equation}
\label{Bergman_transf}
m_a(z) =\frac{a-P_a z-s_a Q_a z}{1-\left<z,a\right>}.
\end{equation}
Compositions of mappings of the form \eqref{Bergman_transf} and unitary linear mappings in $\mathbb{C}^m$ consitute the group of holomorphic automorphisms of the unit ball $\mathbb{D}^m \subset \mathbb{C}^m$. It is easy to verify that $m_a^{-1}=m_a$. Moreover, for any automorphism $q$ of the Bergman ball onto itself there exists a unitary transformation $U$ such that \begin{equation}\label{automob} m_{q(c)}\circ q=U\circ m_c.\end{equation}

By using the representation formula \cite[Proposition~1.21]{kezu}, we can introduce the Bergman metric by the following formula \begin{equation}\label{bergmet}d_B(z,w)=\frac{1}{2}\log\frac{1+|m_w(z)|}{1-|m_w(z)|}.\end{equation}
If $\Omega= \{z\in \mathbb{C}^n:\left<z,a\right>\neq 1\}$,  then the map $p_a$  is holomorphic in $\Omega$.

It is well-known that every automorphism $q$ of the unit ball is an isometry w.r. to the Bergman metric, that is: $d_B(z,w)=d_B(q(z),q(w))$.

We also point out the formula \begin{equation}\label{phia}(1-|m_a(z)|^2)=\frac{(1-|z|^2)(1-|a|^2)}{|1-\left<a,z\right>|^2}\end{equation} and the expression for the Jacobian
$$
J(z,m_a)=\left(\frac{1-|m_a(z)|^2}{1-|z|^2}\right)^{n+1}=\left(\frac{1-|a|^2}{|1-\left<z,a\right>|^2}\right)^{n+1}.
$$

\section{Conformal barycenter in the Poincar\' e disc}\label{sec:3}

The concept of the mean in the Poincar\' e disc can be studied as a particular issue within a more general context of {\it Riemannian centers of mass} or {\it Karcher means} on manifolds. We refer to the study \cite{ABY} on medians and means on Riemannian manifolds. We also point out the book \cite{Bacak} contains the exposition about means and optimization on Hadamard spaces. Notice that hyperbolic balls are Hadamard manifolds.
Means in hyperbolic spaces and their computation for ML purposes have been discussed in \cite{LKJBLS}.

In this Section we answer the questions $i)$ and $ii)$ from Introduction for the case when $X$ is the Poincar\' e disc. We define the mean (barycenter) as a minimum of a certain potential function, as recently introduced in \cite{JacKal}. We point out favorable properties and present the method of finding the mean by applying the (hyperbolic) gradient descent algorithm for the potential function.

Although computations of the mean in the Poincar\' e disc are tractable and pretty efficient, they are more demanding than in Euclidean spaces. This is the computational cost of hyperbolic representations. We believe that experiments will demonstrate that this cost is acceptable in many setups.

\subsection{Definition and properties}
\label{Conf_bary_Poin_disc}

We consider finite sets (configurations) of points in the Poincar\' e disc. Points are represented by complex numbers $z_1,\dots,z_N$, such that $|z_i|<1$.

\begin{definition}
The configuration $\{z_1,\dots,z_N\}$ is said to be {\it balanced} if $z_1+\cdots+z_N=0$.
\end{definition}

We proceed with several facts and notions that have been introduced in \cite{JacKal}. We will omit the proofs, as they are provided therein.

\begin{theorem}
\label{unique_Mob}
Let $\{z_1,\dots,z_N\}$ be a configuration of points in $\mathbb{B}^2$. Then, there exists a unique (up to a rotation) M\" obius transformation $g_a \in \mathbb{G}_2$, such that the configuration $\{g_a(z_1),\dots,g_a(z_N)\}$ is balanced.
\end{theorem}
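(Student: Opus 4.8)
The plan is to reduce the balancing condition to the vanishing of an explicit planar vector field and then count its zeros by a degree argument. Writing $\sigma_a(z)=\frac{a-z}{1-\bar a z}$ so that $g_a(z)=e^{i\theta}\sigma_a(z)$, the configuration $\{g_a(z_1),\dots,g_a(z_N)\}$ is balanced precisely when $\Sigma(a):=\sum_{i=1}^N\sigma_a(z_i)=0$. Since the rotation $e^{i\theta}$ only scales the sum by a factor of unit modulus, it plays no role, which already accounts for the phrase ``up to a rotation''. Thus the theorem is equivalent to the statement that the smooth map $\Sigma:\mathbb{B}^2\to\mathbb{C}$ has exactly one zero, and the unique $a$ determines $g_a$ up to the free rotation parameter.

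First I would analyse the boundary behaviour of $\Sigma$. For fixed $z$ with $|z|<1$ and $a\to a_0$ with $|a_0|=1$ one has $1-\bar a_0 z=(a_0-z)/a_0$, hence $\sigma_a(z)\to a_0$; consequently $\Sigma(a)\to N a_0$ as $a\to a_0\in\partial\mathbb{B}^2$. In particular $\Sigma$ is nonvanishing near the boundary and, on a circle $|a|=r$ with $r$ close to $1$, it is homotopic through nonvanishing fields to the outward radial field $a\mapsto a$. Therefore $\Sigma$ points outward along $\partial\mathbb{B}^2$ and its degree on such a circle equals $1$; by Poincar\'e--Hopf the indices of the necessarily interior zeros of $\Sigma$ sum to $\chi(\mathbb{B}^2)=1$. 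Existence of a balancing $a$ follows immediately.

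For uniqueness it remains to show that every zero of $\Sigma$ has index $+1$, i.e.\ that the real Jacobian of $\Sigma$ is orientation preserving there; the indices then sum to $1$, forcing exactly one zero. Here I would exploit equivariance under $\mathbb{G}_2$. If $a^\ast$ is a zero and $h=\sigma_{a^\ast}$, then $h$ maps $a^\ast$ to $0$ and sends the configuration to the balanced configuration $w_i=h(z_i)$; moreover $\sigma_a\circ h=e^{i\alpha(a)}\sigma_{h(a)}$ for a smooth phase $\alpha$, whence $\Sigma(a)=e^{i\alpha(a)}\Sigma_w(h(a))$ with $\Sigma_w(b)=\sum_i\sigma_b(w_i)$. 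Since $\Sigma_w(0)=-\sum_i w_i=0$, the phase factor drops out upon differentiating at $a^\ast$, giving $\det D\Sigma(a^\ast)=\det D\Sigma_w(0)\cdot\det Dh(a^\ast)$, and $\det Dh(a^\ast)>0$ because $h$ is holomorphic. Thus it suffices to check positivity of $\det D\Sigma_w$ at $0$ for a balanced configuration, which is the crux of the argument.

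That last computation is short with Wirtinger derivatives: $\partial_b\sigma_b(w)=\frac{1}{1-\bar b w}$ and $\partial_{\bar b}\sigma_b(w)=\frac{(b-w)w}{(1-\bar b w)^2}$, so at $b=0$ one gets $\partial_b\Sigma_w(0)=N$ and $\partial_{\bar b}\Sigma_w(0)=-\sum_i w_i^2$. Using $\det D\Sigma_w(0)=|\partial_b\Sigma_w(0)|^2-|\partial_{\bar b}\Sigma_w(0)|^2$, this equals $N^2-\bigl|\sum_i w_i^2\bigr|^2$, which is strictly positive because $\bigl|\sum_i w_i^2\bigr|\le\sum_i|w_i|^2<N$ as each $|w_i|<1$. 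Hence every zero has index $+1$ and $\Sigma$ has a unique zero, proving both existence and uniqueness of $a$. I expect the main obstacle to be the uniqueness half: the naive hope that the balancing point is the critical point of a manifestly geodesically convex potential fails, so one genuinely needs the index computation above (equivalently, the Douady--Earle-type nondegeneracy), and the equivariance reduction is what keeps that computation tractable.
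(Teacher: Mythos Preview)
Your proof is correct and complete. However, the route differs substantially from the paper's. The paper defers the proof to \cite{JacKal}, and the argument indicated there (restated as Theorem~4) is precisely the one you dismiss in your final paragraph: the balancing condition $\Sigma(a)=0$ is the vanishing of the hyperbolic gradient of the potential
\[
H(a)=-\sum_{i=1}^N\log\frac{(1-|a|^2)(1-|z_i|^2)}{|1-\bar a z_i|^2},
\]
and $H$ \emph{is} geodesically convex on $\mathbb{B}^2$, so its unique critical point gives existence and uniqueness simultaneously. Your degree/index argument is a genuine alternative, close in spirit to the Douady--Earle nondegeneracy computation for barycenters of circle measures; it is self-contained and avoids any Riemannian convexity machinery. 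The paper's convexity route, on the other hand, immediately produces the hyperbolic gradient-descent scheme (the Poincar\'e swarm) for computing the barycenter, which is one of the paper's main algorithmic points. So your closing remark that the convex-potential approach ``fails'' is mistaken: it works and is in fact what the authors rely on.
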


In Figure \ref{fig1} we illustrate Theorem \ref{unique_Mob} by plotting three configurations and corresponding balanced configurations.

\begin{figure*}[h]
\centering
  \begin{tabular}{@{}ccc@{}}
    \includegraphics[width=.25\textwidth]{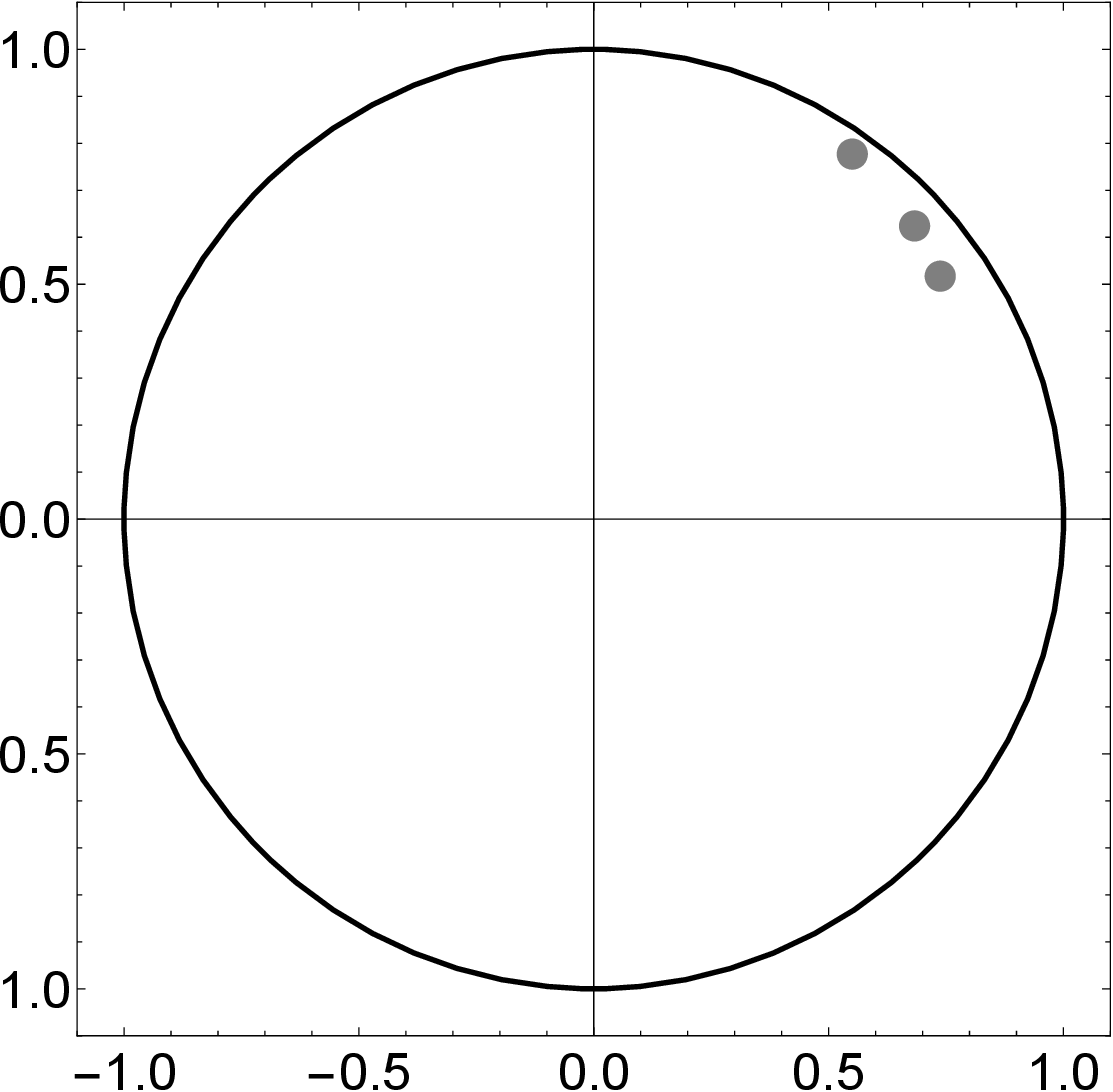}&&\includegraphics[width=.25\textwidth]{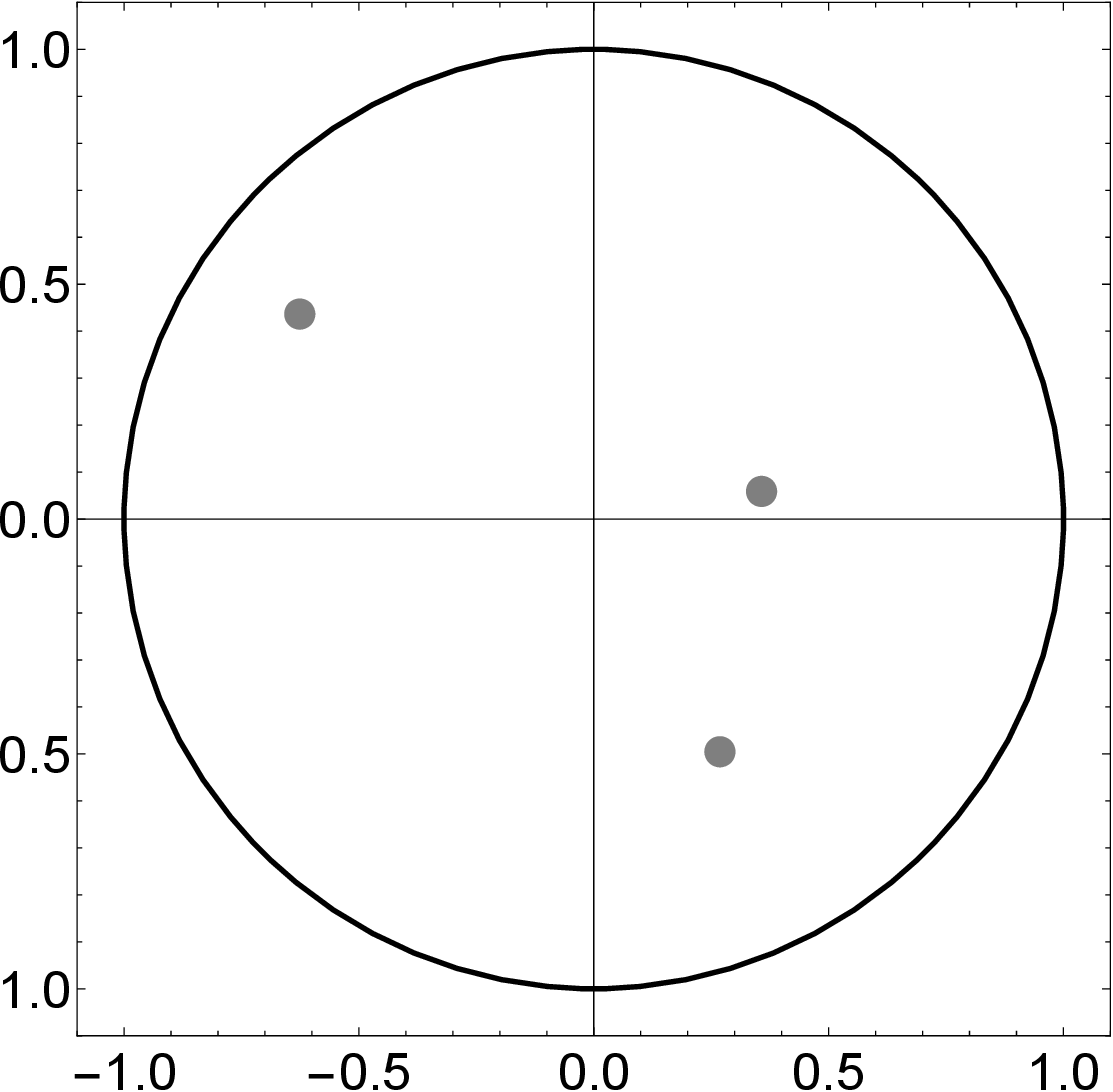}\\
    &a)&\\
     \includegraphics[width=.25\textwidth]{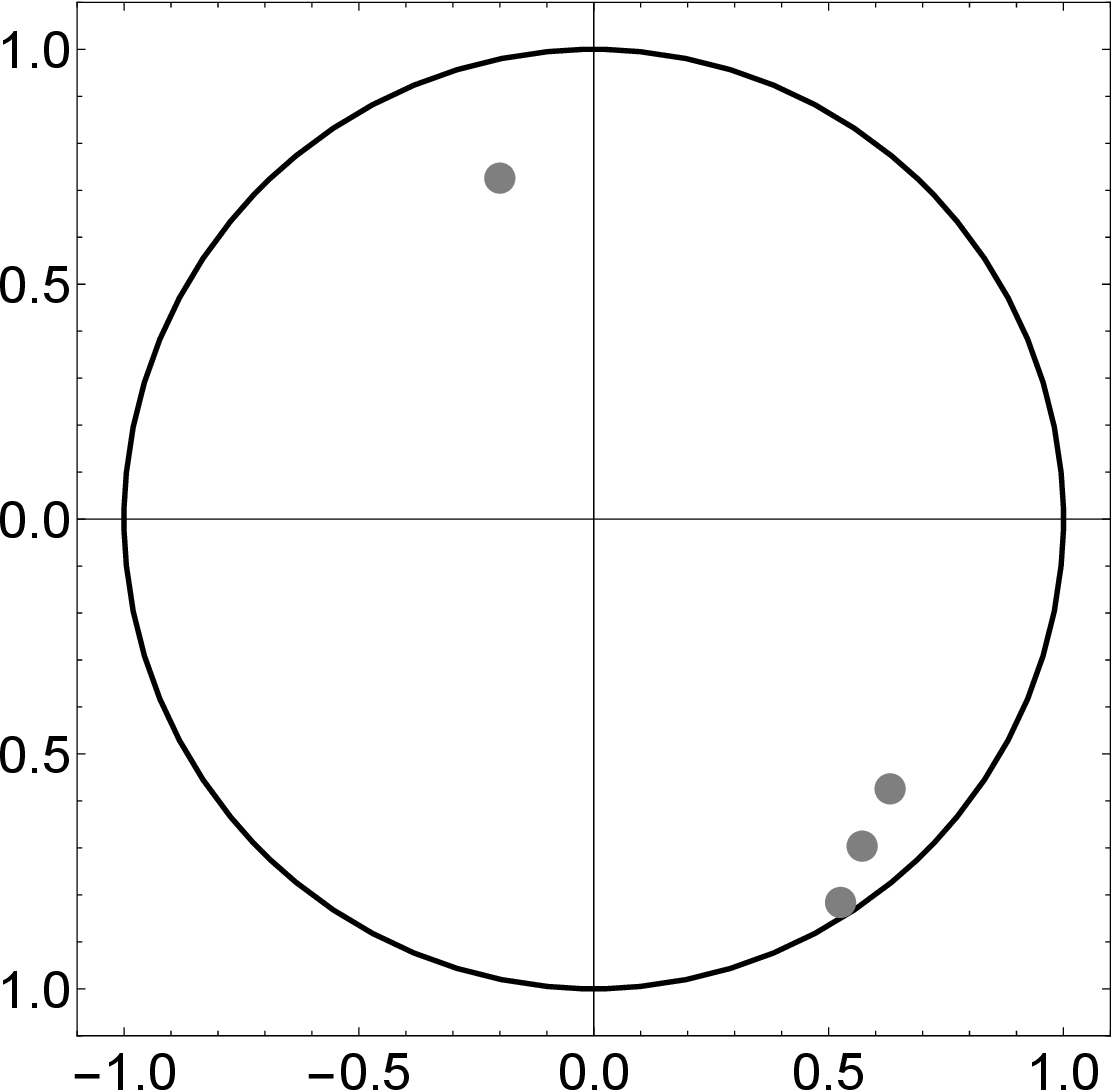}&&\includegraphics[width=.25\textwidth]{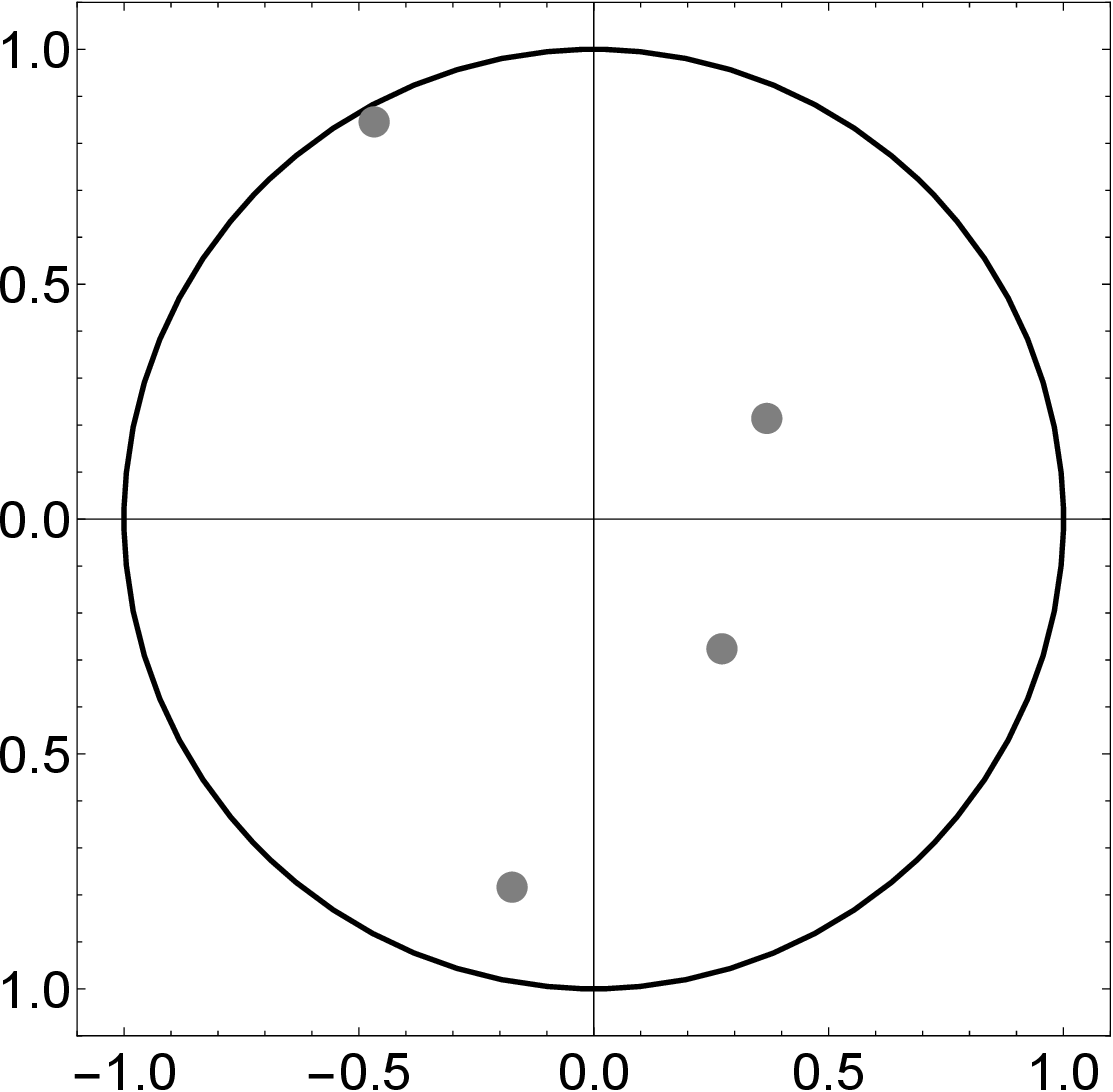}\\
    &b)&\\
     \includegraphics[width=.25\textwidth]{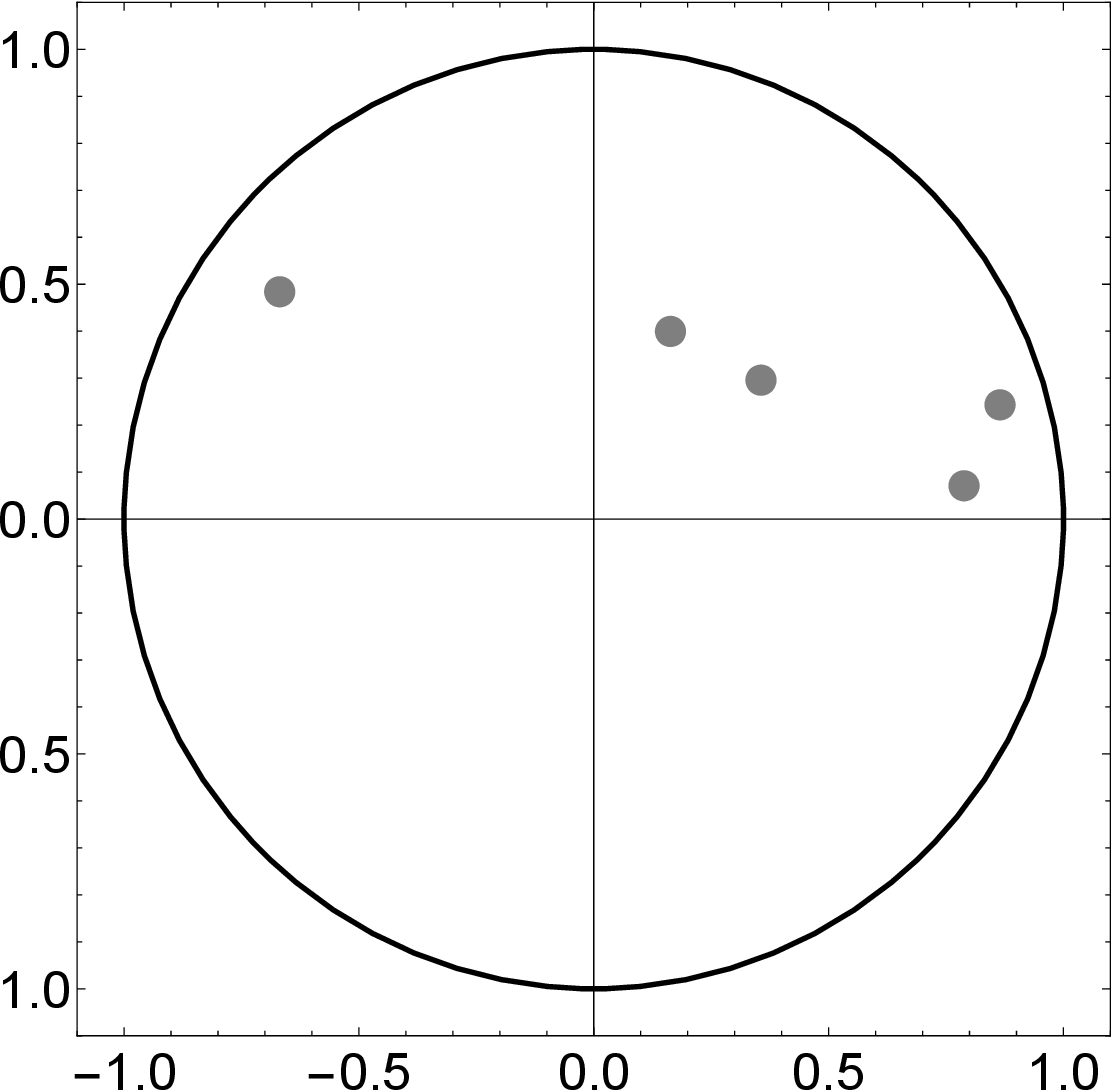}&&\includegraphics[width=.25\textwidth]{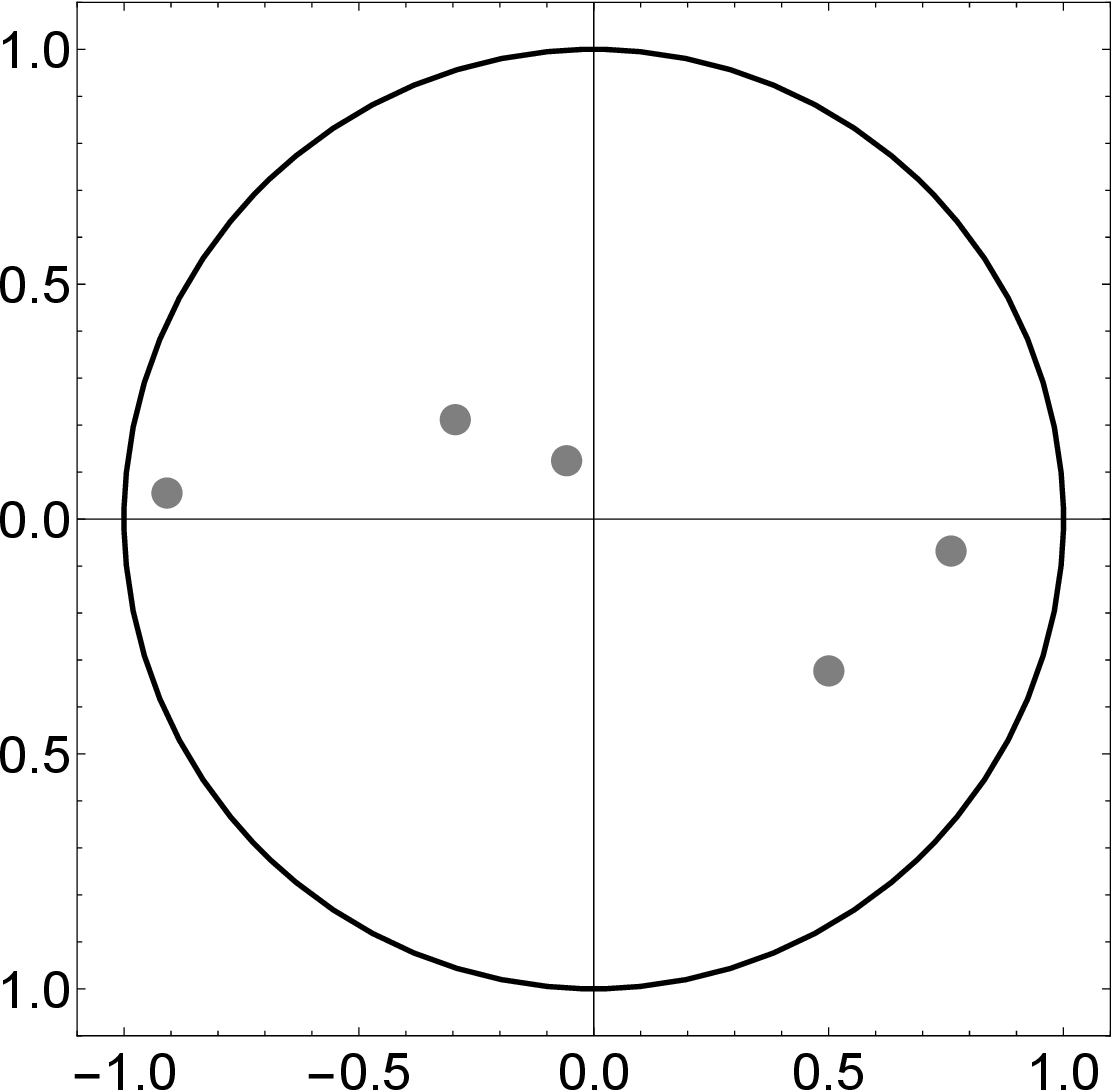}\\
    &c)&\\
  \end{tabular}
  \caption{\label{fig1}This Figure illustrates balanced configurations corresponding to configurations consisting of (a) three points; (b) four points and (c) five points.}
\end{figure*}

\begin{definition}
Consider a configuration $\{z_1,\dots,z_N\}$ in $\mathbb{B}^2$. Then according to the previous Theorem there exists a unique point $a \in \mathbb{B}^2$, such that the configuration $\{g_a(z_1),\dots,g_a(z_N)\}$ is balanced, where $g_a$ is defined as \eqref{Mobius}. The point $a$ is called the {\it conformal barycenter} of the configuration $\{z_1,\dots,z_N\}$.
\end{definition}

\begin{proposition}
Conformal barycenter is conformally invariant, meaning that if $a$ is conformal barycenter of the configuration $\{z_1,\dots,z_N\}$, then for any $h \in \mathbb{G}_2$ the point $h(a)$ is conformal barycenter of the configuration $\{h(z_1),\dots,h(z_N)\}$.
\end{proposition}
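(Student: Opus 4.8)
The plan is to reduce the statement to the uniqueness asserted in Theorem~\ref{unique_Mob} by exploiting the group structure of $\mathbb{G}_2$. Recall that, by definition, $a$ being the conformal barycenter of $\{z_1,\dots,z_N\}$ means exactly that the transformation $g_a \in \mathbb{G}_2$ from \eqref{Mobius} balances the configuration, i.e. $\sum_{i=1}^N g_a(z_i) = 0$. To prove the claim it therefore suffices to exhibit an element of $\mathbb{G}_2$ whose defining parameter (the point it sends to the origin) equals $h(a)$ and which balances the image configuration $\{h(z_1),\dots,h(z_N)\}$; the uniqueness then forces $h(a)$ to be its conformal barycenter.

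First I would form the composition $g := g_a \circ h^{-1}$. Since $h \in \mathbb{G}_2$ and $\mathbb{G}_2$ is a group, both $h^{-1}$ and hence $g$ lie in $\mathbb{G}_2$. A direct substitution gives $g(h(z_i)) = g_a(h^{-1}(h(z_i))) = g_a(z_i)$ for every $i$, so that $\sum_{i=1}^N g(h(z_i)) = \sum_{i=1}^N g_a(z_i) = 0$. Thus $g$ balances the configuration $\{h(z_1),\dots,h(z_N)\}$.

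The key step is to identify the defining parameter of $g$. In the parametrization \eqref{Mobius} the subscript of $g_a$ is precisely the unique point of $\mathbb{B}^2$ mapped to the origin, since $g_a(a) = e^{i\theta}\frac{a-a}{1-\bar a a} = 0$. Evaluating $g$ at $h(a)$ yields $g(h(a)) = g_a(h^{-1}(h(a))) = g_a(a) = 0$, so $g$ sends $h(a)$ to the origin; as any element of $\mathbb{G}_2$ annihilating $h(a)$ must have the form $e^{i\theta}\frac{h(a)-z}{1-\overline{h(a)}z}$, we conclude that $g$ coincides with $g_{h(a)}$ up to the rotation factor $e^{i\theta}$. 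Because balancedness is invariant under rotations about the origin, $\sum_i e^{i\theta} w_i = e^{i\theta}\sum_i w_i$, this rotation factor is immaterial, and $g_{h(a)}$ itself balances $\{h(z_1),\dots,h(z_N)\}$. By the uniqueness in Theorem~\ref{unique_Mob}, $h(a)$ is the conformal barycenter of the image configuration.

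The argument is essentially a one-line group computation, so no step is genuinely hard; the only point requiring care is the bookkeeping of the rotation ambiguity inherent in Theorem~\ref{unique_Mob}. One must check that the barycenter is pinned down by the preimage of the origin alone and that the undetermined phase $e^{i\theta}$ does not affect the balancing condition, which it does not, precisely because summation commutes with multiplication by the scalar $e^{i\theta}$.
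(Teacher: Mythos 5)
Your argument is correct and complete: reducing the claim to the uniqueness statement of Theorem~\ref{unique_Mob} via the composition $g = g_a \circ h^{-1}$, identifying its parameter through $g(h(a))=0$, and noting that the residual rotation factor $e^{i\theta}$ is harmless for the balancing condition $\sum_i w_i = 0$ is exactly the natural group-theoretic route. The paper itself omits the proof of this proposition (deferring to \cite{JacKal}), so there is nothing in the text to compare against; your proof is self-contained and fills that gap correctly, with the one genuinely delicate point --- that the ``up to a rotation'' ambiguity does not disturb either the balancing condition or the preimage of the origin --- handled explicitly.
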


\begin{proposition}
The configuration $\{z_1,\dots,z_N\}$ is balanced if and only if its conformal barycenter is zero (center of the disc).
\end{proposition}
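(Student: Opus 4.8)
The plan is to exploit the fact that the only non-trivial object in the statement is the transformation $g_a$ evaluated at the special parameter $a = 0$. First I would substitute $a = 0$ into the defining formula \eqref{Mobius}, which collapses the M\"obius transformation to $g_0(z) = e^{i\theta}\frac{-z}{1} = -e^{i\theta}z$. The crucial structural observation is that this is a complex-linear map (a rotation of the disc about its center, by angle $\theta + \pi$), so it commutes with summation: for any configuration one has $\sum_{i=1}^N g_0(z_i) = -e^{i\theta}\sum_{i=1}^N z_i$.

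With this identity in hand, both implications follow almost immediately. For the ``only if'' direction, I would assume $\sum_i z_i = 0$; then the identity gives $\sum_i g_0(z_i) = 0$, so the rotated configuration $\{g_0(z_1),\dots,g_0(z_N)\}$ is itself balanced. By Theorem \ref{unique_Mob} the balancing point $a$ is unique, and since $a = 0$ already works, the conformal barycenter must be $0$. For the ``if'' direction, I would assume the barycenter is $0$; by the very definition of the conformal barycenter this means $\{g_0(z_1),\dots,g_0(z_N)\}$ is balanced, i.e. $\sum_i g_0(z_i) = 0$. Dividing by the nonzero factor $-e^{i\theta}$ in the identity above yields $\sum_i z_i = 0$, so the original configuration is balanced.

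The argument is essentially a one-line computation, so I do not anticipate a genuine obstacle. The only point requiring care is the appeal to uniqueness in the ``only if'' direction: showing that $a = 0$ balances the configuration establishes that $0$ is \emph{a} valid balancing point, and it is the uniqueness clause of Theorem \ref{unique_Mob} (uniqueness of the point $a$, independent of the rotation angle) that upgrades this to the conclusion that $0$ is \emph{the} conformal barycenter. I would therefore invoke that uniqueness explicitly rather than leaving it implicit.
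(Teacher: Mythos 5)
Your argument is correct. The paper itself omits the proof of this proposition (deferring to \cite{JacKal}), but your computation is exactly the natural one that the definitions force: $g_0(z)=-e^{i\theta}z$ is linear, so $\sum_i g_0(z_i)=-e^{i\theta}\sum_i z_i$ vanishes precisely when $\sum_i z_i$ does, and the uniqueness clause of Theorem \ref{unique_Mob} (which you rightly invoke explicitly, since it is what pins down the \emph{point} $a$ rather than merely exhibiting one balancing transformation) converts this into the stated equivalence. No gaps.
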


For the configuration $\{z_1,\dots,z_N \}$ in $\mathbb{B}^2$ introduce the following function
\begin{equation}
\label{potential_disc}
H(a) = - \sum_{i=1}^N \log \frac{(1-|a|^2)(1-|z_i|^2)}{|1-\bar a z_i|^2}, \quad a \in \mathbb{B}^2.
\end{equation}

\begin{theorem} \cite{JacKal}
The function \eqref{potential_disc} is geodesically convex in $\mathbb{B}^2$ and hence has a unique minimum in $\mathbb{B}^2$. This minimum is conformal barycenter of the configuration $\{z_1,\dots,z_N\}$.
\end{theorem}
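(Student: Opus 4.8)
The plan is to collapse all three assertions (geodesic convexity, uniqueness of the minimizer, and its identification with the conformal barycenter) onto a single clean reformulation of $H$ in terms of the hyperbolic distance. First I would record the algebraic identity
$$1-|g_a(z)|^2 = \frac{(1-|a|^2)(1-|z|^2)}{|1-\bar a z|^2},$$
which follows from expanding $|1-\bar a z|^2 - |a-z|^2 = (1-|a|^2)(1-|z|^2)$. This shows that the $i$-th summand in \eqref{potential_disc} is exactly $-\log\bigl(1-|g_a(z_i)|^2\bigr)$. Since $g_a(a)=0$ and $g_a\in\mathbb{G}_2$ is an isometry, $d_h(0,g_a(z_i))=d_h(a,z_i)$, and from $d_h(0,w)=\operatorname{artanh}|w|$ one gets $1-|w|^2=\operatorname{sech}^2 d_h(0,w)$, yielding the key representation
$$H(a) = 2\sum_{i=1}^N \log\cosh d_h(a,z_i).$$

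Second, I would read off (strict) geodesic convexity from this form. The scalar function $t\mapsto\log\cosh t$ is smooth, increasing and convex on $[0,\infty)$, while $a\mapsto d_h(a,z_i)$ is geodesically convex because $\mathbb{B}^2$ is a Hadamard manifold; hence each summand, and therefore $H$, is geodesically convex. For strictness I would differentiate $\log\cosh d_h(\gamma(t),z_i)$ twice along an arbitrary unit-speed geodesic $\gamma$: writing $\phi_i(t)=d_h(\gamma(t),z_i)$, the second derivative equals $\operatorname{sech}^2\phi_i\,(\phi_i')^2+\tanh\phi_i\,\phi_i''$, which is strictly positive since $\phi_i'\neq 0$ off the foot of the perpendicular from $z_i$ and $\phi_i''>0$ at that foot; the instant $\gamma(t)=z_i$ is harmless because $\log\cosh d_h(\cdot,z_i)=\tfrac12 d_h(\cdot,z_i)^2+O(d_h^4)$ is smooth there. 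Strict geodesic convexity gives at most one critical point, and since $H(a)\to\infty$ as $a\to\partial\mathbb{B}^2$ (every $d_h(a,z_i)\to\infty$), coercivity forces this unique minimizer to exist.

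Third, I would identify the minimizer with the conformal barycenter by reduction to the balanced case plus conformal invariance. The isometry identity $d_h(a,h(z_i))=d_h(h^{-1}(a),z_i)$ gives, for any $h\in\mathbb{G}_2$, the transformation law $H_{h(Z)}(a)=H_Z(h^{-1}(a))$, so the minimizer is carried by $h$ exactly as the barycenter is. It thus suffices to treat a balanced configuration, whose barycenter is the origin by the preceding Proposition. The Wirtinger derivative is $\partial_{\bar a}H=\dfrac{Na}{1-|a|^2}-\sum_i\dfrac{z_i}{1-\bar a z_i}$, which at $a=0$ reduces to $-\sum_i z_i=0$; hence for balanced $Z$ the origin is a critical point and, by convexity, the global minimum. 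For a general configuration $Z$ with barycenter $a^*$, applying the involution $g_{a^*}$ (so that $g_{a^*}(Z)$ is balanced and $g_{a^*}(a^*)=0$) and the covariance law shows the minimizer of $H_Z$ is $g_{a^*}^{-1}(0)=a^*$.

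The main obstacle I anticipate is the strict-convexity step: one must control the second-order behaviour of the hyperbolic distance, in particular justify $\phi_i''>0$ at the foot of the perpendicular (where the first-order term vanishes) via Hessian comparison in negative curvature, and confirm smoothness at $\gamma(t)=z_i$. Everything else — the algebraic identity, the coercivity, and the critical-point computation in the balanced frame — is routine once the representation $H(a)=2\sum_i\log\cosh d_h(a,z_i)$ is established.
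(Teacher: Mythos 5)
Your argument is correct, but note that the paper itself does not prove this theorem: it is imported from \cite{JacKal} with the explicit remark that proofs are omitted, so there is no in-paper proof to match. Your route is a legitimate self-contained alternative. The key move --- rewriting the $i$-th summand as $-\log\bigl(1-|g_a(z_i)|^2\bigr)=2\log\cosh d_h(a,z_i)$ and then invoking convexity of $\log\cosh$ composed with the geodesically convex distance function, plus Hessian comparison for strictness --- is sound; the identity $1-|g_a(z)|^2=(1-|a|^2)(1-|z|^2)/|1-\bar a z|^2$ and the computation $d_h(0,w)=\operatorname{artanh}|w|$ both check out against \eqref{pome}--\eqref{jaka}, and your Wirtinger derivative $\partial_{\bar a}H=\tfrac{Na}{1-|a|^2}-\sum_i\tfrac{z_i}{1-\bar a z_i}$ agrees with the paper's own gradient computation in the proof of Theorem~\ref{hyp_flow_th}. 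Where your argument diverges from the spirit of the surrounding text is in the identification step: you reduce to the balanced frame and verify criticality at the origin, whereas the paper's gradient formula $\nabla_{hyp}H(a)\propto -(1-|a|^2)\sum_i g_a(z_i)$ identifies critical points of $H$ with points $a$ for which $\{g_a(z_i)\}$ is balanced \emph{directly}, with no change of frame --- that is arguably the shorter path once the gradient is computed, and it is the mechanism the Poincar\'e swarm exploits. Your approach buys something in exchange: the $\log\cosh$ representation makes geodesic convexity and coercivity transparent without any gradient computation, and it generalizes verbatim to the higher-dimensional balls of Section~\ref{Bary_balls} via \eqref{jaka} and \eqref{phia}. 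The one step you rightly flag as delicate --- strict convexity at the foot of the perpendicular and smoothness at $\gamma(t)=z_i$ --- does go through (when $\phi_i'=0$ and $\phi_i>0$, Hessian comparison in strictly negative curvature gives $\phi_i''>0$; at $z_i$ itself the Hessian of $\tfrac12 d_h^2$ is positive definite), but strictly speaking uniqueness of the minimizer already follows from strict convexity of the \emph{sum} for $N\ge 1$, since at any point at most the single term with $\gamma$ radial through $z_i$ can have $\phi_i''=0$, and that term then contributes $\operatorname{sech}^2\phi_i>0$ through $(\phi_i')^2=1$; so you do not actually need each summand to be strictly convex individually.
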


The above results assert the existence and uniqueness of the conformal barycenter for any (finite) set of points in the unit disc.\footnote{Note that the notion of conformal barycenter is introduced in \cite{JacKal} for any (not necessarily finite) measurable subset of $\mathbb{B}^2$.} In the remaining part of this Section we demonstrate how this barycenter is computed.

\subsection{Computation of the conformal barycenter}
\label{Comp_conf_barycenter}

Let $\{ z_1,\dots,z_N\}$ be a configuration of points in $\mathbb{B}^2$. Theorem \ref{unique_Mob} asserts that there exists a unique (up to a rotation) M\" obius transformation $\tilde g \in \mathbb G_2$, such that the configuration $\{\tilde g(z_1),\dots,\tilde g(z_N)\}$ is balanced. The conformal barycenter $a$ of the configuration $\{ z_1,\dots,z_N\}$ is the point $a = \tilde g^{-1}(0)$.

Therefore, computing the conformal barycenter boils down to inferring a M\" obius transformation which maps the configuration into a balanced one. We proceed with assertions which provide a method for finding this transformation.

Consider the following system of ODE's in the Poincar\' e disc
\begin{equation}
\label{Poin_swarm}
\frac{d\zeta_j}{dt} = -\frac{K}{2N} \left( \sum_{k=1}^N \bar \zeta_k \right) \zeta_j^2 + \frac{K}{2N} \sum_{k=1}^N \zeta_k, \quad j=1,\dots,N,
\end{equation}
where the notion $\bar w$ stands for the conjugate of the complex number $w$.

We will refer to the system \eqref{Poin_swarm} as {\it Poincar\' e swarm}. One can verify that the unit disc is invariant for dynamics \eqref{Poin_swarm}.

\begin{theorem}
\label{Mob_evol_th}
The system \eqref{Poin_swarm} evolves by actions of the M\" obius group $\mathbb{G}_2$. More precisely, there exists a continuous path $a(t)$ in $\mathbb{B}^2$, such that
\begin{equation}
\label{Mob_evol}
\zeta_j(t) = g_{a(t)}(\zeta_j(0)), \mbox{ for } j=1,\dots,N, \; t>0.
\end{equation}
Furthermore, the point $a(t)$ evolves by the following ODE:
\begin{equation}
\label{bary_evol}
\frac{da}{dt} = - \frac{K}{2N} (1-|a|^2) \sum_{k=1}^N g_a(\zeta_j(0)), \mbox{ where } g_a(z) = \frac{z-a}{1-\bar a z}.
\end{equation}
\end{theorem}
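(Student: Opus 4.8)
The plan is to exploit that the right-hand side of \eqref{Poin_swarm} is a Riccati vector field whose coefficients are identical for every index $j$. I would first introduce the mean-field quantity $f(t)=\frac{K}{2N}\sum_{k=1}^N\zeta_k(t)$ and note that $\frac{K}{2N}\sum_{k=1}^N\overline{\zeta_k}=\overline{f}$, so that each component satisfies the \emph{common} equation $\dot\zeta_j=f-\overline{f}\,\zeta_j^2$. The decisive structural observation is that $f$ and $\overline f$ carry no dependence on $j$: all $N$ trajectories are integral curves of one and the same time-dependent vector field on $\mathbb{B}^2$.

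I would then prove the assertion \eqref{Mob_evol} that this flow is realized by the group $\mathbb{G}_2$. The cleanest route is to pass to a projective coordinate $\zeta=z_1/z_0$ and lift the Riccati equation to the linear system $\dot z=M(t)z$ on $\mathbb{C}^2$ with generator $M=\left(\begin{smallmatrix}0&\overline f\\ f&0\end{smallmatrix}\right)$. One checks directly that $M(t)\in\mathfrak{su}(1,1)$ for every $t$, hence the fundamental matrix stays in $SU(1,1)$; projectively it is a single element $\Phi_t\in\mathbb{G}_2=PSU(1,1)$ that preserves the disc and acts simultaneously on all trajectories. This gives $\zeta_j(t)=\Phi_t(\zeta_j(0))$, and since every element of $\mathbb{G}_2$ has the form \eqref{Mobius}, it identifies the path $a(t)$ together with a rotation angle $\theta(t)$.

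To extract the evolution law \eqref{bary_evol} I would substitute the ansatz $\zeta_j(t)=e^{i\theta(t)}\frac{a(t)-w_j}{1-\overline{a(t)}\,w_j}$, with $w_j=\zeta_j(0)$, into $\dot\zeta_j=f-\overline f\,\zeta_j^2$. Clearing the denominator $(1-\overline a w_j)^2$ turns both sides into quadratic polynomials in $w_j$; because the identity holds for every $j$ with the \emph{same} $a$ and $\theta$, I would match the coefficients of $w_j^0$, $w_j^1$, $w_j^2$ separately. The $w_j^2$ coefficient merely reproduces the conjugate of the $w_j^0$ one; the $w_j^1$ coefficient fixes the rotation, $\dot\theta=2\,\mathrm{Im}(F\overline a)$ with $F=\frac{K}{2N}\sum_{k=1}^N\frac{a-w_k}{1-\overline a w_k}$; and substituting this back into the $w_j^0$ coefficient collapses it to $\dot a=(1-|a|^2)F$. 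Recalling that in \eqref{bary_evol} the symbol $g_a$ denotes $\frac{z-a}{1-\overline a z}$, one has $\sum_k g_a(w_k)=-\frac{2N}{K}F$, so $\dot a=(1-|a|^2)F$ is exactly \eqref{bary_evol}; note in particular that the rotation decouples and the position $a(t)$, which alone determines the barycenter, obeys an autonomous law.

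The step I expect to be the main obstacle is precisely this coefficient matching. Counted naively it is overdetermined, three real relations for the single complex unknown $a$, and one would erroneously conclude that no M\"obius ansatz fits unless $f$ and $a$ are parallel. The resolution, and the point requiring care, is to retain the rotational degree of freedom $\theta$: the extra (purely imaginary) relation produced by the $w_j^1$ term is absorbed by $\dot\theta$, after which the position equation simplifies cleanly to \eqref{bary_evol}. A minor technical coda is to check that $a(t)$ remains in $\mathbb{B}^2$ so that \eqref{bary_evol} is solvable, and then to invoke uniqueness of solutions of \eqref{Poin_swarm} to identify the curve reconstructed from $(a(t),\theta(t))$ with the genuine swarm; invariance of the disc under both \eqref{Poin_swarm} and $\mathbb{G}_2$ renders these routine.
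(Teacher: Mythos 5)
Your proof is correct, and it reaches the same two conclusions by a route that differs from the paper's in two substantive ways. For \eqref{Mob_evol}, the paper computes the three infinitesimal generators $v_1=-iz$, $v_2=z^2-1$, $v_3=iz^2+i$ of $\mathbb{G}_2$, observes that the right-hand side of \eqref{Poin_swarm} is a time-dependent linear combination of them (the system \eqref{Riccati}), and invokes the general Lie-theoretic Theorem \ref{Lie_general_th}; you instead linearize the common Riccati field $\dot\zeta=f-\bar f\zeta^2$ to a non-autonomous linear system on $\mathbb{C}^2$ with generator in $\mathfrak{su}(1,1)$ and use that the fundamental solution stays in $SU(1,1)$. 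The two arguments rest on the same principle, but yours is more self-contained (it proves the group-action statement for this particular group rather than citing it) at the cost of introducing the projective lift. For \eqref{bary_evol}, the paper differentiates the ansatz $\zeta_j(t)=\frac{\zeta_j(0)-a(t)}{1-\bar a(t)\zeta_j(0)}$ \emph{without} a rotation factor and reads off three identities, the middle of which, $\dot{\bar a}a-\dot a\bar a=0$, is simply asserted; you keep the angle $\theta(t)$ from \eqref{Mobius}, match the coefficients of $w_j^0,w_j^1,w_j^2$, and show that the purely imaginary $w_j^1$ relation is exactly what determines $\dot\theta=2\,\mathrm{Im}(F\bar a)$, after which the $w_j^0$ relation collapses to $\dot a=(1-|a|^2)F$. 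This is the more careful treatment: the consistency condition the paper asserts holds only once the rotational degree of freedom is accounted for, and your explicit resolution of the apparent overdetermination is precisely the point the paper's derivation glosses over. Both computations land on the same evolution law for $a(t)$, and your closing remarks on disc-invariance and uniqueness of solutions are indeed routine.
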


\begin{proof}

For the proof of \eqref{Mob_evol} we will refer to the following general fact from the Lie group theory (see, for instance \cite{Olver}):

\begin{theorem}
\label{Lie_general_th}
Let $L$ be a Lie group with linearly independent infinitesimal generators $v_1,\dots,v_m$. Let $v = c_1 v_1 + \dots + c_m v_m$ be a linear combination of infinitesimal generators, where coefficients $c_i$ depend on the time only. Then the trajectory satisfying $\dot z = v$ with initial condition $z(0) = z_0$ evolves as $z(t) = A_t(z_0)$ for a unique family of transformations $A_t \in L$ parametrized by $t$.
\end{theorem}

Therefore, in order to prove \eqref{Mob_evol} we need to compute infinitesimal generators of the group $\mathbb{G}_2$. Since this is three-dimensional Lie group, the corresponding Lie algebra is three-dimensional vector space.

The general transformation reads $g(z) = e^{i \psi}\frac{z-a}{1+\bar a z}$. To compute infinitesimal generators, evaluate time derivatives of three one-parameter curves corresponding to the three real parameters of the group: $\psi, \Re(a)$ and $\Im(a)$. (Here and below notations $\Re$ and $\Im$ stand for real and imaginary part of the complex number.) Each of the three families is computed by setting the two parameters to zero and replacing the remaining parameter with the time variable. This yields three one-parameter families:
$$
m_1(z) = -e^{i t} z, \quad m_2(z) = \frac{t-z}{1-tz}, \quad m_3(z) = \frac{it-z}{1+itz}.
$$
Time derivatives of the above curves evaluated at $t=0$ yield infinitesimal generators of the M\" obius group:
$$
v_1 = - i z, \quad v_2 = z^2 -1, \quad v_3 = i z^2 + i.
$$

Then from Theorem \ref{Lie_general_th} it follows that any system of the form:
\begin{equation}
\label{Riccati}
\dot z = - i \omega z + (z^2 -1) h_1 + (iz^2 + i) h_2, \mbox{ for } z=(z_1,\dots,z_k)
\end{equation}
with real-valued functions $\omega,h_1$ and $h_2$ evolves by actions of the M\" obius group.

We further notice that the Poincar\' e swarm \eqref{Poin_swarm} is the particular case of \eqref{Riccati} obtained by setting
$$
\omega \equiv 0, \quad h_1(\zeta_1,\dots,\zeta_N) = \frac{K}{2N} \sum_{k=1}^N \Re (\zeta_k), \quad h_2 (\zeta_1,\dots,\zeta_N) = \frac{K}{2N} \sum_{k=1}^N \Im (\zeta_k).
$$
In other words, the right hand side of \eqref{Poin_swarm} is the linear combination of generators of the group $\mathbb{G}_2$. Hence, \eqref{Mob_evol} follows from Theorem \ref{Lie_general_th}.

For the proof of \eqref{bary_evol} differentiate the equality
\begin{equation}
\label{zeta_j}
\zeta_j(t) = \frac{\zeta_j(0)-a(t)}{1-\bar a(t) \zeta_j(0)}
\end{equation}
to obtain
\begin{eqnarray*}
\dot \zeta_j(t) &=& - \frac{\dot a(t)}{1-\bar a(t) \zeta_j(0)} + \dot {\overline a}(t) \zeta_j(0) \frac{\zeta_j(0)-a(t)}{(1-\bar a(t) \zeta_j(0))^2} \nonumber\\
&=&- \frac{\dot a(t)}{\zeta_j(0) - a(t)} \zeta_j(t) + \frac{\dot{\overline a}(t) \zeta_j(0)}{\zeta_j(0)- a(t)} \zeta_j(t)^2 \nonumber\\
&=&-\frac{\dot a(t)}{\zeta_j(0) - a(t)} \zeta_j(t) + \dot{\overline a}(t) \left( 1 + \frac{a(t)}{\zeta_j(0) - a(t)} \right) \zeta_j(t)^2.
\end{eqnarray*}

Further, rearrange \eqref{zeta_j} to get
$$
\frac{1}{\zeta_j(0)-a(t)} = \frac{1 - \bar a(t) \zeta_j(t)}{\zeta_j(t)(1-|a(t)|^2)}.
$$
By substituting this equality in the above ODE we get
$$
\dot \zeta_j(t) = - \frac{\dot a(t)}{1 - |a(t)|^2} + \frac{\dot{\overline a}(t) a(t) - \dot a(t) \bar a(t)}{1 - |a(t)|^2} \zeta_j(t) + \frac{\dot{\overline a}(t)}{1-|a(t)|^2} \zeta_j(t)^2.
$$
Comparison of the last ODE with \eqref{Poin_swarm} yields
$$
- \frac{\dot a(t)}{1-|a(t)|^2} = \frac{K}{2N} \sum_{k=1}^N \zeta_k(t), \; \frac{\dot{\overline a}(t)a(t)-\dot a(t) \bar a(t)}{1-|a(t)|^2} = 0, \; \frac{\dot{\overline a}(t)}{1-|a(t)|^2} = - \frac{K}{2N} \sum_{k=1}^N \bar \zeta_k(t).
$$
From the last three equalities it follows that
$$
\dot a(t) = - \frac{K}{2N}(1-|a(t)|^2) \sum_{k=1}^N \zeta_k(t).
$$
By substituting \eqref{Mob_evol} into the above ODE we obtain \eqref{bary_evol}. This completes the proof.
\end{proof}

\begin{theorem}
\label{hyp_flow_th}
ODE \eqref{bary_evol} with initial conditions $\zeta_j(0)=z_j, \; j=1,\dots,N$ is the gradient flow for the potential \eqref{potential_disc} in hyperbolic metric.
\end{theorem}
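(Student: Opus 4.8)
The plan is to verify directly that the right-hand side of \eqref{bary_evol} is a constant multiple of the negative hyperbolic gradient of the potential \eqref{potential_disc}. Since the data $z_j=\zeta_j(0)$ are fixed, \eqref{bary_evol} is an autonomous ODE for $a$, so it suffices to compute $\mathrm{grad}_{\mathrm{hyp}}H$ and compare. First I would rewrite \eqref{potential_disc} in a form convenient for differentiation: the disc version of \eqref{jaka} gives $1-|g_a(z)|^2=\frac{(1-|a|^2)(1-|z|^2)}{|1-\bar a z|^2}$, so each summand equals $-\log\bigl(1-|g_a(z_i)|^2\bigr)$ and, discarding the $a$-independent constant $-\sum_i\log(1-|z_i|^2)$,
\[
H(a)=-N\log(1-|a|^2)+\sum_{i=1}^N\log|1-\bar a z_i|^2+\mathrm{const}.
\]

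Next I would compute the Euclidean gradient with Wirtinger calculus. Identifying the gradient of a real function on $\mathbb{C}\cong\mathbb{R}^2$ with the complex number $\partial_x H+i\,\partial_y H=2\,\partial_{\bar a}H$, and using $\partial_{\bar a}\log(1-|a|^2)=-a/(1-|a|^2)$ and $\partial_{\bar a}\log|1-\bar a z_i|^2=-z_i/(1-\bar a z_i)$, I obtain
\[
\mathrm{grad}_{E}H=\frac{2Na}{1-|a|^2}-2\sum_{i=1}^N\frac{z_i}{1-\bar a z_i}.
\]
Because the disc metric is conformal, $g_a(u,v)=(1-|a|^2)^{-2}\langle u,v\rangle$, the inverse metric multiplies the Euclidean gradient by $(1-|a|^2)^2$, so $\mathrm{grad}_{\mathrm{hyp}}H=(1-|a|^2)^2\,\mathrm{grad}_E H$.

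The decisive step is the algebraic collapse. Writing $-2Na(1-|a|^2)=\sum_i[-2a(1-|a|^2)]$ and pairing terms, the per-summand identity
\[
-a+(1-|a|^2)\frac{z_i}{1-\bar a z_i}=\frac{-a(1-\bar a z_i)+(1-|a|^2)z_i}{1-\bar a z_i}=\frac{z_i-a}{1-\bar a z_i}=g_a(z_i)
\]
holds because the $|a|^2 z_i$ terms cancel. Summing gives $-\mathrm{grad}_{\mathrm{hyp}}H=2(1-|a|^2)\sum_{i=1}^N g_a(z_i)$, which is exactly the right-hand side of \eqref{bary_evol} up to the scalar $K/(4N)$. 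Hence \eqref{bary_evol} reads $\dot a=\tfrac{K}{4N}\,\mathrm{grad}_{\mathrm{hyp}}H$, i.e.\ it is the hyperbolic gradient flow of $H$ (a gradient descent driving $a$ to the barycenter for the appropriate sign of $K$), the constant being merely a time rescaling.

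I expect the main obstacle to be bookkeeping rather than anything conceptual: obtaining the correct power of $(1-|a|^2)$ from the conformal factor of the metric, and not losing a factor in the Wirtinger identification of the complex and the real gradient. The only nontrivial algebra is the cancellation above; everything else is a direct verification, so the points to watch are the signs and the multiplicative constant $K/(4N)$, which must be tracked to confirm that the flow descends toward the minimum of \eqref{potential_disc}.
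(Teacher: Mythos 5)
Your proof is correct, and it runs the paper's computation in the opposite direction. The paper first establishes an auxiliary lemma giving the integrability condition \eqref{hyp_grad} for a vector field on the disc to be a hyperbolic gradient, asserts (without showing the computation) that \eqref{bary_evol} satisfies it, and then \emph{antidifferentiates}: it reads off $\partial H/\partial\bar a$ from \eqref{bary_evol} via \eqref{hyp_grad1}, integrates in $\bar a$, and chooses the integration "constant" $g(a)$ to make the result real and equal to \eqref{potential_disc}. You instead \emph{differentiate} the stated potential and match the result to the right-hand side of \eqref{bary_evol}; the key Wirtinger identity $2\partial_{\bar a}H=\frac{2Na}{1-|a|^2}-2\sum_i\frac{z_i}{1-\bar a z_i}$ and the algebraic collapse to $\sum_i g_a(z_i)$ are the same in both arguments. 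Your direction is the more self-contained one: it needs neither the existence lemma nor the choice of integration constant, and it verifies the claim exactly as stated rather than reconstructing the potential. What it gives up is the derivation aspect (the paper's route explains where \eqref{potential_disc} comes from) and the general criterion \eqref{hyp_grad}, which the paper reuses implicitly elsewhere. Your care with the overall scalar is warranted: the paper's \eqref{hyp_grad1} carries a factor $\tfrac14$ coming from the curvature normalization $\phi=2/(1-|w|^2)$, which is not the metric actually defined in Section 2, and the potential the paper reconstructs carries a $\tfrac1N$ prefactor absent from \eqref{potential_disc}; as you note, these are positive time rescalings and do not affect the gradient-flow property, with the sign of $K$ deciding descent versus ascent.
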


Before proving this Theorem we first verify the auxiliary

\begin{lemma}
The ODE $\dot w = f(w) = U(u,v) + iV(u,v)$ in the unit disc is the gradient flow in hyperbolic metric if and only if $f$ satisfies the following condition
\begin{equation}
\label{hyp_grad}
\Im \left( \frac{\partial}{\partial w} [(1-|w|^2)^{-2} f(w)] \right) = 0.
\end{equation}
\end{lemma}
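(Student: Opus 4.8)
The plan is to reduce this geometric statement to the classical condition that a planar vector field be conservative, and then to translate that condition into Wirtinger-derivative form. I would start from the fact that $\dot w = f(w)$ is the gradient flow of a potential $H$ in a Riemannian metric $g$ exactly when $f = -\nabla_g H$. Since the hyperbolic metric on the disc is conformal to the Euclidean one, with $g_w(v,v) = (1-|w|^2)^{-2}|v|^2_{Eucl}$, the inverse metric rescales the Euclidean gradient by $(1-|w|^2)^2$, so that, identifying tangent vectors with complex numbers, $\nabla_g H = (1-|w|^2)^2\,\nabla_{Eucl}H$. Hence $\dot w = f(w)$ is a hyperbolic gradient flow if and only if there exists a real-valued $H$ with $(1-|w|^2)^{-2}f(w) = -\nabla_{Eucl}H$.

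Next I would set $F(w) := (1-|w|^2)^{-2}f(w) = P + iQ$ and observe that $F = -\nabla_{Eucl}H$ means exactly $P = -\partial_u H$ and $Q = -\partial_v H$, i.e. that the planar field $(P,Q)$ is minus a gradient field. Because the disc is simply connected, the Poincar\'e lemma guarantees that such an $H$ exists if and only if the field is curl-free, $\partial_u Q - \partial_v P = 0$. It then remains to match this with the stated condition: using the Wirtinger derivative $\partial_w = \tfrac12(\partial_u - i\partial_v)$, a short computation gives $\Im(\partial_w F) = \tfrac12(\partial_u Q - \partial_v P)$, so that $\Im(\partial_w F) = 0$ is precisely the integrability condition. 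Chaining these equivalences proves the lemma in both directions.

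The difficulty here is bookkeeping rather than depth. The one genuinely essential input is the simple connectedness of the disc, which is what upgrades the \emph{necessary} curl-free condition into a \emph{sufficient} one through the Poincar\'e lemma. The two places requiring care are the conformal rescaling — the inverse metric contributes the factor $(1-|w|^2)^2$, which is exactly what produces the weight $(1-|w|^2)^{-2}$ appearing inside the stated condition — and the Wirtinger algebra, where one must verify that $\Im(\partial_w F)$ reproduces the curl $\partial_u Q - \partial_v P$ with the correct sign rather than its negative.
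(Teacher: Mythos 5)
Your proposal is correct and follows essentially the same route as the paper: rescale by the conformal factor to reduce the hyperbolic gradient condition to the Euclidean one, then express the curl-free (exactness) condition for $(1-|w|^2)^{-2}f$ in Wirtinger form. If anything, your version is slightly more complete, since you explicitly invoke simple connectedness of the disc (the Poincar\'e lemma) to get the ``if'' direction, a point the paper's proof leaves implicit.
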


\begin{proof}

Let $\triangle$ be unit disc in the complex plane and $w = u + i v \in \triangle$. Let $h$ be a smooth function on $\triangle$. Using the formalism of Wirtinger derivatives, partial derivatives of $h$ are defined as
$$
\frac{\partial h}{\partial w} = \frac{1}{2} \left( \frac{\partial h}{\partial u} - i \frac{\partial h}{\partial v} \right) \mbox{  and  } \frac{\partial h}{\partial {\bar w}} = \frac{1}{2} \left( \frac{\partial h}{\partial u} + i \frac{\partial h}{\partial v} \right).
$$
Then the Euclidean gradient of a real function $h$ can be written in the complex form as
$$
\nabla_{Eucl} h = 2 \frac{\partial h}{\partial {\bar w}}.
$$
Let $ds$ be standard Euclidean metric in $\mathbb{R}^2$. Then the gradient with respect to a conformal metric $\phi ds$ in $\triangle$ is given by $\phi^{-2} \nabla_{Eucl} h.$ In particular, hyperbolic gradient in $\triangle$ reads
\begin{equation}
\label{hyp_grad1}
\nabla_{hyp} h = \frac{1}{4}(1-|w|^2)^2 \nabla_{Eucl} h = \frac{1}{2} (1-|w|^2)^2 \frac{\partial h}{\partial {\bar w}}.
\end{equation}
Let $f(w) = U + i V$. Then
$$
\frac{df}{dw} = \frac{1}{2} \left[\frac{\partial U}{\partial u} + \frac{\partial V}{\partial v} + i \left(\frac{\partial V}{\partial u} - \frac{\partial U}{\partial v} \right) \right]
$$
and ODE $\dot w = f(w)$ is the Euclidean gradient flow if
$$
\Im \left( \frac{\partial f}{\partial w} \right) = 0.
$$
Accordingly, $\dot w = f(w)$ is the gradient flow in hyperbolic metric if condition \eqref{hyp_grad} holds.
\end{proof}

Now, suppose that the system $\dot w = f(w)$ satisfies the condition \eqref{hyp_grad}. Then there exists the function $h$ on $\triangle$, such that $f = \nabla_{hyp} h$. Then, along trajectories of $\dot w = f(w)$, the function $h$ satisfies the following condition:
\begin{equation}
\label{potent_cond}
\frac{d}{dt} h(w(t)) = |\nabla_{hyp} h(w)|^2_{hyp} = (1-|w|^2)^2 \left| \frac{\partial h}{\partial \bar w} \right|^2.
\end{equation}

We pass to the proof of Theorem \ref{hyp_flow_th}.

\begin{proof}
Let $z_1,\dots,z_N$ be points in $\mathbb{B}^2$ and consider the ODE \eqref{bary_evol}.
It is straightforward to verify that the hyperbolic gradient condition \eqref{hyp_grad} holds for this ODE.

We further determine the potential function for (\ref{bary_evol}). Denote this function by $H(a)$. Comparing \eqref{bary_evol} and \eqref{hyp_grad1} we find that

\begin{eqnarray*}
\frac{\partial H}{\partial \bar a} &=& - (1-|a|^2)^{-1} \frac{1}{N} \sum \limits_{j=1}^N g_a(z_j) = -\frac{1}{N} \frac{1}{1-\bar a a} \sum \limits_{j=1}^N \frac{z_j - a}{1-\bar a z_j} \\
&=& \frac{1}{N} \sum \limits_{j=1}^N \left( \frac{a}{1-\bar w w} - \frac{z_j}{1 - \bar a z_j} \right).
\end{eqnarray*}

Integrating this expression with respect to $\bar a$ (and treating $a$ as a constant) we get:
\begin{equation}
\label{H1}
H(a) = \frac{1}{N} \sum \limits_{j=1}^N \log \left( \frac{1-z_j \bar a}{1 - a \bar a} \right) + g(a).
\end{equation}
Here, $g(a)$ is an integration constant. We can choose it in such a way to make $H$ real and bounded. To that aim set:
$$
g(a) = \frac{1}{N} \sum \limits_{j=1}^N \log \left( \frac{1 - \bar z_j a}{1 - \bar z_j z_j} \right).
$$
Plugging this into (\ref{H1}) yields:
$$
H(a) = - \frac{1}{N} \sum \limits_{j=1}^N \log \frac{(1-|a|^2)(1-|z_j|^2)}{|1-z_j \bar a|^2}.
$$
This completes the proof.
\end{proof}

Theorems \ref{Mob_evol_th} and \ref{hyp_flow_th} imply the hyperbolic gradient descent algorithm for minimization of \eqref{potential_disc}. Given the configuration $\{z_1,\dots,z_N\}$ consider the Poincar\' e swarm \eqref{Poin_swarm} with $K<0$ and initial conditions
$$
\zeta_1(0) = z_1,\dots,\zeta_N(0)=z_N.
$$
Then, for sufficiently large $T$, the configuration $\{ \zeta_1(T),\dots,\zeta_N(T)\}$ is approximately balanced and there exists $g_a$, such that $g_a(z_j) = \zeta_j(T)$ for $j=1,\dots,N$. Then $g_a^{-1}(0)$ is conformal barycenter of the configuration $\{ z_1,\dots,z_N\}$.

In Figure \ref{fig2} we plot conformal barycenters for several configurations in $\mathbb{B}^2$, computed via Poincar\' e swarming dynamics \eqref{Poin_swarm}.

\begin{figure*}[h]
\centering
  \begin{tabular}{@{}ccc@{}}
    \includegraphics[width=.25\textwidth]{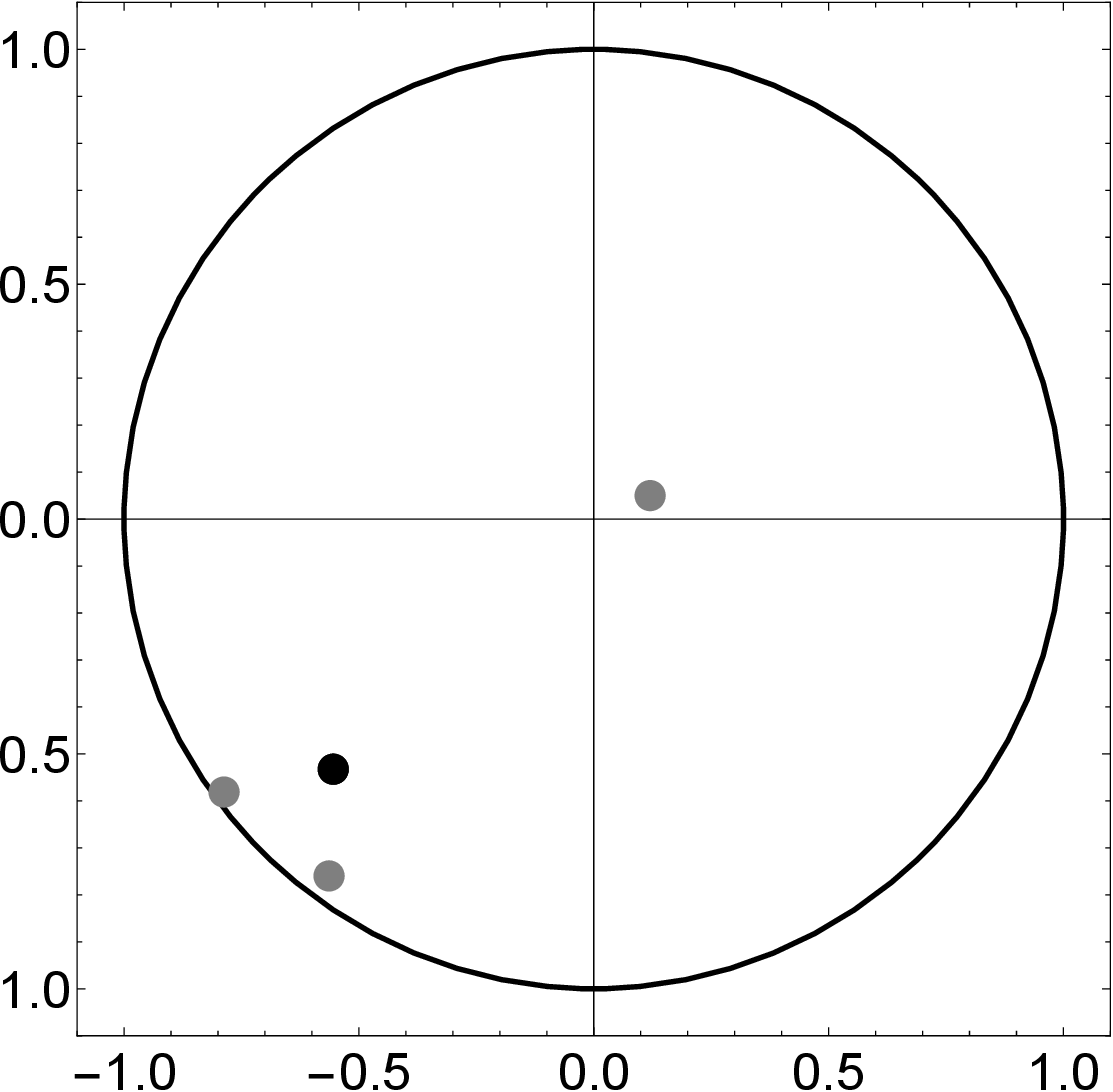}&\includegraphics[width=.25\textwidth]{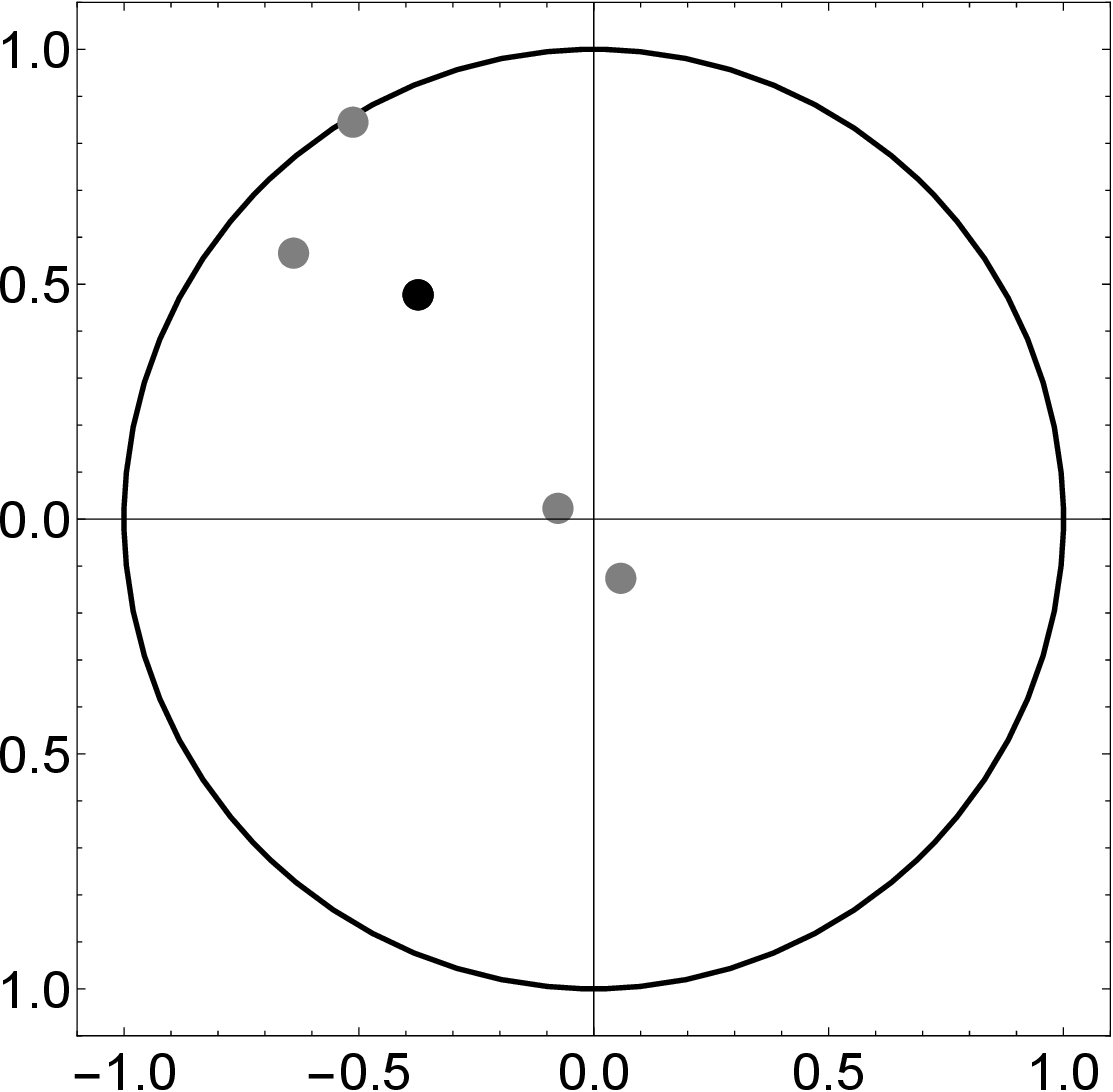}&\includegraphics[width=.25\textwidth]{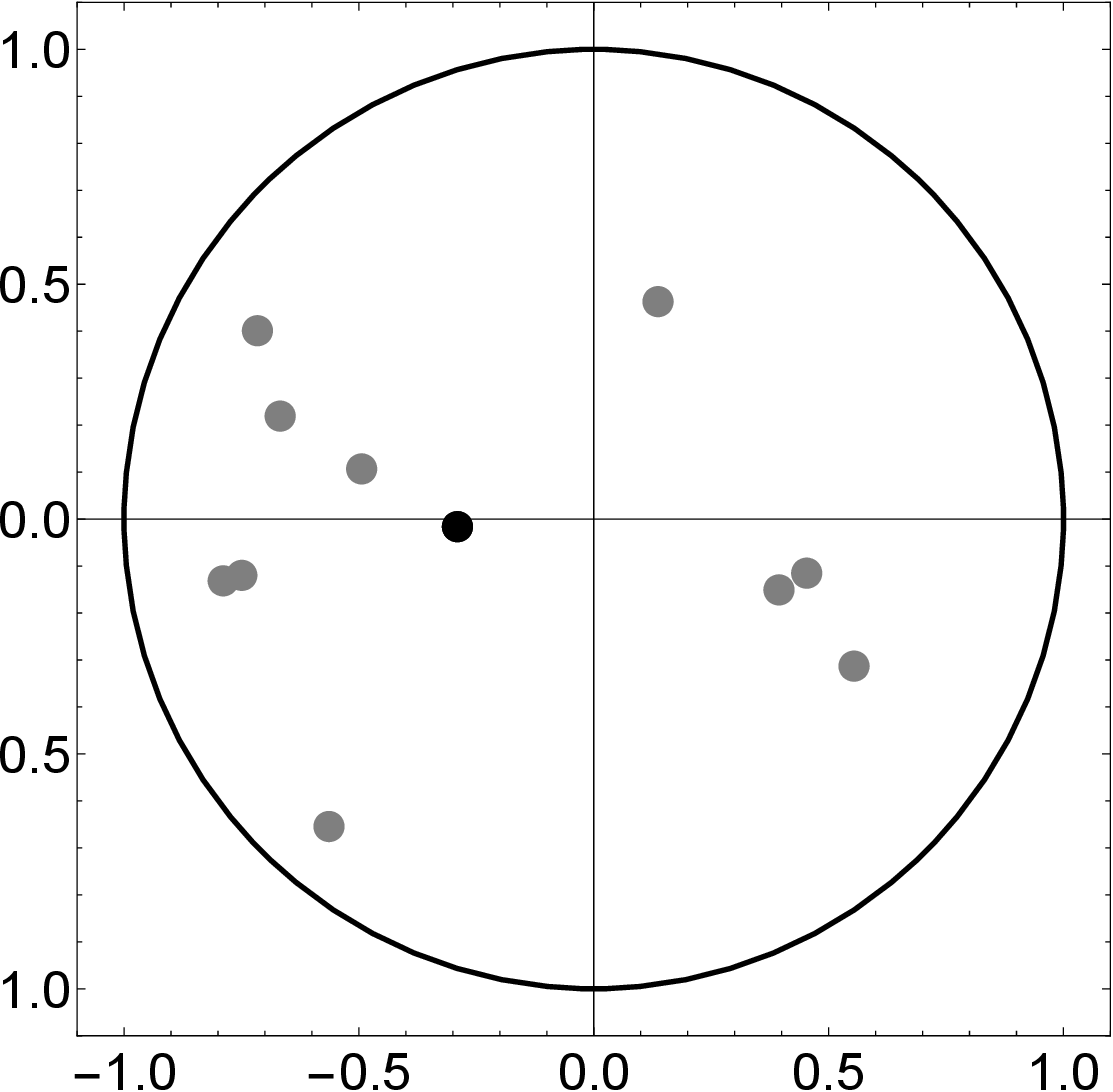}\\
    a)&b)&c)\\
  \end{tabular}
  \caption{\label{fig2}Barycenters in the Poincar\' e disc.}
\end{figure*}

\begin{remark}
Results exposed in this Section are to a great extent inspired by geometric investigations of Kuramoto ensembles of coupled phase oscillators \cite{Kuramoto,MMS}. Phase oscillators are described by their phases (angles) only, while amplitudes are neglected. Therefore, Kuramoto model with $N$ oscillators can be treated as dynamical system on the $N$-torus $\mathbb{S}^1 \times \cdots \times \mathbb{S}^1$. In the continuum limit this model can be considered as a dynamical system on the space of probability measures on the circle.

It has been demonstrated in \cite{MMS} that oscillators in the Kuramoto model with global coupling evolve by actions of the M\" obius group. This result unveiled the group-theoretic background of the low-dimensional dynamical system for global variables in such models previously derived in \cite{WS}. More recently, it has been shown in \cite{CEM} that the dynamics of Kuramoto oscillators on the circle induce hyperbolic gradient flows in the disc. In particular, Kuramoto models with global negative coupling compute conformal barycenter of a measure on the circle by minimizing the Buseman function (see \cite{CEM,Jacimovic} for this result and the classical paper \cite{DE} where the notion of conformal barycenter has been introduced). Finally, the relation between hyperbolic geometry in balls and generalized Kuramoto models on spheres has been exposed in \cite{LMS}. Poincar\' e swarms \eqref{Poin_swarm} introduced in this Section compute conformal barycenters of configurations in $\mathbb{B}^2$ in a similar way as Kuramoto ensembles compute conformal barycenters of configurations on the unit circle $\mathbb{S}^1$. Notice that swarms in the hyperbolic disc (and, more generally, on homogeneous spaces) have previously appeared as gradient systems in studies on geometric consensus theory, see \cite{Sepulchre}.

All the results mentioned in the present remark have been obtained in studies on physics of complex systems (more precisely, studies of the Kuramoto model and its higher-dimensional extensions). In the view of new challenges and recent developments in data representations, these results can be significant for ML in spherical and hyperbolic spaces.
\end{remark}

\section{Statistical modeling in the Poincar\' e disc}
\label{Moeb_Poin_disc}

One of current limitations of hyperbolic ML is the lack of adequate statistical models. So far, majority of experiments \cite{Poleksic,CYPVLL,NYFK} exploited so-called "generalized normal distribution on hyperbolic spaces" for encoding uncertainties. However, inadequacy of this model seems to strongly affect the efficiency of hyperbolic ML algorithms, especially in high dimensions. In the present Section we answer the questions $iii)$ and $iv)$ from Introduction for the case when $X$ is the Poincar\' e disc. The answers are based on a novel family of probability distributions in the Poincar\' e disc, which posses many desirable properties, including conformal invariance, easy random variate generation and efficient estimation of parameters.

\subsection{The M\" obius family of probability distributions in the Poincar\' e disc}

We consider probability distributions on $\mathbb{B}^2$ defined by densities of the form (see \cite{JacMark})
\begin{equation}
\label{conf-nat-disc}
p(x;a,s) = \frac{s-1}{\pi} \left( \frac{(1-|z|^2)(1-|a|^2)}{|1-\bar a z|^2} \right)^s.
\end{equation}
with parameters $a \in {\mathbb B}^2$ and $s>1$. As will be clear from the exposition below, parameters $a$ and $s$ have a transparent meaning. Point $a$ is the conformal barycenter of the measure, as defined in \cite{JacKal}. It coincides with the Riemannian center of mass of the measure and should be treated as the mathematical expectation of the distribution \eqref{conf-nat-disc}. On the other hand, $s$ is the concentration parameter. For higher values of $s$ the samples are more concentrated around the mean point $a$, as illustrated in Figure \ref{fig3} below.

Notice that all densities \eqref{conf-nat-disc} are unimodal and symmetric in hyperbolic metric.\footnote{We mention that analogous family of probability measures over the Euclidean ball has been introduced in \cite{Jones} and named the M\" obius distributions by the author (due to their invariance w. r. to actions of the group of M\" obius transformations). These distributions are asymmetric (skewed) in Euclidean metric.} We denote this family by $Moeb_2(a,s)$ and refer to these distributions as {\it M\" obius distributions in hyperbolic disc.}

Sub-families $Moeb_2(a,s^*)$ with fixed $s^*$ are conformally invariant. More precisely
\begin{proposition}
\label{conf_invariance}
If $\mu \sim Moeb_2(a,s^*)$ then $g_* \mu \sim Moeb_2(g(a),s^*)$ for any $g \in \mathbb{G}_2$, where $g_*$ denotes the pullback measure defined by \eqref{pull_back}.
\end{proposition}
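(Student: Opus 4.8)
The plan is to compute the density of the pullback measure $g_*\mu$ directly and show it has the required form $Moeb_2(g(a),s^*)$. The key tool is the change-of-variables formula for the Möbius transformations, combined with the cocycle-type identity that controls how the conformal factor $(1-|z|^2)$ transforms under $\mathbb{G}_2$.

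First, I would fix $\mu \sim Moeb_2(a,s^*)$ with density $p(z;a,s^*)$ given by \eqref{conf-nat-disc}, taken with respect to the Lebesgue measure $d\lambda(z)$. For $g \in \mathbb{G}_2$, the pullback measure defined by \eqref{pull_back} has density obtained by substituting $w = g^{-1}(z)$ and multiplying by the Jacobian of $g^{-1}$. Since elements of $\mathbb{G}_2$ are holomorphic (Möbius) maps, the real Jacobian of $w \mapsto g(w)$ equals $|g'(w)|^2$, so the density of $g_*\mu$ at a point $z$ is
\begin{equation*}
(g_*\mu)'(z) = p\bigl(g^{-1}(z);\,a,s^*\bigr)\,\bigl|(g^{-1})'(z)\bigr|^2.
\end{equation*}
The task reduces to showing this equals $p(z;g(a),s^*)$.

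The main step is an algebraic identity: for $g \in \mathbb{G}_2$ one has the conformal transformation rule
\begin{equation*}
\frac{1-|g(w)|^2}{|1-\overline{g(a)}\,g(w)|^2} \;=\; \frac{(1-|w|^2)}{|1-\bar a w|^2}\cdot C(g),
\end{equation*}
where $C(g)$ is a factor depending only on $g$ (not on $w$), and an analogous rule for the Jacobian factor $|g'(w)|^2 = (1-|g(w)|^2)^2 / (1-|w|^2)^2$, which is exactly formula \eqref{jaka}–\eqref{jaco} specialized to $d=2$. I would evaluate $p(g^{-1}(z);a,s^*)$ using these identities: the conformal factor raised to the power $s^*$ produces precisely the factor $\bigl((1-|z|^2)(1-|g(a)|^2)/|1-\overline{g(a)}z|^2\bigr)^{s^*}$ up to a multiplicative constant, and the Jacobian term $|(g^{-1})'(z)|^2$ supplies exactly the constant needed to cancel the leftover powers of $C(g)$. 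The normalizing constant $(s^*-1)/\pi$ is then automatically correct because both the source and target densities integrate to one, so no constant can be left dangling.

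The step I expect to be the main obstacle is verifying that all the $g$-dependent constants cancel cleanly — that is, that the power $s^*$ on the conformal factor and the power $2$ on the Jacobian conspire so that the surviving constant is exactly $1$. This is where the specific exponent in the density \eqref{conf-nat-disc} matters: the density is engineered so that the conformal factor and the Möbius Jacobian transform compatibly. Rather than tracking every constant by hand, a cleaner route is to rewrite everything in terms of the hyperbolic measure \eqref{hyp_meas_disc}, under which $\mathbb{G}_2$ acts by isometries and hence preserves $d\Lambda$; then the density of $\mu$ relative to $d\Lambda$ is $\bigl((1-|z|^2)(1-|a|^2)/|1-\bar a z|^2\bigr)^{s^*-2}$ times a constant, and invariance of $d\Lambda$ reduces the whole claim to the single conformal identity \eqref{jaka}, eliminating the Jacobian bookkeeping entirely. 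I would carry the argument out in this second form to keep it transparent.
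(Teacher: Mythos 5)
The paper itself gives no proof of this proposition; it is stated as a property of the family imported from \cite{JacMark}, so there is nothing internal to compare against. Your second, ``cleaner'' route is the right one and is essentially the standard argument: since every $g\in\mathbb{G}_2$ is an isometry of the Poincar\'e disc, it preserves $d\Lambda$, so the density of $g_*\mu$ relative to $d\Lambda$ is simply $p(g^{-1}(\cdot);a,s^*)$, and the whole claim reduces to the identity $1-|g_a(g^{-1}(w))|^2 = 1-|g_{g(a)}(w)|^2$. Two corrections are needed to make this airtight. First, your reference-measure bookkeeping is inverted: by \eqref{prob_small_disc} the density \eqref{conf-nat-disc} is taken with respect to the hyperbolic measure $d\Lambda$, not Lebesgue measure, so the density relative to $d\Lambda$ carries the exponent $s^*$, not $s^*-2$ (and the Lebesgue density is $p(z;a,s^*)(1-|z|^2)^{-2}$). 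This does not break the argument --- the invariance of $d\Lambda$ makes it exponent-agnostic --- but as written the exponents are wrong.

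Second, you assert the key ``conformal transformation rule'' without proving it, and \eqref{jaka} alone does not deliver it. The actual content is the composition law: $g_a\circ g^{-1}$ is a disc automorphism sending $g(a)$ to $0$, and any such automorphism equals $e^{i\theta}g_{g(a)}$ for some rotation angle $\theta$ (this is the disc analogue of \eqref{automob}); hence $|g_a(g^{-1}(w))|=|g_{g(a)}(w)|$ for all $w$, and combining with \eqref{jaka} gives exactly your displayed identity with $C(g)=(1-|a|^2)/(1-|g(a)|^2)$ (which, note, depends on $a$ as well as $g$). With that one-line lemma inserted and the measure convention fixed, the proof is complete; the normalizing constant indeed takes care of itself since both densities integrate to one against $d\Lambda$.
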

Moreover, the group $\mathbb{G}_2$ acts transitively on $Moeb_2(a,s^*)$.
\begin{proposition}
Let $\mu_1 \sim Moeb_2(a_1,s^*), \, \mu_2 \sim Moeb_2(a_2,s^*)$. Then there exists a M\" obius transformation $g \in \mathbb{G}_2$, such that $g_* \mu_1 = \mu_2$ and $g(a_1)=a_2.$
\end{proposition}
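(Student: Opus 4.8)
The plan is to obtain this transitivity statement as a direct consequence of the conformal invariance in Proposition~\ref{conf_invariance}, supplemented by the transitive action of $\mathbb{G}_2$ on the disc and the faithfulness of the parametrization $(a,s) \mapsto Moeb_2(a,s)$.

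First I would produce the required transformation $g$ explicitly. Recall that, setting $\theta = 0$ in \eqref{Mobius}, the map $g_a$ sends its parameter to the origin: $g_a(a) = (a-a)/(1-\bar a a) = 0$, and hence $g_a^{-1}(0) = a$. Defining $g := g_{a_2}^{-1} \circ g_{a_1}$, which lies in $\mathbb{G}_2$ because $\mathbb{G}_2$ is a group, we obtain $g(a_1) = g_{a_2}^{-1}(g_{a_1}(a_1)) = g_{a_2}^{-1}(0) = a_2$. This already secures the second assertion $g(a_1) = a_2$ and exhibits the candidate for the first. In particular, this records the fact that $\mathbb{G}_2$ acts transitively on $\mathbb{B}^2$, which is the geometric input behind the proposition.

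Next I would invoke Proposition~\ref{conf_invariance} for this particular $g$. Since $\mu_1 \sim Moeb_2(a_1, s^*)$, conformal invariance gives $g_* \mu_1 \sim Moeb_2(g(a_1), s^*) = Moeb_2(a_2, s^*)$. Thus both $g_* \mu_1$ and $\mu_2$ are members of the sub-family $Moeb_2(\cdot, s^*)$ with the common center $a_2$.

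The remaining, and only genuinely new, point is to deduce equality of the two measures from equality of their parameters. This follows by inspecting the density \eqref{conf-nat-disc}: once $s = s^*$ is fixed, $p(\cdot\,; a, s^*)$ is completely determined by $a$, so distinct centers give distinct densities and the parametrization is injective. Consequently $g_* \mu_1 = \mu_2$. I expect no substantial obstacle; the only step deserving explicit mention is this faithfulness of the parametrization, which is what upgrades ``same sub-family and same center'' to genuine equality of probability measures.
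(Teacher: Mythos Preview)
Your proposal is correct and matches the paper's implicit reasoning: the paper states this proposition without proof, presenting it as an immediate consequence of Proposition~\ref{conf_invariance} together with the transitivity of $\mathbb{G}_2$ on $\mathbb{B}^2$. Your explicit construction $g = g_{a_2}^{-1}\circ g_{a_1}$ and the observation that the parametrization $a\mapsto Moeb_2(a,s^*)$ is injective simply spell out those two ingredients.
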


\subsection{Generation of random points in the Poincar\' e disc}

Another advantage of the family $Moeb_2(a,s)$ is the simplicity of generation of the random sample.

$i)$ We first show how to sample a random point from the M\" obius distribution with $a=0$ and arbitrary $s$.

We start by calculating the probability that a random point $z \sim Moeb_2(a=0,s)$ belongs to the smaller ball $\mathbb{B}^2_b$ of radius $b<1$:
\begin{eqnarray}
\label{prob_small_disc}
{\mathbb P}\{z \in \mathbb{B}^2_b\} &=& {\mathbb P}\{|z| < b\} = \int p(z;0,s) d \Lambda(z) \nonumber \\
&=& \frac{s-1}{\pi} \int_{0<|z|<b} (1-|z|^2)^s \frac{d \lambda(z)}{(1-|z|^2)^2} = \frac{s-1}{\pi} \int_0^b (1-|z|^2)^{s-2} d \lambda(z) \\
&=& 1 - (1 - \sqrt{b})^{s-1}.\nonumber
\end{eqnarray}
Recall that $d \lambda(z)$ and $d \Lambda(z)$ denote standard Lebesgue and hyperbolic measure in $\mathbb{B}^2$.

Let $z = |z| e^{i \varphi} \sim Moeb_2(0,s)$. Due to rotational invariance of the distribution, the angle $\varphi$ is uniformly distributed on $[0,2\pi]$. On the other hand, the probability distribution function for $|z|$ is given by \eqref{prob_small_disc} and one can generate $|z|$ by inverting \eqref{prob_small_disc}. To that aim, sample a random number $\kappa$ from the uniform distribution on $[0,1]$ and set
\begin{equation}
\label{generate_r}
|z| = \left( 1 - \sqrt[s-1]{1-\kappa} \right)^2.
\end{equation}

$ii)$ Now, due to the conformal-invariance property stated in Proposition \ref{conf_invariance} we can transform a random point $z_1 \sim Moeb_2(0,s)$ to $z_2 \sim Moeb_2(a,s)$ with arbitrary $a \in \mathbb{B}^2$ by acting on $z_1$ with M\" obius transformation $g_a$, such that $g_a(0) = a$.

Random samples from $Moeb_2(a,s)$ for several values of $a$ and $s$ are depicted in Figure \ref{fig3}.

\begin{figure*}[h]
\centering
  \begin{tabular}{@{}ccc@{}}
    \includegraphics[width=.25\textwidth]{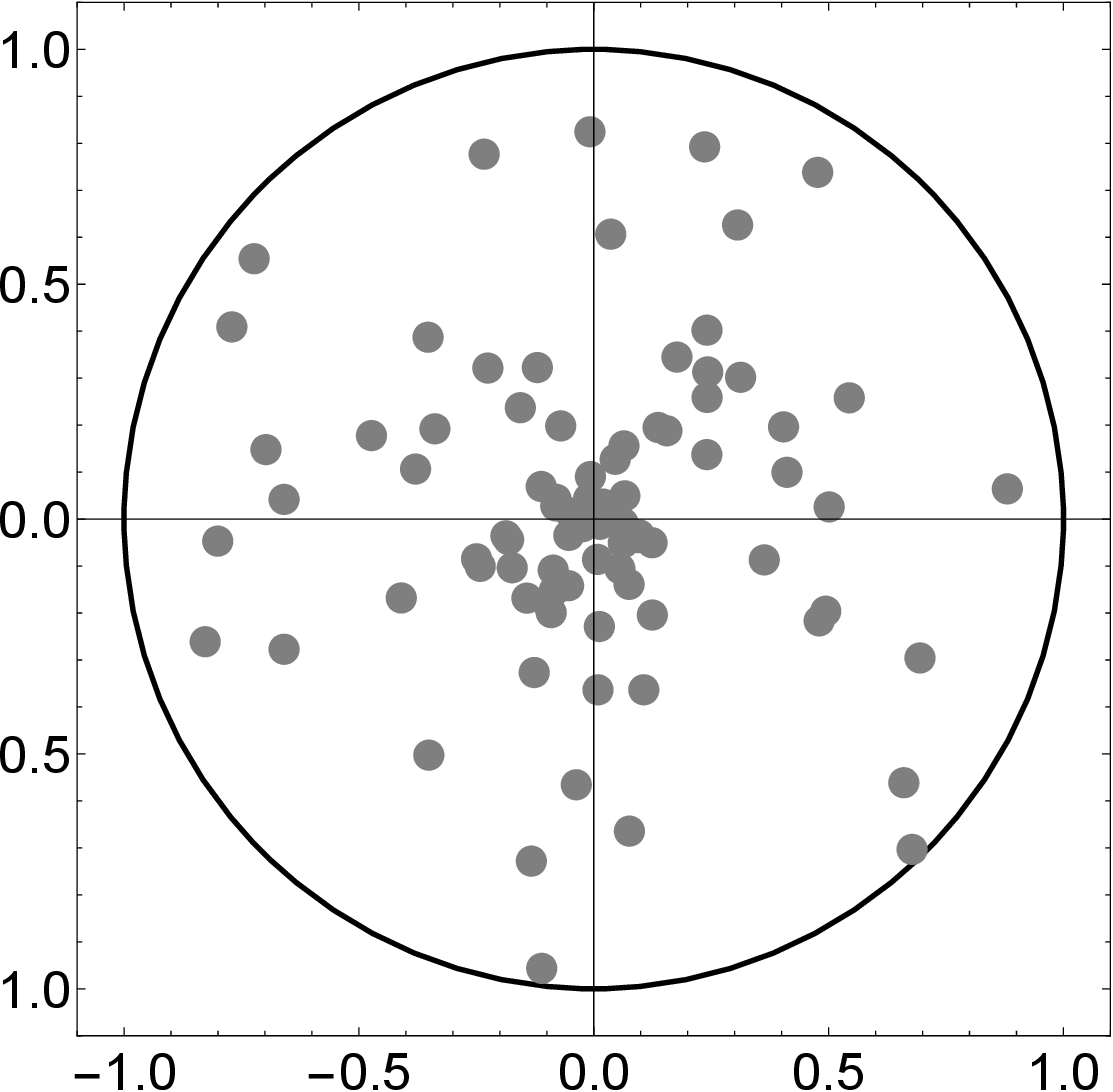}&\includegraphics[width=.25\textwidth]{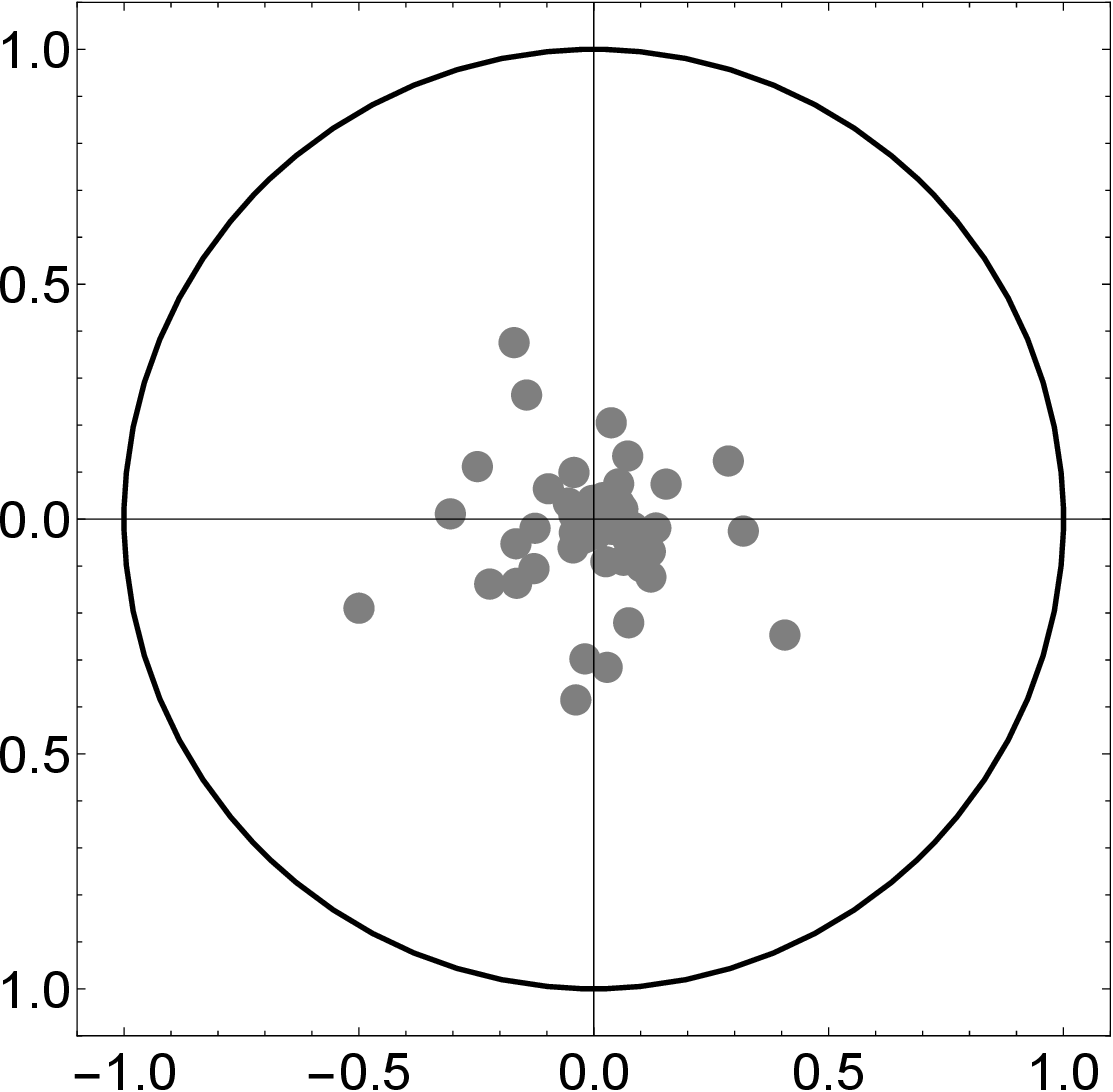}&\includegraphics[width=.25\textwidth]{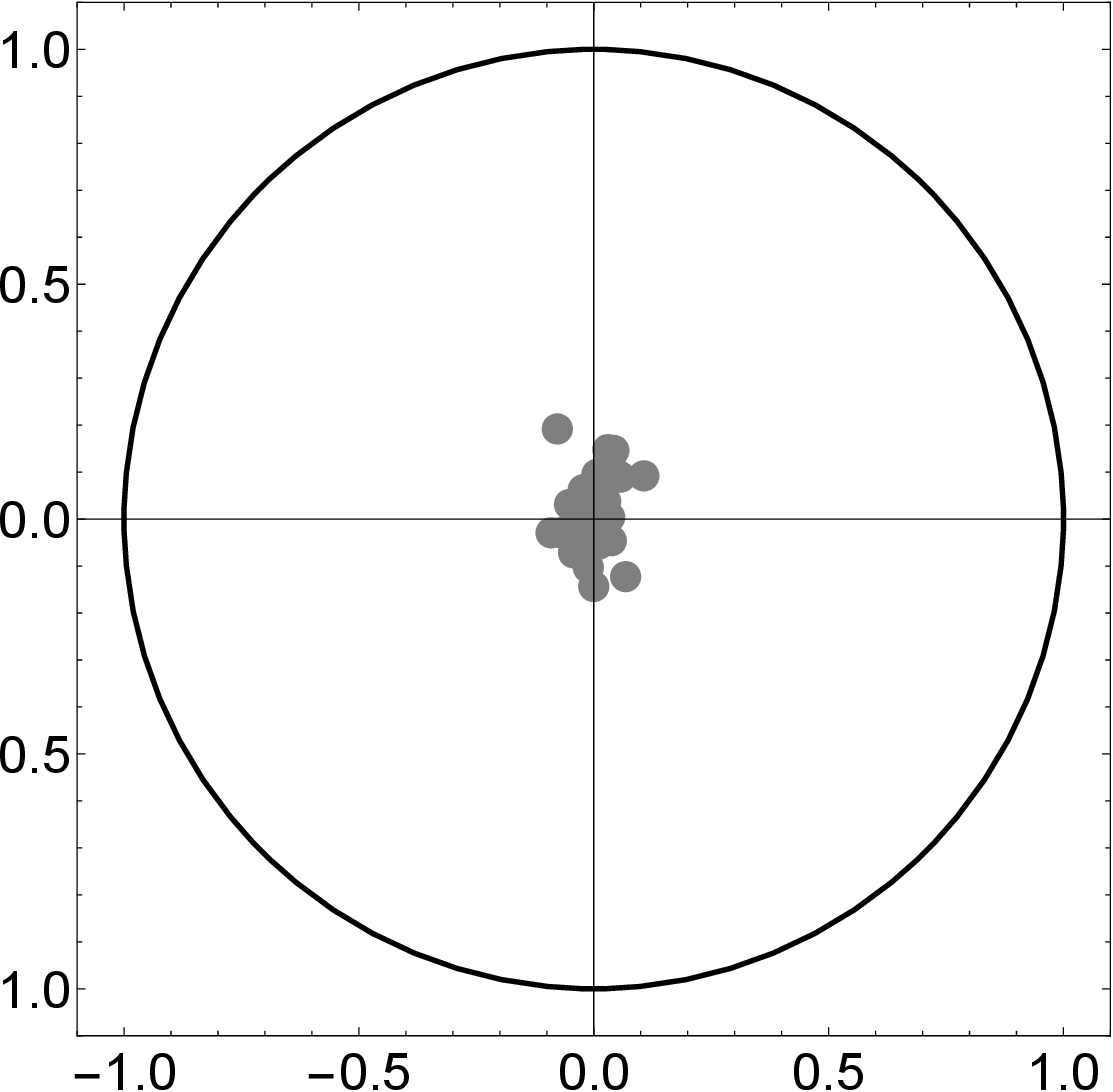}\\
    a)&b)&c)\\
        \includegraphics[width=.25\textwidth]{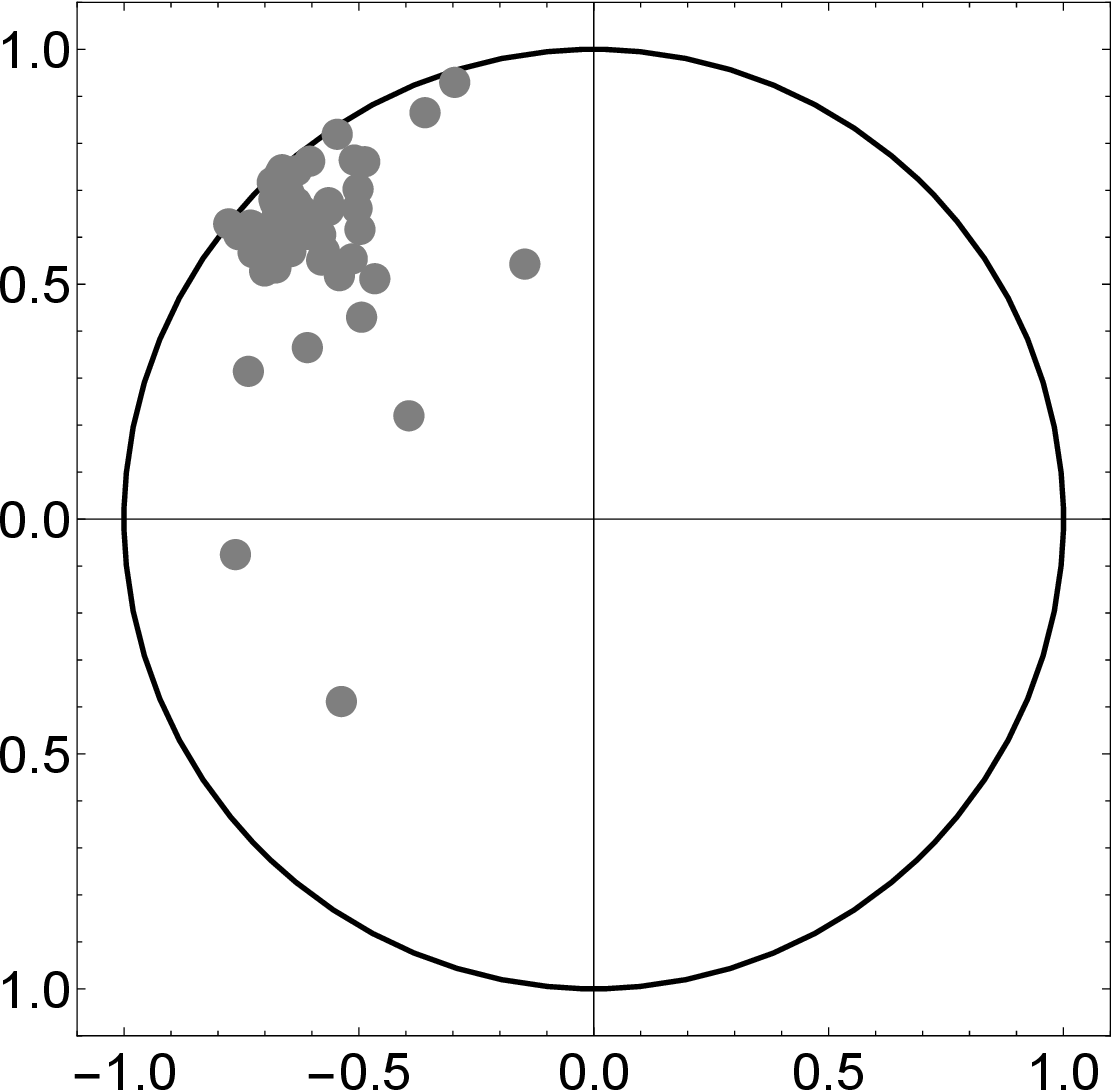}&\includegraphics[width=.25\textwidth]{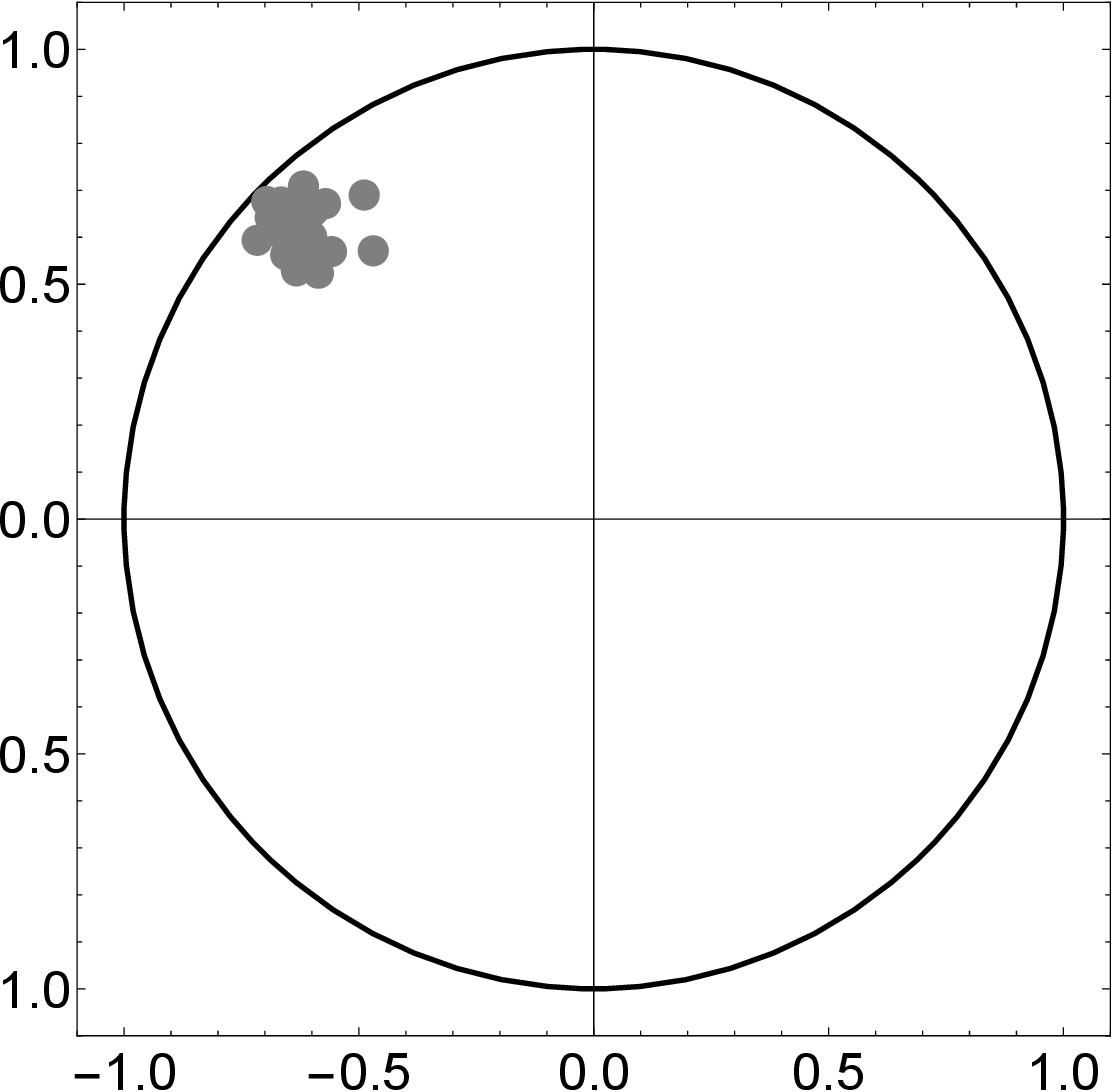}&\includegraphics[width=.25\textwidth]{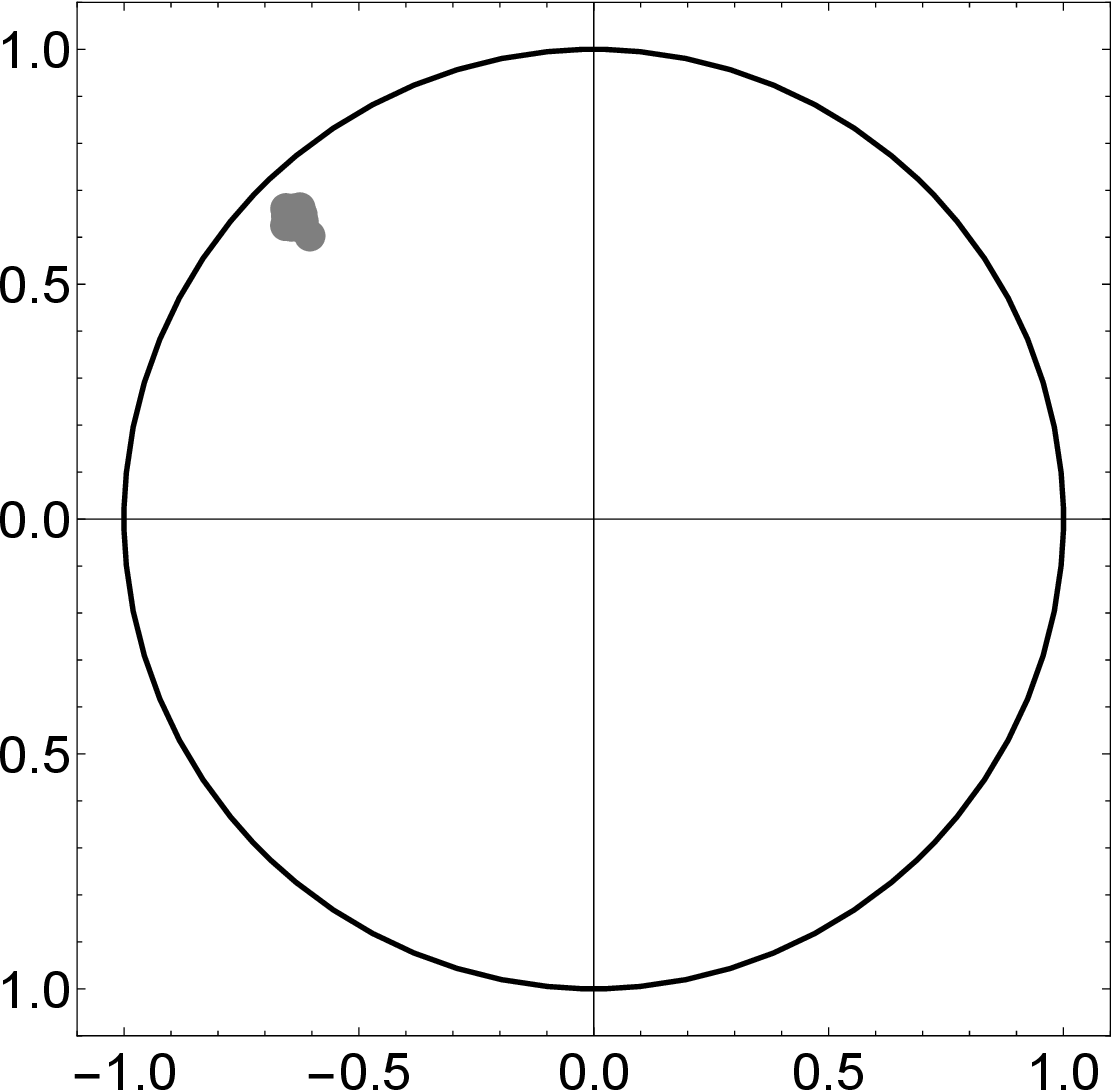}\\
    d)&e)&f)\\
  \end{tabular}
  \caption{\label{fig3}Random samples in the Poincar\' e disc with parameters: a) $a=0$, $s=2$; b) $a=0$, $s=4$; c) $a=0$, $s=8$; d) $a=0.9e^{i\frac{3\pi}{4}}$, $s=2$; e) $a=0.9e^{i\frac{3\pi}{4}}$, $s=4$; f) $a=0.9e^{i\frac{3\pi}{4}}$, $s=8$.}
\end{figure*}

\subsection{Maximum likelihood for the M\" obius family}

We proceed with the maximum likelihood estimation. Given the observations $z_1,\dots,z_N \sim Moeb_2(a,s)$, construct the likelihood function:
\begin{equation}
\label{ML_disc}
L(a,s|z_i) = \prod_{i=1}^N p(z_i;a,s) = \frac{(s-1)^N}{\pi^N} \prod_{i=1}^N \left( \frac{(1-|a|^2)(1-|z_i|^2)}{|1-\bar a z_i|^2} \right)^s.
\end{equation}
Taking the logarithm of $L(a,s|z_i)$ yields
\begin{equation}
\label{log_ML_disc}
\log L(a,s|z_i) = N \log(s-1) - N \log \pi + s \sum_{i=1}^N \log \frac{(1-|a|^2)(1-|z_i|^2)}{|1-\bar a z_i|^2}, \quad a \in \mathbb{B}^2, \, s>1.
\end{equation}

The log-likelihood function \eqref{log_ML_disc} has a unique maximizer $(\hat a,\hat s)$. We ignore the constant $N \log \pi$ and pass to the following maximization problem
\begin{equation}
\label{max_log_likelihood_disc}
\mbox{Maximize } \log L(a,s|z_i) = N \log(s-1) - s H(a),  \mbox{ w. r. to } a \in \mathbb{B}^2, \, s>1,
\end{equation}
where $H(a)$ is given by \eqref{potential_disc}.

It is evident that the optimization problem \eqref{max_log_likelihood_disc} splits into two separate problems for variables $a$ and for $s$. Indeed, the function $H(a)$ is positive, hence maximum of \eqref{max_log_likelihood_disc} is achieved at the minimizer $\hat a$ of $H(a)$. Therefore, $\hat a$ is precisely the conformal barycenter of points $z_1,\dots,z_N$. As shown in subsection \ref{Comp_conf_barycenter} hyperbolic gradient descent algorithm \eqref{Poin_swarm} computes this point.

Once $\hat a$ is determined, $\hat s$ can be found as the solution of the following maximization problem on $(1,+\infty)$:
$$
\hat s = argmax \left( N \log (s-1) - s H(\hat a) \right), \mbox{ for } s>1.
$$
Since $H(\hat a)$ is positive and $|H(\hat a)|<N$, it is easy to verify that the objective function is concave for $s>1$. By setting its derivative to zero we obtain the maximum likelihood estimation for $s$:
\begin{equation}
\label{MLE_s}
\hat s = 1 - N \left( \sum_{i=1}^N \log\frac{(1-|a|^2)(1-|z_i|^2)}{|1-\bar a z_i|^2} \right)^{-1}.
\end{equation}

This completes the maximum likelihood derivation for the family \eqref{conf-nat-disc}.

\section{Barycenters in hyperbolic balls}
\label{Bary_balls}

In this Section we answer the questions $i)$ and $ii)$ from Introduction for the case when $X$ is a hyperbolic ball. To that end, we introduce the notion of barycenter in a hyperbolic ball of arbitrary dimensions and assert the conformal (or holomorphic) invariance property. For a given configuration of points, barycenters are minimizers of certain potential functions, as introduced in \cite{JacKal}. We also present the hyperbolic gradient descent algorithm for computation of barycenters. As pointed out in Introduction, there are two ways of introducing hyperbolic metric in balls in even-dimensional vector spaces. To that end, this Section is divided into two subsections, dealing with barycenters in hyperbolic balls in real and complex vector spaces, respectively.

\subsection{Conformal barycenter in the Poincar\' e ball}
\label{conf_bary_ball}

Definition of the conformal barycenter and all the facts from subsection \ref{Conf_bary_Poin_disc} regarding the disc extend to higher-dimensional balls. Replace the Poincar\' e disc $\mathbb{B}^2$ with the $d$-dimensional Poincar\' e ball $\mathbb{B}^d$, equipped with the metric \eqref{pome}. Accordingly, the group $\mathbb{G}_2$ of disc-preserving M\" obius transformations in the complex plane should be replaced by the group $\mathbb{G}_d$ of ball-preserving M\" obius transformations in $\mathbb{R}^d$. These transformations are defined by formula \eqref{Mobius_ball}. Notice that the group $\mathbb{G}_d$ is isomorphic to the Lorentz group $SO^+(d,1)$.

Points in $\mathbb{B}^d$ are represented by $d$-dimensional real vectors with the norm less than one. Conformal barycenter of the configuration $\{y_1,\dots,y_N\}$ in $\mathbb{B}^d$ is the unique minimizer of the function \cite{JacKal}
\begin{equation}
\label{potential_Poin_ball}
H_d(a) = - \sum_{i=1}^N \log \frac{(1-|a|^2)(1-|y_i|^2)}{\rho(y_i,a)},
\end{equation}
where $\rho(u,v)$ is defined in \eqref{rho}.

Just as in the case of the Poincar\' e disc, conformal barycenter in $\mathbb{B}^d$ is conformally invariant, i.e. if $a$ is conformal barycenter of the configuration $\{y_1,\dots,y_N \}$ and $h \in \mathbb{G}_d$, then $h(a)$ is the conformal barycenter of the configuration $\{h(y_1),\dots,h(y_N) \}$. For the proof of this property we refer to \cite{JacKal}.

In order to compute the conformal barycenter we construct the hyperbolic gradient flow for the potential function \eqref{potential_Poin_ball}. To that aim, compute infinitesimal generators of the group $\mathbb{G}_d$. Infinitesimal generators can be evaluated separately for the boost and the rotation part of the transformation \eqref{Mobius_ball}. We omit the computation and refer to \cite{LMS} for more detailed similar derivations. In such a way we find infinitesimal rotation:
$$
x \to Ax, \mbox{ where } A \mbox{ is an anti-symmetric } d \times d \mbox{ matrix}
$$
and infinitesimal boost:
$$
x \to 2 \langle w,x \rangle x - (1 + |x|^2) w, \mbox{ where } w \in \mathbb{B}^d.
$$

Now, introduce the swarm:
\begin{equation}
\label{Poin_swarm_ball_gen}
\dot x_j = A x_j - K \left \langle x_j, f \right \rangle x_j + \frac{K}{2} (1 - |x_j|^2) f, \quad j=1,\dots,N
\end{equation}
where $f \equiv f(x_1,\dots,x_N)$ is the function taking values in $\mathbb{B}^d$.

Then, applying Theorem \ref{Lie_general_th} for the particular Lie group $\mathbb{G}_d$, we obtain the following
\begin{lemma}
Consider the dynamics \ref{Poin_swarm_ball_gen}.
One has that $x_j(t)=g_t(x_j(0))$ for a unique one-parametric family $g_t \in \mathbb{G}_d$.
\end{lemma}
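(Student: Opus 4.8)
The plan is to identify the right-hand side of \eqref{Poin_swarm_ball_gen} as one and the same time-dependent element of the Lie algebra of $\mathbb{G}_d$ acting on each particle, and then to invoke Theorem \ref{Lie_general_th}. The two families of infinitesimal generators of $\mathbb{G}_d \cong SO^+(d,1)$ have already been recorded just above the statement: the infinitesimal rotations $x \mapsto Ax$ with $A$ antisymmetric, which span a $\binom{d}{2}$-dimensional space, and the infinitesimal boosts $B_w(x) = 2\langle w,x\rangle x - (1+|x|^2)w$ with $w \in \mathbb{R}^d$, which span a complementary $d$-dimensional space. First I would observe that these $\binom{d}{2} + d = \binom{d+1}{2} = \dim SO^+(d,1)$ vector fields are linearly independent, so that the hypothesis of Theorem \ref{Lie_general_th} is satisfied.

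The core step is to write the velocity field of \eqref{Poin_swarm_ball_gen} as a single linear combination $A x + B_w(x)$ evaluated at each $x = x_j$. The term $A x_j$ is already an infinitesimal rotation, with $A$ treated as a time-dependent antisymmetric matrix. For the remaining two terms I would take $w$ to be a scalar multiple of the configuration-dependent vector $f = f(x_1,\dots,x_N)$ and check that $-K\langle x_j,f\rangle x_j + \tfrac{K}{2}(1-|x_j|^2)f$ reproduces $B_w(x_j)$; concretely this reduces to matching the scalar in front of $\langle x_j,f\rangle x_j$ and the scalar in front of $f$ (here one must pay attention to the sign of the $|x_j|^2$ term, since the boost generator carries $1+|x_j|^2$ rather than $1-|x_j|^2$). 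The decisive point is that the coefficients $A$ and $w$ are common to all indices $j$ and depend on $t$ only through the global configuration, so at each instant every particle is transported by the very same velocity field $v(t)\colon x \mapsto A(t)x + B_{w(t)}(x)$.

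With this in hand the conclusion follows quickly. Since the unit ball is invariant for \eqref{Poin_swarm_ball_gen}, solving the system yields a configuration $t \mapsto (x_1(t),\dots,x_N(t))$ staying in $\mathbb{B}^d$; substituting this solution back into $A$ and $w$ turns $v(t)$ into a bona fide time-dependent linear combination $\sum_i c_i(t) v_i$ of the generators, with the $c_i$ now definite functions of $t$ alone. Each $x_j$ then solves one and the same non-autonomous equation $\dot x = v(t)(x)$, so Theorem \ref{Lie_general_th} supplies a unique one-parameter family $g_t \in \mathbb{G}_d$ with $x_j(t) = g_t(x_j(0))$ holding for every $j$ simultaneously, which is exactly the assertion.

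I expect the genuine obstacle to be the middle step: confirming that the two nonlinear terms of \eqref{Poin_swarm_ball_gen} really lie inside the boost subspace (rather than producing a quadratic field transverse to the Lie algebra) and that the resulting element is independent of $j$. This is the higher-dimensional analogue of the cancellation that, in the disc, collapses $2\langle w,z\rangle z - (1+|z|^2)w$ to the holomorphic field $\bar w z^2 - w$ and thereby hides the issue; in $\mathbb{R}^d$ no such collapse occurs, so the $\langle x_j,f\rangle x_j$ and $(1\pm|x_j|^2)f$ pieces must be matched against the boost template directly. A secondary subtlety worth stating explicitly is the self-consistency of the coefficients: because $A$ and $f$ depend on the trajectory being constructed, one should first secure existence of the solution on a time interval and only afterwards read off the functions $c_i(t)$ to which Theorem \ref{Lie_general_th} is applied.
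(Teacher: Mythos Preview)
Your approach is exactly the paper's: the paper's entire argument for this lemma is the one sentence ``applying Theorem \ref{Lie_general_th} for the particular Lie group $\mathbb{G}_d$,'' and your proposal spells out precisely that invocation together with the verification that the right-hand side is a common, $j$-independent linear combination of the listed generators. Your flagged concern about the sign is not paranoia but a genuine typo in the paper: with $w=-\tfrac{K}{2}f$ the boost template $2\langle w,x\rangle x-(1+|x|^2)w$ produces $-K\langle x_j,f\rangle x_j+\tfrac{K}{2}(1+|x_j|^2)f$, and indeed only with the plus sign does $\tfrac{d}{dt}|x_j|^2$ vanish on $|x_j|=1$, so \eqref{Poin_swarm_ball_gen} as printed should carry $(1+|x_j|^2)$.
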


We further pass to the particular case by setting
 $$
A \equiv 0 \mbox{  and  } f = \frac{K}{N} \sum_{k=1}^N x_i
$$
in \eqref{Poin_swarm_ball_gen}. This yields
\begin{equation}
\label{Poin_swarm_ball}
\dot x_j = - \frac{K}{N} \left \langle x_j, \sum_{k=1}^N x_k \right \rangle x_j + \frac{K}{2N} (1 - |x_j|^2) \sum_{k=1}^N x_k, \quad j=1,\dots,N.
\end{equation}
We call dynamical systems \eqref{Poin_swarm_ball_gen} (including \eqref{Poin_swarm_ball} as a particular case) {\it the Poincar\' e swarms in the hyperbolic ball}.

Therefore, theorems \ref{Mob_evol_th} and \ref{hyp_flow_th} can be extended to hyperbolic balls as substantiated in the following.

\begin{theorem}
The system \eqref{Poin_swarm_ball} evolves by actions of the group $\mathbb{G}_d$. More precisely, there exists a one-parametric family $h_t \in \mathbb{G}_d$, such that
$$
x_j(t) = h_t(x_j(0)), \mbox{ for } j=1,\dots,N, t>0.
$$
Furthermore, the conformal barycenter $a(t)$ of configuration $\{ x_1(t),\dots,x_N(t)\}$ evolves by the following ODE:
\begin{equation}
\label{bary_evol_Poin_ball}
\frac{da}{dt} = \frac{K}{2N}(1-|a(t)|^2) \sum_{i=1}^N h_a(x_i(0)),
\end{equation}
where the transformation $h_a$ is defined by \eqref{Mobius_ball}.
\end{theorem}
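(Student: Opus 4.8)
The first assertion requires no new work: the system \eqref{Poin_swarm_ball} is exactly the special case of \eqref{Poin_swarm_ball_gen} obtained by setting $A\equiv 0$ and $f=\frac{K}{N}\sum_k x_k$, so the existence of a one-parameter family $h_t\in\mathbb{G}_d$ with $x_j(t)=h_t(x_j(0))$ is precisely the content of the Lemma preceding the statement, which in turn rests on Theorem \ref{Lie_general_th}. Since the generator of the flow carries no antisymmetric (rotation) part, I would record that $h_t$ can be taken in the pure-boost form $h_{a(t)}$ of \eqref{Mobius_ball} with $A=I$ and $a(0)=0$, in exact analogy with the way $\omega\equiv 0$ forced the pure-boost form $g_{a(t)}$ in the planar case of Theorem \ref{Mob_evol_th}.

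For the evolution equation the plan is to imitate the differentiation argument of Theorem \ref{Mob_evol_th}. Starting from $x_j(t)=h_{a(t)}(x_j(0))$, I would differentiate in $t$ through the basepoint, writing $\dot x_j=(D_a h_a)(x_j(0))\,\dot a$, where $D_a h_a$ is the differential of \eqref{Mobius_ball} in $a$. Because $h_a$ is an involution, the initial datum can be eliminated via $x_j(0)=h_{a(t)}(x_j(t))$, which turns the right-hand side into an explicit field in the current variables $x_j(t)$: affine-plus-quadratic in $x_j(t)$ with coefficients assembled from $\dot a$, $a$, the scalar $\langle a,\dot a\rangle$ and the factor $1-|a|^2$. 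I would simplify these coefficients with the identities \eqref{jaka} and \eqref{jaco}, then match them term by term against the right-hand side of \eqref{Poin_swarm_ball}. This matching produces a small linear system for $\dot a$ whose solution, after substituting $x_k(t)=h_{a(t)}(x_k(0))$ back into $\sum_k x_k(t)$, is exactly \eqref{bary_evol_Poin_ball}.

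The hard part will be the vector-calculus bookkeeping in $D_a h_a$: unlike the scalar linear-fractional map of the disc, here both the numerator $a|x-a|^2+(1-|a|^2)(a-x)$ and the denominator $\rho(x,a)=|x-a|^2+(1-|a|^2)(1-|x|^2)$ depend on $a$, and after the quotient rule one must rearrange the result so that every trace of $x_j(0)$ cancels in favour of $x_j(t)$. A second, more delicate point is the analogue of the disc consistency relation $\dot{\bar a}a-\dot a\bar a=0$: matching the swarm forces the antisymmetric part of the induced generator to vanish, and I would have to check that this constraint is compatible with the boost-only generator of \eqref{Poin_swarm_ball}, so that the ansatz $h_t=h_{a(t)}$ is self-consistent and no Wigner-type rotation accumulates along the flow. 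Once these two points are cleared, reading off \eqref{bary_evol_Poin_ball} is routine.
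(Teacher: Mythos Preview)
The paper gives no standalone proof of this theorem: it states the result as the higher-dimensional analogue of Theorem~\ref{Mob_evol_th}, with the first claim delegated to the preceding Lemma. Your plan---invoke that Lemma for the group evolution and then rerun the differentiate-and-match calculation of Theorem~\ref{Mob_evol_th} in $\mathbb{R}^d$---is precisely the route the paper signals, so the approaches coincide; your flagging of the Wigner-rotation consistency check is apt, since the paper leaves that compatibility implicit even in the disc case.
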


\begin{theorem}
ODE \eqref{bary_evol_Poin_ball} is the gradient flow in hyperbolic metric \eqref{pome} for the potential \eqref{potential_Poin_ball}.
\end{theorem}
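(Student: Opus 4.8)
The plan is to prove this exactly as the disc case was handled in Theorem~\ref{hyp_flow_th}, but without recourse to Wirtinger derivatives, which are unavailable in the real ball $\mathbb{B}^d$. Instead of first establishing an integrability criterion, I would directly compute the hyperbolic gradient of the candidate potential \eqref{potential_Poin_ball} and verify that it coincides, up to a fixed positive constant, with the right-hand side of \eqref{bary_evol_Poin_ball}. The one fact I would invoke at the outset is that for a metric conformal to the Euclidean one, $g=\phi^2\langle\cdot,\cdot\rangle$, the Riemannian gradient is $\nabla_{hyp}=\phi^{-2}\,\nabla_{Eucl}$. For the Poincar\'e ball the conformal factor is $(1-|a|^2)^{-2}$ (matching the disc, where $|v|_{hyp}=(1-|z|^2)^{-1}|v|_{Eucl}$), so that $\nabla_{hyp}H_d=(1-|a|^2)^2\,\nabla_{Eucl}H_d$ up to a multiplicative constant that merely rescales time and hence does not affect the trajectories. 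It therefore suffices to show that $\nabla_{Eucl}H_d(a)$ is a positive multiple of $(1-|a|^2)^{-1}\sum_i h_a(x_i(0))$.

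Next I would compute the Euclidean gradient. Discarding the summands $\log(1-|y_i|^2)$, which are independent of $a$, write $H_d(a)=-N\log(1-|a|^2)+\sum_{i=1}^N\log\rho(y_i,a)+\mathrm{const}$, with $y_i=x_i(0)$. Using $\nabla_a(1-|a|^2)=-2a$ and, from the definition \eqref{rho}, $\nabla_a\rho(y_i,a)=2(|y_i|^2 a-y_i)$, this gives
\[
\nabla_{Eucl}H_d(a)=2\sum_{i=1}^N\left[\frac{a}{1-|a|^2}+\frac{|y_i|^2 a-y_i}{\rho(y_i,a)}\right].
\]

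The crux of the argument is the algebraic identity that turns each summand into a value of the M\"obius map $h_a$. Placing the $i$-th summand over the common denominator $(1-|a|^2)\rho(y_i,a)$ produces the numerator $a\,\rho(y_i,a)+(1-|a|^2)(|y_i|^2 a-y_i)$. I expect that expanding $\rho$ via \eqref{rho} and collecting the terms carrying the factor $a(1-|a|^2)$ makes the $(1-|y_i|^2)$ and $|y_i|^2$ contributions add up to $a(1-|a|^2)$, leaving exactly $a|y_i-a|^2+(1-|a|^2)(a-y_i)$, which is precisely the numerator of $h_a(y_i)$ in \eqref{Mobius_ball} with orthogonal part $A=I$. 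Hence each summand equals $h_a(y_i)/(1-|a|^2)$, so that $\nabla_{Eucl}H_d(a)=\tfrac{2}{1-|a|^2}\sum_i h_a(y_i)$ and, multiplying by $(1-|a|^2)^2$, one gets $\nabla_{hyp}H_d(a)=2(1-|a|^2)\sum_i h_a(x_i(0))$. Comparing with \eqref{bary_evol_Poin_ball}, the ODE reads $\dot a=\tfrac{K}{4N}\,\nabla_{hyp}H_d(a)$, i.e. the (negative) hyperbolic gradient flow of $H_d$ once $K<0$.

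The main obstacle is precisely this collapse of the numerator into the $h_a$-form: it is the higher-dimensional counterpart of the integration step used to recover $H$ in the proof of Theorem~\ref{hyp_flow_th}, and it depends delicately on the specific shape of $\rho$ in \eqref{rho}. A secondary point worth checking is the correct normalization of the conformal factor, so that the matching constant is genuinely positive and the flow is descent rather than ascent; the geodesic convexity of \eqref{potential_Poin_ball} then guarantees convergence to the unique minimizer, which is the conformal barycenter, closing the loop with the preceding theorem.
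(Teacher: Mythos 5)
Your proposal follows essentially the same route as the paper's proof: compute the Euclidean gradient of $H_d$, recognize that each summand collapses to $h_a(y_i)/(1-|a|^2)$ via the algebraic identity $a\,\rho(y_i,a)+(1-|a|^2)(|y_i|^2a-y_i)=a|y_i-a|^2+(1-|a|^2)(a-y_i)$, and then rescale by the conformal factor to obtain $\nabla_{hyp}H_d=\tfrac12(1-|a|^2)\sum_i h_a(x_i(0))$ up to a constant. Your version is correct (and in fact tracks the signs more carefully than the paper's displayed computation), so no further changes are needed.
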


\begin{proof}
First calculate the Euclidean gradient of \eqref{potential_Poin_ball}:

\begin{eqnarray*}
\nabla_{Eucl} H_d(a) &=& - \sum_{i=1}^N \frac{2 \rho(y_i,a)(1-|y_i)^2)[a \rho(y_i,a) + \rho'(y_i,a)(1-|a|^2)]}{(1-|a|^2)(1-|y_i|^2)\rho(y_i,a)^2} \\
&=&-\frac{2}{1-|a|^2} \sum_{i=1}^N \frac{a|y_i-a|^2 - a(1-|a|^2)+y_i(1-|a|^2)}{|y_i-a|^2+(1-|a|^2)(1-|y_i|^2)} \\
&=&-\frac{2}{1-|a|^2} \sum_{i=1}^N \frac{a|y_i-a|^2 + (y_i-a)(1-|a|^2)}{\rho(y_i,a)} =-\frac{2}{1-|a|^2} \sum_{i=1}^N h_a(x),
\end{eqnarray*}

where $h_a$ are M\" obius transformations in $\mathbb{B}^d$ defined by \eqref{Mobius_ball}.

Then, the relation $\nabla_{hyp} H(a) = \frac{1}{4} (1-|a|^2)^2 \nabla_{Eucl} H(a)$ between the Euclidean and hyperbolic gradients yields
$$
\nabla_{hyp} H_d(a) = \frac{1}{2} (1-|a|^2) \sum_{i=1}^N x_i = \frac{1}{2} (1-|a|^2) \sum_{i=1}^N h_a(x_i(0)).
$$
Therefore, $H_d(a)$ is the potential for \eqref{bary_evol_Poin_ball}.
\end{proof}

In conclusion, conformal barycenter for configuration $\{y_1,\dots,y_N\}$ can be found by solving \eqref{Poin_swarm_ball} with $K<0$ and initial conditions:
$$
x_1(0) = y_1,\dots,x_N(0)=y_N.
$$
Then for some $T$ we have that $\{x_1(t) = h_t(y_1),\dots,x_N(t)=h_t(y_N)\}$ is a balanced configuration whenever $t>T$. The conformal barycenter of $\{y_1,\dots,y_N\}$ is found as the inverse of zero, that is $h^{-1}_t(0)$.

\begin{figure*}[h]
\centering
  \begin{tabular}{@{}cc@{}}
    \includegraphics[width=.3\textwidth]{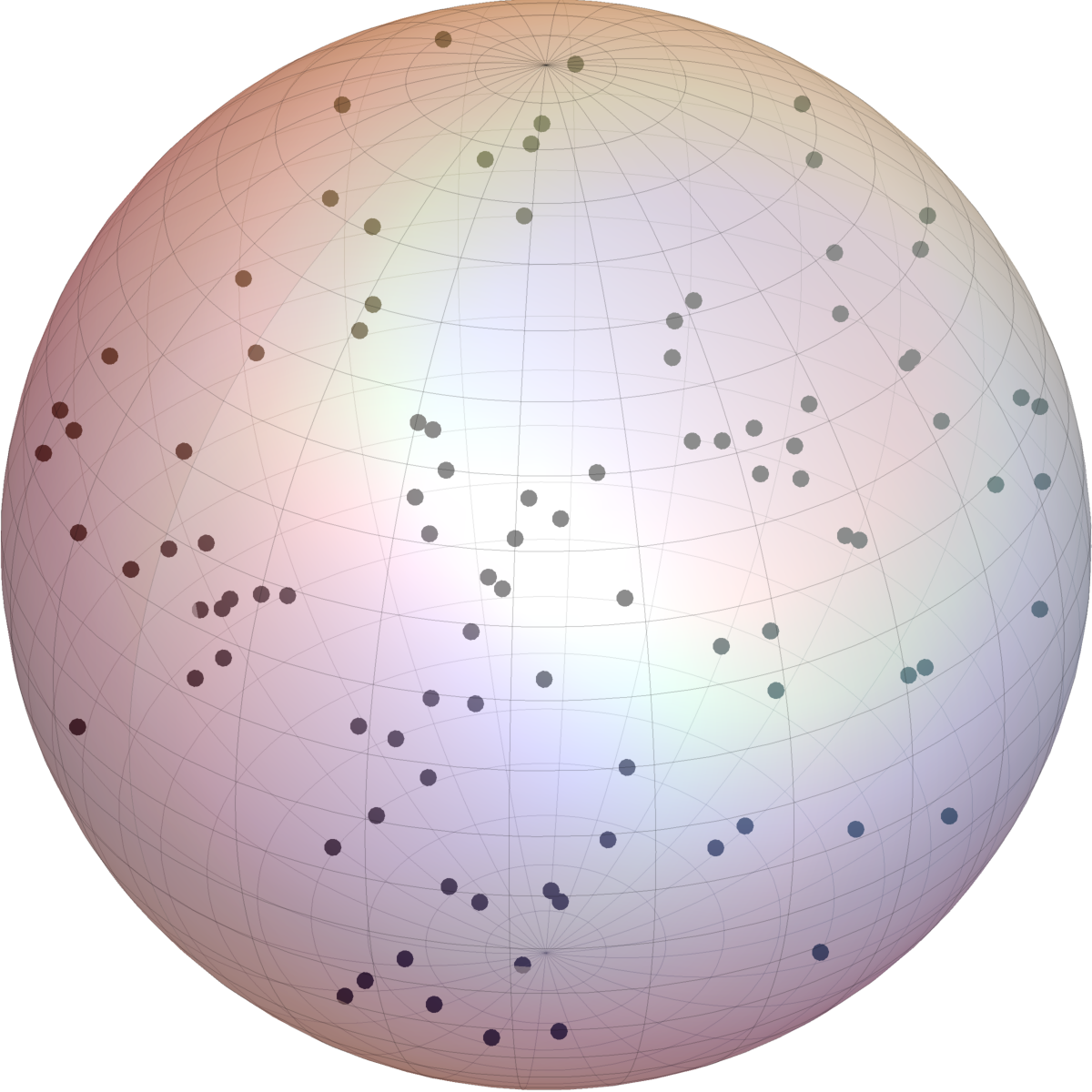}&\includegraphics[width=.3\textwidth]{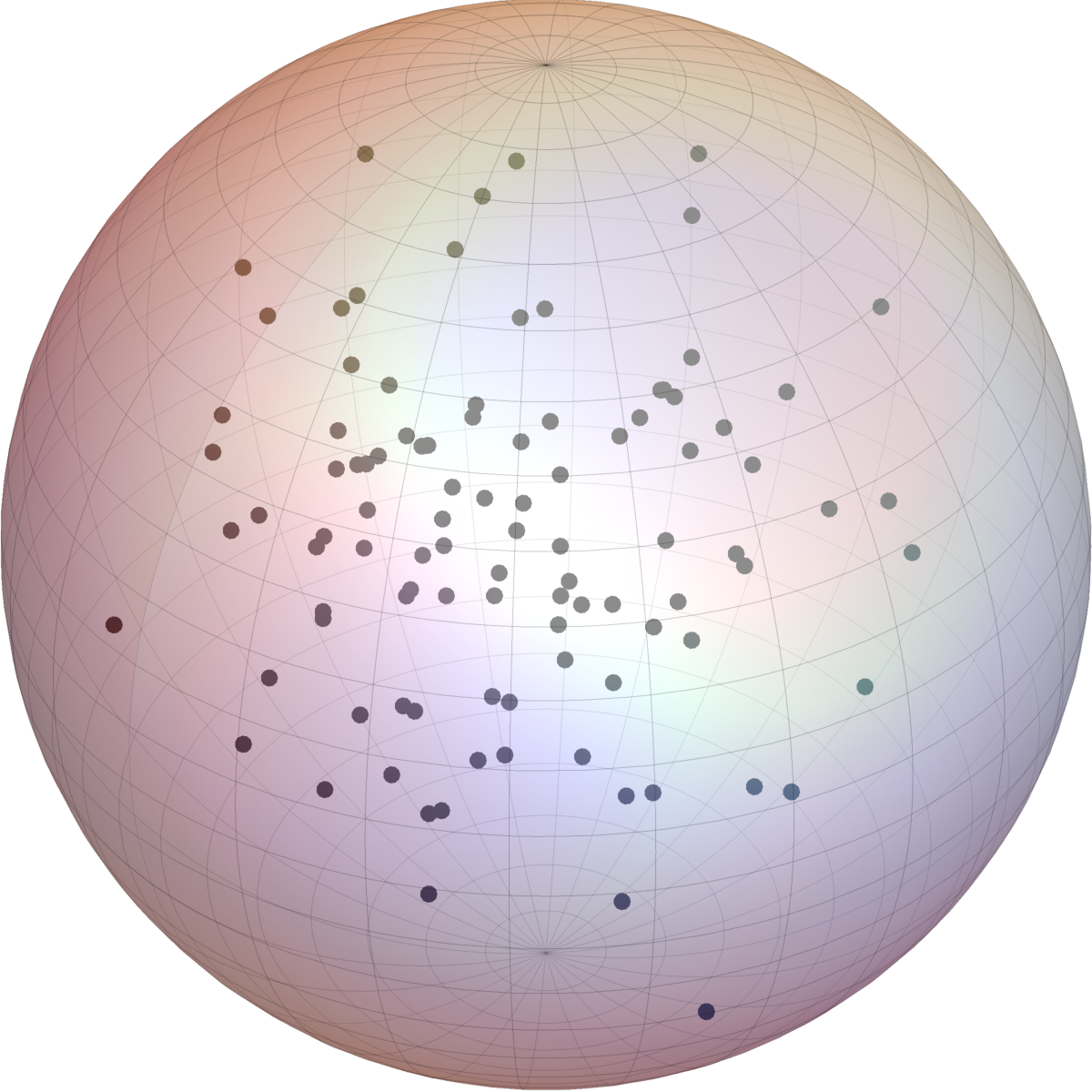}\\
    a)&b)\\
     \includegraphics[width=.3\textwidth]{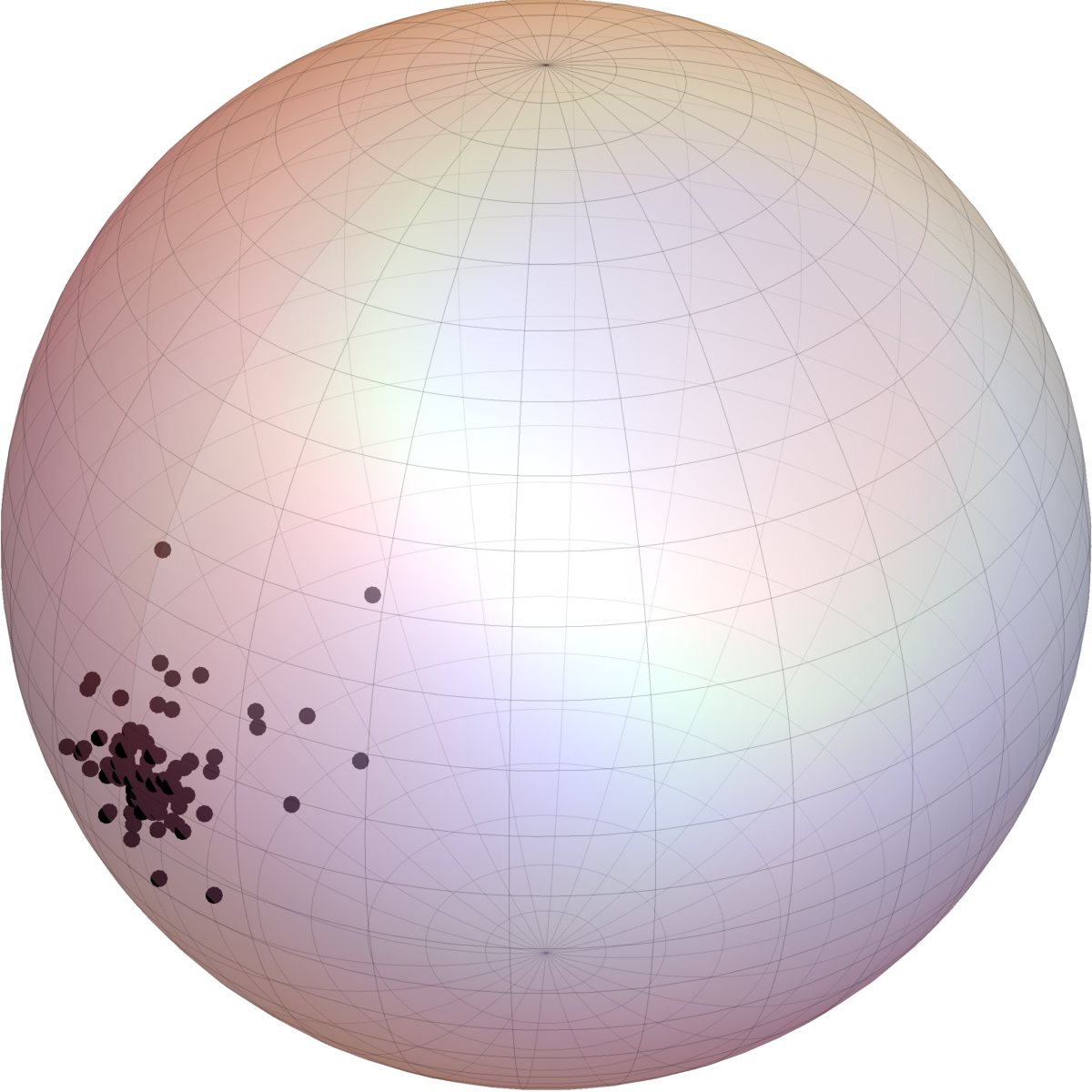}&\includegraphics[width=.3\textwidth]{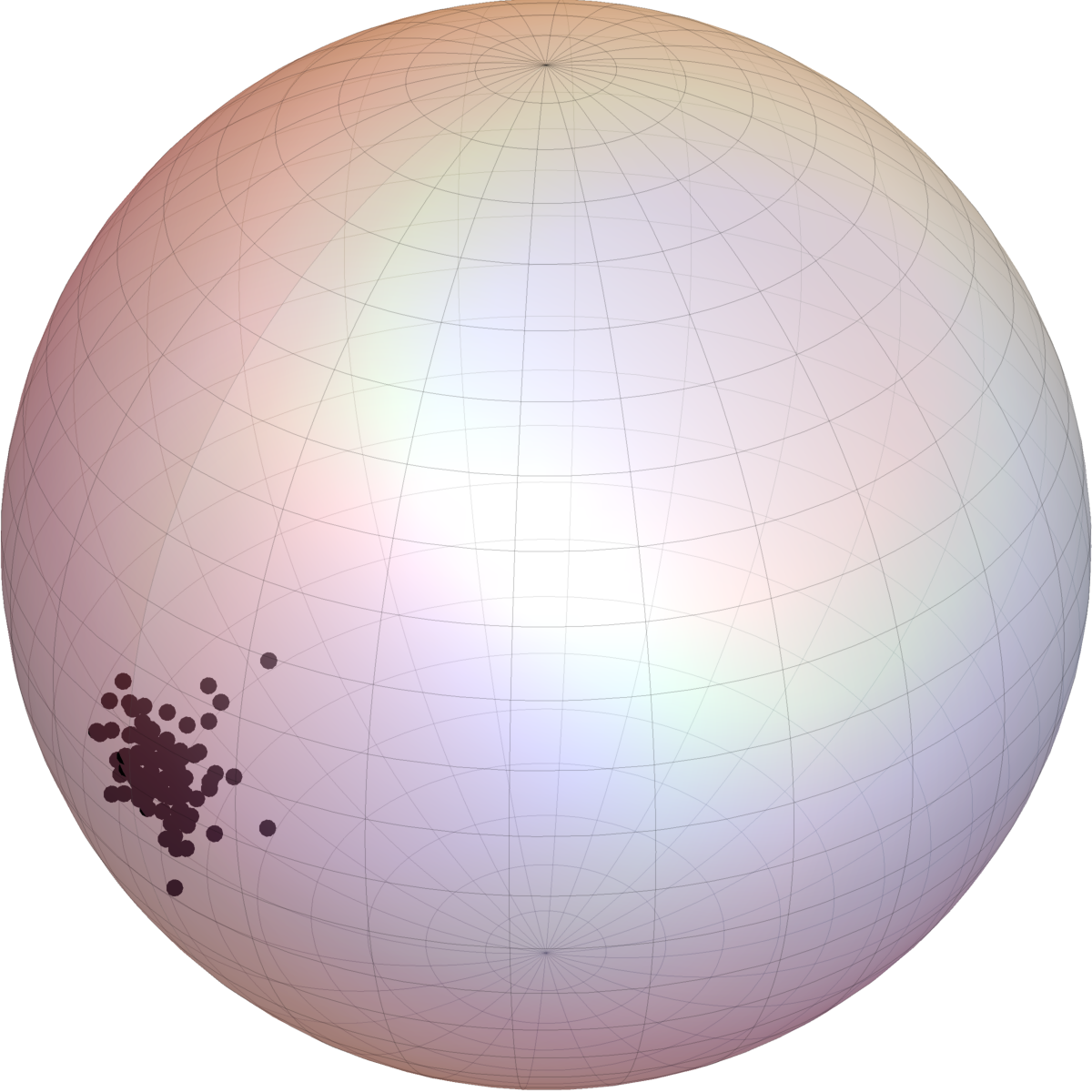}\\
    c)&d)\\
  \end{tabular}
  \caption{\label{fig4}Random samples from the Poincare ball $\mathbb{B}^3$ for parameter values: a) $a=(0,0,0)$, $s=3$; b) $a=(0,0,0)$, $s=5$; c) $a=(0.9,0,0)$, $s=3$; d) $a=(0.9,0,0)$, $s=5$.}
\end{figure*}

\subsection{Holomorphic barycenter in the Bergman ball}
\label{hol_bary_Berg}

The Bergman ball $\mathbb{D}^m$ in the complex vector space $\mathbb{C}^m$ is introduced in subsection \ref{Bergman_ball_subsect}. The group of holomorphic automorphisms of $\mathbb{D}^m$ is given by the formula \eqref{Bergman_transf}. We denote this group by $\mathbb{H}_m$. Notice that $\mathbb{D}^1$ is isomorphic to the Poincar\' e disc $\mathbb{B}^2$ with the symmetry group $\mathbb{H}_1 \sim \mathbb{G}_2$ consisting of transformations given by \eqref{Mobius}.

Holomorphic barycenter of the configuration $\{ \xi_1, \dots, \xi_N\}$ in $\mathbb{D}^m$ is defined as the minimum of the following function \cite{JacKal}
\begin{equation}
\label{potential_Bergman_ball}
M_m(a) = - \sum_{i=1}^N \log \frac{(1-|a|^2)(1-|\xi_i|^2)}{|1-\langle a,\xi_i \rangle|^2}.
\end{equation}

Holomorphic barycenter is holomorphicaly invariant, meaning that if $a$ is holomorphic barycenter of the configuration $\{ \xi_1,\dots,\xi_N \}$ and $m \in \mathbb{H}_m$, then $m(a)$ is the conformal barycenter of the configuration $\{m(\xi_1),\dots,m(\xi_N) \}$. We refer to \cite{JacKal} for the proof of this property.

We can construct Bergman swarms by computing infinitesimal generators of the group $\mathbb{H}_m$. The computation yields the following infinitesimal transformations:
$$
z \to W z, \mbox{ where } W \mbox{ is anti-Hermitian } m \times m \mbox{ matrix}
$$
and infinitesimal boost
$$
z \to 2(\langle z,w \rangle z - w), \mbox{ where } w \in \mathbb{D}^m
$$
and $\langle \cdot,\cdot \rangle$ denotes the standard scalar product in the complex vector space.

Hence, Bergman swarms are dynamical system in $\mathbb{D}^m$ of the following form
\begin{equation}
\label{infinitesimal_Berg_ball}
\dot z_j = K z_j + f - \langle z_j, f\rangle z_j, \quad j=1,\dots,N
\end{equation}
where $K$ is an anti-Hermitian $m \times m$ matrix and $f \equiv f(z_1,\dots,z_N)$ is a function taking values in $\mathbb{D}^m$.

We can assert the following:
\begin{lemma}
Consider dynamical system \eqref{infinitesimal_Berg_ball} with the initial conditions $z_1(0),\dots,z_N(0)$ in $\mathbb{D}^m$. One has that $z_j(t)=m_t(z_j(0))$ for a unique one-parametric family $m_t \in \mathbb{H}_m$.
\end{lemma}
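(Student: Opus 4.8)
The plan is to reduce the statement to the general Lie-theoretic principle of Theorem~\ref{Lie_general_th}, exactly as in the proof of Theorem~\ref{Mob_evol_th} for the disc. The essential observation is that for each index $j$ the right-hand side of \eqref{infinitesimal_Berg_ball} is one and the same vector field on $\mathbb{D}^m$ evaluated at $z_j$, and that this field is a linear combination of the infinitesimal generators of $\mathbb{H}_m$ listed immediately above the statement. Once this is shown, and once the coefficients are seen to be functions of $t$ alone along the joint trajectory, Theorem~\ref{Lie_general_th} delivers the conclusion directly.

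First I would recall that the Lie algebra of $\mathbb{H}_m$ is spanned by the unitary generators $z \mapsto Wz$ with $W$ anti-Hermitian, together with the boost generators $z \mapsto 2(\langle z,w\rangle z - w)$ parametrized by $w \in \mathbb{C}^m$. Writing $w = w_1 + i w_2$ and viewing it as an element of $\mathbb{R}^{2m}$, the assignment $w \mapsto 2(\langle z,w\rangle z - w)$ is real-linear: the first summand is conjugate-linear in $w$ and the second is linear, so both are real-linear in the real coordinates $(w_1,w_2)$. Hence the boost fields form a real vector space which, together with the anti-Hermitian part, is linearly independent and exhausts the tangent space to $\mathbb{H}_m$ at the identity, so the hypotheses of Theorem~\ref{Lie_general_th} are meaningful.

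Next I would match the two terms of \eqref{infinitesimal_Berg_ball} against these generators. The term $K z_j$ is exactly the unitary generator with $W = K$. For the boost term I would take $w = -\tfrac{1}{2}f$, a \emph{real} multiple of $f$, so that $\langle z_j,w\rangle = -\tfrac{1}{2}\langle z_j,f\rangle$ and therefore $2(\langle z_j,w\rangle z_j - w) = -\langle z_j,f\rangle z_j + f = f - \langle z_j,f\rangle z_j$, which reproduces the remaining two terms. Thus the field driving \eqref{infinitesimal_Berg_ball} is the boost generator at $-\tfrac{1}{2}f$ plus the unitary generator at $K$.

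Finally, since $K$ and $f$ are computed from the whole configuration $(z_1,\dots,z_N)$ and are shared by every particle, along the coupled trajectory they become functions of $t$ alone; this is precisely the configuration of Theorem~\ref{Lie_general_th}, which then yields a unique one-parametric family $m_t \in \mathbb{H}_m$ with $z_j(t) = m_t(z_j(0))$ for all $j$. That each $z_j(t)$ stays in $\mathbb{D}^m$ is then automatic, because every $m_t$ is an automorphism of the Bergman ball. I expect the only genuinely delicate point to be the boost matching, with its factor $-\tfrac{1}{2}$ and the conjugate-linear inner product, since this is where the real-linearity bookkeeping of the generators must be handled correctly for the hypothesis of Theorem~\ref{Lie_general_th} to apply.
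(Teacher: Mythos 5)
Your proof is correct and takes essentially the same approach as the paper: the paper asserts the lemma as a direct instance of Theorem~\ref{Lie_general_th} after listing the unitary and boost generators of $\mathbb{H}_m$, and your explicit matching (unitary part with $W=K$, boost part with $w=-\tfrac{1}{2}f$, together with the real-linearity bookkeeping for the conjugate-linear inner product) is precisely the verification the paper leaves implicit.
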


As particular case, by setting
$$
K=0 \mbox {  and } w = \frac{K}{N} \sum_{k=1}^N z_i
$$
in \eqref{infinitesimal_Berg_ball} we obtain the following system
\begin{equation}
\label{Berg_swarm}
\dot z_j = \frac{K}{N} \sum_{k=1}^N z_k - \frac{K}{N} \left \langle z_j,\sum_{k=1}^N z_k \right \rangle z_j, \quad j=1,\dots,N.
\end{equation}

\begin{theorem}
The system \eqref{Berg_swarm} evolves by the actions of the group $\mathbb{H}_m$ of holomorphic automorphisms of $\mathbb{D}^m$. More precisely, there exists a one-parametric family $m_{a(t)} \in \mathbb{H}_m$, such that
$$
z_j(t) = m_t(z_j(0)), \mbox{ for } j=1,\dots,N, t>0.
$$
Furthermore, holomorphic barycenter $a(t)$ of configuration $\{ z_1(t),\dots,z_N(t)\}$ evolves by the following ODE:
\begin{equation}
\label{holom_bary_evol}
\frac{da}{dt} = \frac{K}{2N}(1-|a|^2) \sum_{i=1}^N m_a(z_i(0)),
\end{equation}
where $m_a$ is given by \eqref{Bergman_transf}.
\end{theorem}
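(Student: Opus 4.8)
The plan is to treat the statement in two parts, exactly as in Theorem~\ref{Mob_evol_th} for the disc. The first assertion, that \eqref{Berg_swarm} evolves by actions of $\mathbb{H}_m$, is already contained in the preceding Lemma: the system \eqref{Berg_swarm} is precisely the special case of \eqref{infinitesimal_Berg_ball} obtained by taking the anti-Hermitian matrix equal to zero and the $\mathbb{D}^m$-valued function $f$ equal to $\frac{K}{N}\sum_{k=1}^N z_k$, so the Lemma directly furnishes the one-parameter family $m_t\in\mathbb{H}_m$ with $z_j(t)=m_t(z_j(0))$. Hence only the evolution equation \eqref{holom_bary_evol} for the barycenter parameter $a(t)$ remains to be established.

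For the second part I would mirror the derivation of \eqref{bary_evol} in Theorem~\ref{Mob_evol_th}. Writing the flow through the boost family \eqref{Bergman_transf} as $z_j(t)=m_{a(t)}(z_j(0))$, I would differentiate this identity in $t$ with $z_j(0)$ held fixed, using the explicit description of $m_a$ through the projections $P_a,Q_a$ and the factor $s_a=(1-|a|^2)^{1/2}$. This produces $\dot z_j(t)$ as an expression involving $\dot a$, linear and quadratic in $z_j(t)$. Equating it with the right-hand side of \eqref{Berg_swarm} and separating the component that is constant in $z_j(t)$, the component linear in $z_j(t)$, and the component of the form $\langle z_j(t),\cdot\rangle z_j(t)$ should determine $\dot a$ uniquely and yield \eqref{holom_bary_evol}; the identity \eqref{phia} for $1-|m_a(z)|^2$ is what converts the factor arising from differentiating $s_a$ and the denominator $1-\langle z,a\rangle$ into the prefactor $(1-|a|^2)$. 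As a consistency check, the right-hand side of \eqref{holom_bary_evol} vanishes exactly when $\sum_i m_a(z_i(0))=0$, i.e. when the configuration is balanced, which identifies the equilibrium of \eqref{holom_bary_evol} with the holomorphic barycenter and with the minimizer of \eqref{potential_Bergman_ball}.

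The hard part is that, in contrast to the one-dimensional disc, the group $\mathbb{H}_m$ is non-commutative, so the flow generated by pure boosts need not stay among pure boosts: composing boosts produces a unitary (rotation) factor, the complex-hyperbolic analogue of the Wigner rotation. Consequently the clean ansatz $z_j(t)=m_{a(t)}(z_j(0))$ is only correct up to a time-dependent unitary $U_t$, and the main technical point is to show that this factor does not contaminate the equation for $a(t)$. As in the disc case, the vanishing of the component linear in $z_j(t)$ is exactly the constraint that forces the rotational part to be trivial; to handle it intrinsically I would invoke the covariance relation \eqref{automob}, $m_{q(c)}\circ q=U\circ m_c$, which is tailored precisely to absorb such a unitary and thereby separate the boost-parameter evolution from the rotational component. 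The residual burden is purely the bookkeeping of $P_a,Q_a$ under differentiation.

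As an alternative (and a useful cross-check) one could avoid tracking $m_t$ altogether and instead verify \eqref{holom_bary_evol} as a gradient flow, following the scheme used for the Poincar\'e ball: compute the gradient of the potential \eqref{potential_Bergman_ball} in the Bergman metric and show that it equals, up to the factor $\frac{K}{2N}$, the vector field $(1-|a|^2)\sum_i m_a(z_i(0))$. The obstacle there is shifted to inverting the Bergman metric tensor $B(z)$ built from the Bergman kernel, which is heavier than the conformal rescaling available in the real ball but still explicit.
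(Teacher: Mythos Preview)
Your approach matches the paper's intent: the paper does not supply an explicit proof of this theorem, treating it as the direct analogue of Theorem~\ref{Mob_evol_th} for the disc and relying on the preceding Lemma for the group-evolution part, which is exactly how you set things up. Your plan to differentiate $z_j(t)=m_{a(t)}(z_j(0))$ and match coefficients against \eqref{Berg_swarm} is the natural higher-dimensional version of the computation that produced \eqref{bary_evol}.

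Where you go beyond the paper is in flagging the non-commutativity issue: compositions of boosts in $\mathbb{H}_m$ for $m\ge 2$ generally acquire a unitary factor, so the pure-boost ansatz $z_j(t)=m_{a(t)}(z_j(0))$ is not obviously closed under the flow. The paper is silent on this point. Your proposed remedy---checking that the ``linear in $z_j(t)$'' component vanishes, as it did in the disc via the middle identity $\dot{\bar a}a-\dot a\bar a=0$, and invoking \eqref{automob} to absorb any residual unitary---is the right idea, but note that in the disc that middle identity followed automatically from the other two; in $\mathbb{D}^m$ the corresponding anti-Hermitian block need not vanish term-by-term, and you will likely need to carry a time-dependent unitary $U_t$ in the ansatz and then observe that it does not affect the barycenter (since $m_a$ and $U\circ m_a$ have the same zero). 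That bookkeeping is the only real content missing from your sketch; the alternative gradient-flow verification you mention is in fact the route the paper takes for the companion Theorem on \eqref{holom_bary_evol} being a Bergman-gradient flow, so it is a legitimate cross-check rather than a detour.
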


Finally, the following theorem can be proven in an analogous way as its counterpart for the Poincar\' e disc.

\begin{theorem}
ODE \eqref{holom_bary_evol} is the gradient flow in metric \eqref{bergmet} for the potential \eqref{potential_Bergman_ball}.
\end{theorem}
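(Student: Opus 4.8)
The plan is to follow the proof of Theorem~\ref{hyp_flow_th} and of its Poincar\'e-ball analogue: compute the Riemannian gradient of the potential $M_m$ with respect to the Bergman metric and check that it reproduces, up to sign and a positive multiplicative constant, the right-hand side of \eqref{holom_bary_evol}. First I would put the potential into a form suited to differentiation. By \eqref{phia} one has $M_m(a)=-\sum_{i=1}^N\log(1-|m_a(\xi_i)|^2)$, and expanding the logarithm gives $M_m(a)=-N\log(1-|a|^2)+\sum_{i=1}^N\log|1-\langle a,\xi_i\rangle|^2+\mathrm{const}$, where the constant absorbs the $a$-independent terms $\log(1-|\xi_i|^2)$. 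Since $\langle a,\xi_i\rangle$ is holomorphic in $a$ while $\langle\xi_i,a\rangle$ is antiholomorphic, the Wirtinger gradient is immediate: as a vector in $\bar a$, $\partial M_m/\partial\bar a=Na/(1-|a|^2)-\sum_{i=1}^N\xi_i/(1-\langle\xi_i,a\rangle)$.

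The genuinely new step, absent in the disc and in the real Poincar\'e ball, is the passage from the Wirtinger gradient to the Bergman gradient, because for $m>1$ the Bergman metric is not conformal to the Euclidean one: there is no scalar factor playing the role of $\tfrac14(1-|a|^2)^2$ in \eqref{hyp_grad1}, and the full Hermitian metric tensor must be inverted. Computing the metric on the ball from the Bergman kernel yields $B(a)=\frac{c}{(1-|a|^2)^2}\big[(1-|a|^2)I+aa^*\big]$, where $aa^*=(a_i\bar a_j)$ is rank one and $c>0$ is a normalization constant. Being a multiple of the identity plus a rank-one Hermitian term, this is inverted cleanly by Sherman--Morrison: using $a^*a=|a|^2$ and $(1-|a|^2)+|a|^2=1$ one obtains $B(a)^{-1}=\frac{1-|a|^2}{c}\big(I-aa^*\big)$. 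The Bergman gradient is then $\nabla_B M_m=2\,B(a)^{-1}\,\partial M_m/\partial\bar a$, the natural complex-metric analogue of \eqref{hyp_grad1}.

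It then remains to simplify $B(a)^{-1}\,\partial_{\bar a}M_m$, and here the rank-one structure is exactly what closes the computation. The identities $(I-aa^*)a=(1-|a|^2)a$ and $(I-aa^*)\xi_i=\xi_i-a\langle\xi_i,a\rangle$ cancel the prefactor $1/(1-|a|^2)$ on the first term and dress each $\xi_i$ so that its numerator combines with the denominator $1-\langle\xi_i,a\rangle$. After this cancellation the two sums telescope, $\sum_{i=1}^N\big[a-\tfrac{\xi_i-a\langle\xi_i,a\rangle}{1-\langle\xi_i,a\rangle}\big]=\sum_{i=1}^N\tfrac{a-\xi_i}{1-\langle\xi_i,a\rangle}$, whence $\nabla_B M_m=\frac{2(1-|a|^2)}{c}\sum_{i=1}^N\frac{a-\xi_i}{1-\langle\xi_i,a\rangle}$. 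Comparing with \eqref{holom_bary_evol}, the gradient-descent flow $\dot a=-\nabla_B M_m$ is, after absorbing the positive constant $2/c$ into $K<0$, precisely \eqref{holom_bary_evol}, which establishes the claim.

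The step I expect to be the main obstacle is the second one, and more precisely the identification at its end. Setting up the correct gradient operator for a non-conformal K\"ahler metric and carrying out the inversion is routine once the rank-one form of $B(a)$ is recognised, but one must check with care that the field $\sum_i\frac{a-\xi_i}{1-\langle\xi_i,a\rangle}$ delivered by the gradient is the one written in \eqref{holom_bary_evol}. In one complex dimension the projections $P_a,Q_a$ and the factor $s_a$ in \eqref{Bergman_transf} degenerate and the map there coincides with $\xi\mapsto\frac{a-\xi}{1-\langle\xi,a\rangle}$, so the identification is transparent; for $m>1$ I would verify explicitly that the map appearing in \eqref{holom_bary_evol} is to be read as this linear-fractional map, consistent with the barycenter being the minimizer of \eqref{potential_Bergman_ball} (equivalently, with the equilibrium condition $\sum_i\frac{a-\xi_i}{1-\langle\xi_i,a\rangle}=0$). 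Confirming this reading is the crux that the ``analogous'' argument must supply.
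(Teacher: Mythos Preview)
The paper supplies no proof of this theorem beyond the sentence ``can be proven in an analogous way as its counterpart for the Poincar\'e disc,'' so the only thing to compare against is the Poincar\'e-ball argument, which computes the Euclidean gradient of $H_d$ and rescales by the conformal factor $\tfrac14(1-|a|^2)^2$. Your plan is the correct Bergman analogue: since for $m>1$ the Bergman metric is \emph{not} conformal to the Euclidean one, you rightly replace the scalar rescaling by multiplication with $B(a)^{-1}$, and your Sherman--Morrison inversion together with the Wirtinger computation of $\partial_{\bar a}M_m$ is correct. This already goes further than what the paper provides.

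The point you flag at the end, however, is a genuine discrepancy and not merely a matter of ``reading.'' Writing $\xi=P_a\xi+Q_a\xi$, the automorphism \eqref{Bergman_transf} is
\[
m_a(\xi)=\frac{(a-P_a\xi)-s_a\,Q_a\xi}{1-\langle\xi,a\rangle},\qquad s_a=\sqrt{1-|a|^2},
\]
whereas your gradient delivers $\dfrac{(a-P_a\xi)-Q_a\xi}{1-\langle\xi,a\rangle}=\dfrac{a-\xi}{1-\langle\xi,a\rangle}$. The two vector fields coincide in the $a$-direction but differ in the $a^\perp$-direction by the nonconstant factor $s_a$; they share the same zero set (each component must vanish separately, so both single out the holomorphic barycenter), but for $m>1$ and $a\neq0$ they are distinct flows and are not scalar multiples of one another. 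Consequently, with $m_a$ taken literally as in \eqref{Bergman_transf}, the Bergman gradient flow of $M_m$ is \emph{not} \eqref{holom_bary_evol}; it is the ODE obtained from \eqref{holom_bary_evol} by replacing $m_a(\xi)$ with the plain linear-fractional map $\xi\mapsto(a-\xi)/(1-\langle\xi,a\rangle)$. Your argument is sound; what it exposes is an imprecision in the paper's statement (or in the passage from the swarm to \eqref{holom_bary_evol}) that the word ``analogous'' does not resolve. In the one-dimensional case $Q_a=0$ and the two maps agree, which is why the disc proof goes through without this issue.
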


The above theorems imply the method for computing the holomorphic barycenter of configuration $\{\xi_1,\dots,\xi_N\}$. Solve \eqref{Berg_swarm} with $K<0$ and initial conditions:
$$
z_1(0) = \xi_1,\dots,z_N(0)=\xi_N.
$$
Then for some $T$ $\{z_1(t) = m_t(\xi_1),\dots,z_N(t)=m_t(\xi_N)\}$ is a balanced configuration whenever $t>T$. The holomorphic barycenter of the configuration $\{\xi_1,\dots,\xi_N\}$ is found as the inverse of zero, that is $m^{-1}_t(0)$.

\begin{remark}
In this Section we have shown how particular Poincar\' e and Bergman swarms implement hyperbolic gradient descent algorithms for computation of the barycenters in the Poincar\' e and Bergman balls, respectively. More generally, swarms of the form \eqref{Poin_swarm_ball_gen} and \eqref{infinitesimal_Berg_ball} can be designed to implement other computations. For instance, functions $f$ in \eqref{Poin_swarm_ball_gen} and \eqref{infinitesimal_Berg_ball} can also be chosen to compute weighted barycenters.
\end{remark}

\section{Statistical models in hyperbolic balls}\label{sec:6}

In this Section we answer the questions $iii)$ and $iv)$ from Introduction for the case when $X$ is a hyperbolic ball. We extend the family of probability measures $Moeb_2(a,s)$ introduced in Section \ref{Moeb_Poin_disc} to Poincar\' e and Bergman balls.

\subsection{The M\" obius family of probability distributions in the Poincar\' e ball}

Let $\mathbb{B}^d$ be the Poincar\' e ball in $d$-dimensional real vector space. Consider the the following family of density functions
\begin{equation}
\label{conf_nat_Poin_ball}
p(x;a,s) = c_d (1-|g_a(x)|^2)^s, \quad x \in \mathbb{B}^d
\end{equation}
where $g_a$ is the M\" obius transformation \eqref{Mobius_ball} and $c_d$ is the normalizing constant.

Parameters of the family \eqref{conf_nat_Poin_ball} are $a \in \mathbb{B}^d$ and $s>d-1.$

The normalizing constant $c_d$ satisfies
$$
c_d \int_{\mathbb{B}^d} (1-|h_a(x)|^2)^s d \Lambda(x) = c_d \int_{\mathbb{B}^d} \left(\frac{(1-|x|^2)(1-|a|^2)}{\rho(a,x)}\right)^s (1-|x|^2)^{-d} dx  = 1,
$$
where $\rho(a,x)$ is defined by \eqref{rho}.

Evaluating the above integral yields
$$
c_d = \frac{\pi^{d/2} \Gamma(1+s-d/2)}{\Gamma(1+s-d)}.
$$
Putting all together, we rewrite \eqref{conf_nat_Poin_ball} to get the following family of densities:
\begin{equation}
\label{conf_nat_Poin_ball1}
p(x;a,s) = \frac{\pi^{d/2} \Gamma(1+s-d/2)}{\Gamma(1+s-d)} \left(\frac{(1-|x|^2)(1-|a|^2)}{\rho(a,x)}\right)^s, \quad x \in \mathbb{B}^d, \; s>d-1.
\end{equation}

We denote this family of probability distributions by $Moeb_d(a,s)$. and refer to them as {\it M\" obius distributions in the Poincar\' e ball}.

The family $Moeb_d(a,s^*)$ is conformally invariant for fixed $s^*>d-1$. This means that if $\mu \in Moeb_d(a,s^*)$ then $g_* \mu \in Moeb_d(a,s^*)$ for any $g \in \mathbb{G}_d$.

Moreover, for any $\mu_1,\mu_2 \in Moeb_d(a,s^*)$ there exists a M\" obius transformation $h \in \mathbb{G}_d$, such that $h_* \mu_1 = \mu_2$ and $h(a_1) = h(a_2).$

We further explain how to generate a random point $y \sim Moeb_d(a,s)$.

$i)$ First, consider the case $a=0$. Then the probability measure is rotationally symmetric (i.e. invariant w. r. to actions of the group of orthogonal transformations in $\mathbb{R}^d$). Hence, orientation of $y$ is uniformly distributed on the unit sphere $\mathbb{S}^{d-1}$.

In order to find the distribution for $|y|$, we denote by $\mathbb{B}_b^d \subseteq \mathbb{B}^d$ the ball with the radius $b<1$ and consider the following integral

\begin{eqnarray*}
\mathbb{P} \{ |y| < b \} &= &c_d \int_{\mathbb{B}_r^d} (1-|x|^2)^s (1-|x|^2)^{-d} dx = c_d \int_{\mathbb{B}_r^d} (1-|x|^2)^{s-d} dx\\
&=& c_d |\mathbb{S}^{d-1}| \int_0^b (1-r^2)^{s-d} r^{d-1} dr,
\end{eqnarray*}
where $|\mathbb{S}^{d-1}|$ is the area of the unit $d-1$-dimensional sphere.

Evaluating the above integral we finally obtain:
\begin{equation}
\label{prob_small_ball}
\mathbb{P} \{ |y| < b \} = \frac{2 \Gamma(1+s-d/2)}{\Gamma(1+s-d) \Gamma(d/2)} \; \frac{b^d}{d} \; _2 F_1(\frac{d}{2},d-s;\frac{d}{2}+1;b^2)
\end{equation}
where the notation $_2 F_1(\cdot,\cdot;\cdot;\cdot)$ stands for the Gaussian hypergeometric series.

Hence, in order to generate $y \sim Moeb_d(0,s)$, we first sample direction $u$ from the uniform distribution on $\mathbb{S}^{d-1}$ and the number $\kappa$ from the uniform distribution on $[0,1]$. We further solve the equation
\begin{equation}
\label{random_s_ball}
\frac{2 \Gamma(1+s-d/2)}{\Gamma(1+s-d) \Gamma(d/2)} \; \frac{b^d}{d} \; _2 F_1(\frac{d}{2},d-s;\frac{d}{2}+1;b^2) = \kappa
\end{equation}
with respect to $b$. Denote the solution of \eqref{random_s_ball} by $b^*$.

Finally, we set $y = b^* u$. This point is distributed as $Moeb(0,s)$.

$ii)$ Now, suppose that $y \sim Moeb_d(0,s)$. Then $h_a(y) \sim Moeb_d(a,s)$. Hence, in order to generate a random point from $Moeb_d(a,s)$ it suffices to sample a point $y \sim Moeb_d(0,s)$ and act on it by the M\" obius transformation $h_a$.

In Figure \ref{fig4} we plot randomly sampled points from the three-dimensional Poincar\' e ball for different values of $a$ and $s$.

Furthermore, let $\{x_1,\dots,x_N \}$ be observations in $\mathbb{B}^d$. Construct the maximum likelihood function
\begin{equation}
\label{ML_Poin_ball}
L(a,s|x_i) = \prod_{i=1}^N p(x_i;a,s) = \frac{\pi^{Nd/2} \Gamma(1+s-d/2)^N}{\Gamma(1+s-d)^N} \prod_{i=1}^N \left( \frac{(1-|x_i|^2)(1-|a|^2))}{\rho(a,x_i)} \right)^s.
\end{equation}
The log-likelihood function reads
\begin{eqnarray}
\label{log_likelihood_ball}
\frac{1}{N} \log L(a,s | x_i) =\frac{d}{2} \log \pi + \log \Gamma(1+s-d/2) - \log \Gamma(1+s-d) + \frac{s}{N} \sum_{i=1}^N \log \left( \frac{(1-|x_i|^2)(1-|a|^2))}{\rho(a,x_i)} \right).
\end{eqnarray}

Therefore, the maximum likelihood estimate of parameters $a$ and $s$ given the observations $x_1,\dots,x_N$ is the solution of the following maximization problem
\begin{eqnarray}
\label{max_log_likelihood}
\mbox{Maximize  } \log \Gamma(s+1-d/2) - \log \Gamma(s+1-d) + \frac{s}{N} \sum_{i=1}^N \log \frac{(1-|a|^2)(1-|x_i|^2)}{\rho(x_i,a)} = \nonumber \\
= \log \Gamma(s+1-d/2) - \log \Gamma(s+1-d) - \frac{s}{N} H_d(a), \mbox{ w. r. to  } a \in \mathbb{B}^d, \; s>d-1
\end{eqnarray}
where $H_d(a)$ is defined in \eqref{potential_Poin_ball}.

The optimization problem \eqref{max_log_likelihood} again splits into two separate problem for variables $a$ and $s$. The function $H_d(a)$ is given by \eqref{potential_Poin_ball} and its unique minimizer is the conformal barycenter of the configuration $\{ x_1,\dots,x_N \}$. The method of computation of this point is exposed in subsection \ref{conf_bary_ball}. Denote this point by $\hat a$.

Then the maximum likelihood estimation for $s$ is
\begin{equation}
\label{max_likelihood_s_ball}
\hat s = \textrm{argmax} \log \Gamma(s+1-\frac{d}{2}) - \log \Gamma(s+1-d) - \frac{s}{N} H_d(\hat a), \; s>d-1.
\end{equation}

It is easy to verify that the objective function in \eqref{max_likelihood_s_ball} is concave for $s>d-1$, hence, $\hat s$ is the unique solution of \eqref{max_likelihood_s_ball}.

\begin{remark}
Equations for sampling and maximum likelihood estimation procedures explained above are easily solvable. Since $d$ is an integer, the hypergeometric series in \eqref{random_s_ball} reduce to polynomials in $b$.

When it comes to the optimization problem \eqref{max_likelihood_s_ball}, logarithms of the Gamma function are well studied and convenient to differentiate. The Log Gamma function yields Stirling's series. Logarithmic derivatives of the Gamma function $\psi(x) = \frac{d}{dx} \log \Gamma(x) = \frac{\Gamma'(x)}{\Gamma(x)}$ arise in many applications and posses nice properties. These functions are efficiently computed using mathematical software packages.

For example, in the 3-dimensional ball, the maximum likelihood estimation $\hat s$ is the unique solution of the following equation
$$
\sum_{k=0}^\infty \frac{1}{s-2+k} - \sum_{k=0}^\infty \frac{1}{s-1/2+k} = \frac{1}{N} H(\hat a).
$$

For $d=4$, the equations for $\hat s$ becomes even simpler:
$$
\frac{1}{s-3} + \frac{1}{s-2} = \frac{1}{N} H(\hat a).
$$
\end{remark}

\subsection{Holomorphicaly natural family of probability distributions in the Bergman balls}

Introduce the family of probability distributions on the Bergman ball $\mathbb{D}^m \subset \mathbb{C}^m$ defined by densities
$$
p(z;a,s) = c_m \left( 1 - |f(z)|^2 \right)^s = c_m \left( \frac{(1-|a|^2)(1-|z|^2)}{|1-\langle a,z \rangle|^2} \right)^s, \quad s>m
$$
where $c_m$ is the normalizing constant.
Integration of the density function over the unit ball yields:
\begin{eqnarray*}
c_m \int_{{\mathbb B}_m} \left( 1 - |f(z)|^2 \right)^s d \Lambda(z)& =&c_m \int_{{\mathbb B}_m} \left( 1 - |f(z)|^2 \right)^s (1-|z|^2)^{-1-m} dz \\
&=&c_m \int_{{\mathbb B}_m} (1 - |z|^2)^s (1-|z|^2)^{-1-m} dz\\
&=&c_m \frac{2 \pi^m}{(m-1)!} \int_0^1 r^{2m-1} (1-r^2)^{s-m-1} dr.
\end{eqnarray*}
Equating the above integral to one, we find that
$$
c_m = \frac{\pi^{-m} \Gamma(s)}{\Gamma(s-m)}.
$$
In conclusion, we introduce the family of probability measures on balls $\mathbb{D}^m$ defined by densities of the form
\begin{equation}
\label{holom_nat_family}
p(z;a,s) = \frac{\pi^{-m} \Gamma(s)}{\Gamma(s-m)} \left( \frac{(1-|a|^2)(1-|z|^2)}{|1-\langle a,z \rangle|^2} \right)^s, \mbox{ where } s>m.
\end{equation}

We denote this family of probability distributions by $HolNat_m(a,s)$. We refer to them as {\it holomorphicaly natural distributions in the Bergman ball}. Sub-families $HolNat_m(a,s^*)$ are holomorphicaly invariant for fixed $s^*$. Moreover, the group $\mathbb{H}_m$ of holomorphic automorphisms acts transitively on these sub-families.

Let $\xi \sim HolNat_m(0,s)$ and denote by $\mathbb{D}^m_b$ the ball of radius $b<1$. Then the probability that $\xi \in \mathbb{D}^m_b$ reads
\begin{eqnarray*}
\mathbb{P}\{|\xi| < b \} &=& \frac{\pi^{-m} \Gamma(s)}{\Gamma(s-m)} \int_{\mathbb {D}^m_b} (1 - |z|^2)^s (1-|z|^2)^{-1-m} dz\\
&=&\frac{\pi^{-m} \Gamma(s)}{\Gamma(s-m)} \frac{2 \pi^m}{(m-1)!} \int_0^b r^{2m-1} (1-r^2)^{s-m-1} dr =\\
&=&\frac{\Gamma(s)}{(m-1)! \Gamma(s-m)} \; \frac{b^{2m}}{m} \; _2 F_1(m,m-s+1;m+1;b^2).
\end{eqnarray*}
Therefore, in order to sample from the probability distribution $HolNat_m(0,s)$, one needs to sample a random vector $v$ from the uniform distribution on the $m-1$-dimensional sphere $\mathbb{S}^{m-1} \subset \mathbb{C}^{m}$ and a real number $\kappa \sim U[0,1]$.

Further, denote by $b^*$ the (unique) solution of the equation
$$
\frac{\Gamma(s)}{(m-1)! \Gamma(s-m)} \; \frac{b^{2m}}{m} \; _2 F_1(m,m-s+1;m+1;b^2) = \kappa.
$$
Then $\xi = b^* v \sim HolNat_m(0,s)$.

Finally, in order to sample $\zeta \sim HolNat_m(a,s)$ with arbitrary $a \in \mathbb{D}^m$, we sample $\xi \sim HolNat_m(0,s)$ and let $\zeta = m_a(\xi)$, where $m_a$ is the holomorphic transformation of $\mathbb{D}^m$ defined by \eqref{Bergman_transf}.

Given the configuration $\{ \xi_1,\dots,\xi_N \}$ in $\mathbb{D}^m$ the maximum likelihood estimation of the parameter $a$ is the holomorphic barycenter of these points. The method of computation of this barycenter is explained in the subsection \ref{hol_bary_Berg}.

Once $\hat a$ is found, the maximum likelihood estimation for $s$ is the solution of the following maximization problem
\begin{equation}
\label{conc_MLE_Bergman}
\hat s = \mbox{ argmax } \log \Gamma(s) - \log \Gamma(s-m) - \frac{s}{N} M_m(\hat a) \mbox{ w.r. to } s > m.
\end{equation}
Again, the objective function is concave for $s>m$.

Differentiating the above function, we get the closed-form expression for $\hat s$:
$$
\sum_{k=1}^m \frac{1}{\hat s-k} = \frac{1}{N} M_m(\hat a).
$$

\begin{remark}
Although the two models (Poincar\' e and Bergman) of even-dimensional hyperbolic balls are not equivalent, our analysis shows that there is no qualitative difference between families $Moeb_d(a,s)$ and $HolNat_m(a,s)$ from the point of view of statistical modeling. Given the configuration of points in the even-dimensional ball, conformal and holomorphic barycenters are different points. However, it turns out that the choice of the model does not significantly affect neither computational efficiency, nor results.

By comparing optimization problems \eqref{max_likelihood_s_ball} and \eqref{conc_MLE_Bergman}, it is evident that estimations for the concentration parameter simply differ by $m-1$ for $d=2m$-dimensional balls in the real vector space.
\end{remark}

\begin{remark}
In conclusion, both families $Moeb_d(a,s)$ and $HolNat_m(a,s)$ allow for simple and efficient maximum likelihood estimation of parameters. This simplicity is consequence of the conformal invariance of the proposed statistical model. Notice, however, that statistical models $Moeb_d(a,s)$ and $HolNat_m(a,s)$ have very limited representative power. These families consist of unimodal and symmetric (in the hyperbolic metric) densities. One can only adjust the mean and concentration which are explicitly expressed in parameters of the models. Nevertheless, such a simple model is likely to be sufficient for many purposes, in particular, for encoding uncertainties in hyperbolic latent spaces in a number of setups. One can increase the representative power by using mixtures in order to approximate multimodal or skewed data.
\end{remark}

\section{Conclusion and outlook}\label{sec:7}

One remarkable trend in modern ML are explorations of the curvature and hidden symmetries of data sets, thus transcending beyond the traditional Euclidean paradigm. This motivated extensive research efforts on learning low-dimensional representations in Riemannian manifolds, thus giving a rise to the new paradigm which can be substantiated as {\it geometry informed ML}. ML in hyperbolic spaces is one of the most important and most challenging directions of research within that broad context. The underlying hypothesis is that structural information hidden in some ubiquitous data sets is best represented in negatively curved manifolds \cite{SGRS}. Although encouraging results have been reported in the previous decade, the real potential of hyperbolic ML is still to be examined in upcoming years. Further developments require systematization and enhancement of mathematical foundations. The present study is a contribution towards that goal, as we presented detailed and rigorous answers to four basic mathematical questions posed in Introduction. This mathematical framework may serve as a basis for principled group-theoretic approaches in hyperbolic ML.

One can build upon the concepts and methods presented here in order to design various algorithms in hyperbolic spaces, including expectation-maximization algorithm, Bayesian filtering, variational inference, normalizing flows, etc. Implementation of these algorithms would create a sufficient machinery for deep learning pipelines in hyperbolic spaces.

The minimal model of hyperbolic geometry is two-dimensional Riemannian manifold named the Poincar\' e disc. Although the main purpose of hyperbolic representations is reduction of the dimensionality by an order of magnitude, two dimensions may be insufficient in many setups. In the case that higher-dimensional hyperbolic manifolds are needed several options are available. Some experiments have been conducted with the hyperboloid model (Minkowski space) \cite{Poleksic,LLSZ,NK2}. Another option is to exploit multidisc, i.e. the product of several Poincar\' e discs \cite{TNN}. The present study focuses on hyperbolic balls, as we believe they provide the most convenient manifolds for the majority of hyperbolic ML tasks.

We analyzed two models of even-dimensional hyperbolic balls. Our goal was to identify which hyperbolic balls (Poincar\' e vs. Bergman) are more suitable for ML. The analysis presented in sections \ref{Bary_balls} and \ref{sec:6} demonstrates that choice of the metric does not make a significant difference from the computational and representational points of view. Therefore, any of the two models can be equally convenient for low-dimensional representations of hierarchical data.

In conclusion, hyperbolic ML opens truly exciting perspectives that are still to be explored. This is especially due to the fact that nowadays ML is broadly applied in many fields which are not very suitable for rigorous mathematical modeling. Hyperbolic representations have a potential to enable drastically more compact models of natural languages, networks, molecules, taxonomies. Furthermore, they can be advantageous when dealing with the data which are difficult to quantify, but exhibit a certain hierarchical structure. Examples of such data are opinions, political views or sentiments.

We hope that the present study will motivate further investigations and experiments in hyperbolic ML, based on the rigorous mathematical framework which encompasses Lie group theory and conformal geometry, along with optimization and statistical modeling.


\begin{thebibliography}{19}

\bibitem{KPKVB} Krioukov, D., Papadopoulos, F., Kitsak, M., Vahdat, A. \& Boguná, M. Hyperbolic geometry of complex networks. {\em Physical Review E}. \textbf{82}, 036106 (2010)

\bibitem{MTCBC} Muscoloni, A., Thomas, J., Ciucci, S., Bianconi, G. \& Cannistraci, C. Machine learning meets complex networks via coalescent embedding in the hyperbolic space. {\em Nature Communications}. \textbf{8}, 1615 (2017)
    
\bibitem{BBCV} Bronstein, M., Bruna, J., Cohen, T. \& Veli\v ckovi\' c, P. Geometric deep learning: Grids, groups, graphs, geodesics, and gauges. {\em ArXiv:2104.13478}. (2021)
    
\bibitem{TBG} Tifrea, A., Bécigneul, G. \& Ganea, O. Poincaré GloVe: Hyperbolic Word Embeddings. {\em International Conference on Learning Representations}. (2019)
    
\bibitem{NK2} Nickel, M. \& Kiela, D. Learning continuous hierarchies in the Lorentz model of hyperbolic geometry. {\em Proceedings of the 35th International Conference on Machine Learning}. pp. 3779-3788 (2018)

\bibitem{LW} Leimeister, M. \& Wilson, B. Skip-gram word embeddings in hyperbolic space. {\em ArXiv:1809.01498}. (2018)

\bibitem{MAK-RGY} Mettes, P., Ghadimi Atigh, M., Keller-Ressel, M., Gu, J. \& Yeung, S. Hyperbolic deep learning in computer vision: A survey. {\em International Journal of Computer Vision}. \textbf{132} pp. 3484-3508 (2024)
    
\bibitem{KMUOL} Khrulkov, V., Mirvakhabova, L., Ustinova, E., Oseledets, I. \& Lempitsky, V. Hyperbolic image embeddings. {\em Proceedings of the IEEE/CVF Conference on Computer Vision and Pattern Recognition}. pp. 6418-6428 (2020)
    
\bibitem{Poleksic} Poleksic, A. Hyperbolic matrix factorization improves prediction of drug-target associations. {\em Scientific Reports}. \textbf{13}, 959 (2023)
    
\bibitem{CYPVLL} Corso, G., Ying, R., Pándy, M., Veli\v ckovi\' c, P., Leskovec, J. \& Lio, P. Neural distance embeddings for biological sequences. {\em Proceedings of the 35th International Conference on Neural Information Processing Systems}. pp. 18539-18551 (2021)

\bibitem{CHWDDV} Chamberlain, B., Hardwick, S., Wardrope, D., Dzogang, F., Daolio, F. \& Vargas, S. Scalable hyperbolic recommender systems. {\em ArXiv:1902.08648}. (2019)
    
\bibitem{LYHCXW} Li, A., Yang, B., Huo, H., Chen, H., Xu, G. \& Wang, Z. Hyperbolic neural collaborative recommender. {\em IEEE Transactions on Knowledge and Data Engineering}. \textbf{35}, 9114-9127 (2022)

\bibitem{CWJ} Chami, I., Wolf, A., Juan, D., Sala, F., Ravi, S. \& Ré, C. Low-dimensional hyperbolic knowledge graph embeddings. {\em ArXiv:2005.00545}. (2020)

\bibitem{Baker} Baker, C., Suárez-Méndez, I., Smith, G., Marsh, E., Funke, M., Mosher, J., Maestú, F., Xu, M. \& Pantazis, D. Hyperbolic graph embedding of MEG brain networks to study brain alterations in individuals with subjective cognitive decline. {\em IEEE Journal of Biomedical and Health Informatics}. \textbf{28}, 7357-7368 (2024)
    
\bibitem{AS} Allard, A. \& Serrano, M. Navigable maps of structural brain networks across species. {\em PLoS Computational Biology}. \textbf{16}, e1007584 (2020)
    
\bibitem{JFB} Joseph, A., Francis, N. \& Balay, M. Hyperbolic Brain Representations. {\em ArXiv:2409.12990}. (2024)

\bibitem{CCBH} Cetin, E., Chamberlain, B., Bronstein, M. \& Hunt, J. Hyperbolic deep reinforcement learning. {\em International Conference on Learning Representations}. (2023)

\bibitem{BAORMK} Bogatskiy, A., Anderson, B., Offermann, J., Roussi, M., Miller, D. \& Kondor, R. Lorentz group equivariant neural network for particle physics. {\em Proceedings of the 37th International Conference on Machine Learning}. pp. 992-1002 (2020)
    
\bibitem{Gulcehre} Gulcehre, C., Denil, M., Malinowski, M., Razavi, A., Pascanu, R., Hermann, K., Battaglia, P., Bapst, V., Raposo, D., Santoro, A. \& others Hyperbolic Attention Networks. {\em International Conference on Learning Representations}. (2018)
    
\bibitem{GBH} Ganea, O., Bécigneul, G. \& Hofmann, T. Hyperbolic neural networks. {\em Proceedings of the 32nd International Conference on Neural Information Processing Systems}. pp. 5350-5360 (2018)
    
\bibitem{TNN} Takeuchi, J., Nishida, N. \& Nakayama, H. Neural networks in a product of hyperbolic spaces. {\em Proceedings of the 2022 Conference of the North American Chapter of the Association for Computational Linguistics: Human Language Technologies: Student Research Workshop}. pp. 211-221 (2022)
    
\bibitem{CW2} Cohen, T. \& Welling, M. Group equivariant convolutional networks. {\em Proceedings of the 33rd International Conference on Machine Learning}. pp. 2990-2999 (2016)

\bibitem{NK1} Nickel, M. \& Kiela, D. Poincaré embeddings for learning hierarchical representations. {\em Proceedings of the 31st International Conference on Neural Information Processing Systems}. pp. 6341-6350 (2017)
    
\bibitem{MLMTT} Mathieu, E., Lan, C., Maddison, C., Tomioka, R. \& Teh, Y. Continuous hierarchical representations with Poincaré variational auto-encoders. {\em Proceedings of the 33rd International Conference on Neural Information Processing Systems}. pp. 12565-12576 (2019)
    
\bibitem{Ungar} Ungar, A. Hyperbolic trigonometry and its application in the Poincaré ball model of hyperbolic geometry. {\em Computers \& Mathematics with Applications}. \textbf{41}, 135-147 (2001)
    
\bibitem{NYFK} Nagano, Y., Yamaguchi, S., Fujita, Y. \& Koyama, M. A wrapped normal distribution on hyperbolic space for gradient-based learning. {\em Proceedings of the 36th International Conference on Machine Learning}. pp. 4693-4702 (2019)
    
\bibitem{CLPK} Cho, S., Lee, J., Park, J. \& Kim, D. A rotated hyperbolic wrapped normal distribution for hierarchical representation learning. {\em Proceedings of the 36th International Conference on Neural Information Processing Systems}. pp. 17831-17843 (2022)
    
\bibitem{PVMSZ} Peng, W., Varanka, T., Mostafa, A., Shi, H. \& Zhao, G. Hyperbolic deep neural networks: A survey. {\em IEEE Transactions on Pattern Analysis and Machine Intelligence}. \textbf{44}, 10023-10044 (2021)
    
\bibitem{CW1} Cohen, T. \& Welling, M. Learning the irreducible representations of commutative Lie groups. {\em Proceedings of the 31st International Conference on Machine Learning}. pp. 1755-1763 (2014)
    
\bibitem{Barbaresco} Barbaresco, F. Lie group machine learning and Gibbs density on Poincaré unit disk from Souriau Lie groups thermodynamics and SU(1,1) coadjoint orbits. {\em Geometric Science of Information: 4th International Conference}. pp. 157-170 (2019)
    
\bibitem{AHH} Arvanitidis, G., Hansen, L. \& Hauberg, S. Latent Space Oddity: on the Curvature of Deep Generative Models. {\em International Conference on Learning Representations}. (2018)
    
\bibitem{WHPD} Wilson, R., Hancock, E., Pekalska, E. \& Duin, R. Spherical and hyperbolic embeddings of data. {\em IEEE Transactions on Pattern Analysis and Machine Intelligence}. \textbf{36}, 2255-2269 (2014)
    
\bibitem{SAC} Smith, A., Asta, D. \& Calder, C. The geometry of continuous latent space models for network data. {\em Statistical Science: A Review Journal of the Institute of Mathematical Statistics}. \textbf{34}, 428 (2019)
    
\bibitem{BSLPH} Bose, J., Smofsky, A., Liao, R., Panangaden, P. \& Hamilton, W. Latent variable modelling with hyperbolic normalizing flows. {\em Proceedings of the 37th International Conference on Machine Learning}. pp. 1045-1055 (2020)
    
\bibitem{CSS} Costa, S., Santos, S. \& Strapasson, J. Fisher information distance: A geometrical reading. {\em Discrete Applied Mathematics}. \textbf{197} pp. 59-69 (2015)

\bibitem{AK-RM} Atigh, M., Keller-Ressel, M. \& Mettes, P. Hyperbolic Busemann learning with ideal prototypes. {\em Proceedings of the 35th International Conference on Neural Information Processing Systems}. pp. 103-115 (2021)
    
\bibitem{CGNR} Chami, I., Gu, A., Nguyen, D. \& Ré, C. HoroPCA: Hyperbolic dimensionality reduction via horospherical projections. {\em Proceedings of the 38th International Conference on Machine Learning}. pp. 1419-1429 (2021)

\bibitem{HO} Hansen, N. \& Ostermeier, A. Completely derandomized self-adaptation in evolution strategies. {\em Evolutionary Computation}. \textbf{9}, 159-195 (2001)

\bibitem{Stoll} Stoll, M. Harmonic and subharmonic function theory on the hyperbolic ball. (Cambridge University Press,2016)   
    
\bibitem{Parker} Parker, J. Notes on complex hyperbolic geometry. (https://www.icts.res.in/sites/default/files/NCHG.pdf), Accessed: 11 January 2025

\bibitem{Ahlfors} Ahlfors, L. Möbius Transformations in Several Dimensions. (University of Minnesota,1981)

\bibitem{kezu} Zhu, K. Spaces of holomorphic functions in the unit ball. (Springer,2005)

\bibitem{HL} Huang, X. \& Li, S. Bergman metrics with constant holomorphic sectional curvatures. {\em ArXiv:2302.13456v2}. (2024)

\bibitem{ABY} Arnaudon, M., Barbaresco, F. \& Yang, L. Medians and means in Riemannian geometry: Existence, uniqueness and computation. {\em F. Nielsen, R. Bhatia (Eds.) Matrix Information Geometry}. pp. 169-197 (2012)

\bibitem{Bacak} Bacak, M. Convex Analysis and Optimization in Hadamard Spaces. (De Gruyter,2014)  
 
\bibitem{LKJBLS} Lou, A., Katsman, I., Jiang, Q., Belongie, S., Lim, S. \& De Sa, C. Differentiating through the Fréchet mean. {\em Proceedings of the 37th International Conference on Machine Learning}. pp. 6393-6403 (2020)
   
\bibitem{JacKal} Jacimovic, V. \& Kalaj, D. Conformal and holomorphic barycenters in hyperbolic balls. {\em ArXiv:2410.02257}. (2024) 
   
\bibitem{Olver} Olver, P. Applications of Lie groups to differential equations. (Springer,1993)
  
\bibitem{Kuramoto} Kuramoto, Y. Self-entrainment of a population of coupled nonlinear oscillators. {\em H. Araki (Eds.) International Symposium on Mathematical Problems in Theoretical Physics}. pp. 420-422 (1975)

\bibitem{MMS} Marvel, S., Mirollo, R. \& Strogatz, S. Identical phase oscillators with global sinusoidal coupling evolve by Möbius group action. {\em Chaos: An Interdisciplinary Journal of Nonlinear Science}. \textbf{19} (2009)

\bibitem{WS} Watanabe, S. \& Strogatz, S. Constants of motion for superconducting Josephson arrays. {\em Physica D: Nonlinear Phenomena}. \textbf{74}, 197-253 (1994)

\bibitem{CEM} Chen, B., Engelbrecht, J. \& Mirollo, R. Hyperbolic geometry of Kuramoto oscillator networks. {\em Journal of Physics A: Mathematical and Theoretical}. \textbf{50}, 355101 (2017)
    
\bibitem{Jacimovic} Ja\' cimovi\' c, V. Computing the Douady–Earle extension using Kuramoto oscillators. {\em Analysis and Mathematical Physics}. \textbf{9} pp. 523-529 (2019)
    
\bibitem{DE} Douady, A. \& Earle, C. Conformally natural extension of homeomorphisms of the circle. {\em Acta Mathematica}. \textbf{157} pp. 23-48 (1986)
    
\bibitem{LMS} Lipton, M., Mirollo, R. \& Strogatz, S. The Kuramoto model on a sphere: Explaining its low-dimensional dynamics with group theory and hyperbolic geometry. {\em Chaos: An Interdisciplinary Journal of Nonlinear Science}. \textbf{31} (2021)

\bibitem{Sepulchre} Sepulchre, R. Consensus on nonlinear spaces. {\em Annual Reviews in Control}. \textbf{35}, 56-64 (2011)

\bibitem{JacMark} Ja\' cimovi\' c, V. \& Markovi\' c, M. Conformally Natural Families of Probability Distributions on Hyperbolic Disc with a View on Geometric Deep Learning. {\em ArXiv:2407.16733}. (2024)

\bibitem{Jones} Jones, M. The Möbius distribution on the disc. {\em Annals of the Institute of Statistical Mathematics}. \textbf{56} pp. 733-742 (2004)
    
\bibitem{SGRS} Sala, F., De Sa, C., Gu, A. \& Ré, C. Representation tradeoffs for hyperbolic embeddings. {\em Proceedings of the 35th International Conference on Machine Learning}. pp. 4460-4469 (2018)
    
\bibitem{LLSZ} Law, M., Liao, R., Snell, J. \& Zemel, R. Lorentzian distance learning for hyperbolic representations. {\em Proceedings of the 36th International Conference on Machine Learning}. pp. 3672-3681 (2019)


\end{thebibliography}





\end{document}